\theoremstyle{definition}
\theoremstyle{plain}
\newtheorem{theorem}{Theorem}  
\newtheorem{lemma}{Lemma}
\newtheorem{proposition}{Proposition}
\theoremstyle{remark}
\newcommand{\x}{\mathbf{x}}
\newcommand{\E}{\mathbb{E}}
\newcommand{\N}{\mathcal{N}}
\newcommand{\Prob}{\mathbb{P}}
\newcommand{\miqing}[1]{\textcolor{red}{ MiqingComment: \textit{#1}}}
  \ifnum\value{page}=1
\begin{document}
%%
%% The "title" command has an optional parameter,
%% allowing the author to define a "short title" to be used in page headers.

\title{Random is Faster than Systematic in Multi-Objective Local Search}

%%
%% The "author" command and its associated commands are used to define
%% the authors and their affiliations.
%% Of note is the shared affiliation of the first two authors, and the
%% "authornote" and "authornotemark" commands
%% used to denote shared contribution to the research.
\author{Zimin Liang and Miqing Li\(^*\)
\thanks{Zimin Liang and Miqing Li (\emph{corresponding author}) are with the School of Computer Science, University of Birmingham, Edgbaston, Birmingham B15 2TT, UK (emails: zxl525@student.bham.ac.uk; m.li.8@bham.ac.uk).}}

\maketitle

%%
%% By default, the full list of authors will be used in the page
%% headers. Often, this list is too long, and will overlap
%% other information printed in the page headers. This command allows
%% the author to define a more concise list
%% of authors' names for this purpose.

%%
%% The abstract is a short summary of the work to be presented in the
%% article.
\begin{abstract}
Local search is a fundamental method in operations research and combinatorial optimisation. It has been widely applied to a variety of challenging problems, including multi-objective optimisation where multiple, often conflicting, objectives need to be simultaneously considered.
In multi-objective local search algorithms, a common practice is to maintain an archive of all non-dominated solutions found so far, from which the algorithm iteratively samples a solution to explore its neighbourhood. A central issue in this process is how to explore the neighbourhood of a selected solution.
In general, there are two main approaches: 1) systematic exploration and 2) random sampling.
The former systematically explores the solution's neighbours until a stopping condition is met -- for example, when the neighbourhood is exhausted (i.e., the best improvement strategy) or once a better solution is found (i.e., first improvement).
In contrast, the latter randomly selects and evaluates only one neighbour of the solution. 
One may think systematic exploration may be more efficient, as it prevents from revisiting the same neighbours multiple times.
In this paper, however, we show that this may not be the case. We first empirically demonstrate that the random sampling method is consistently faster than the systematic exploration method across a range of multi-objective problems. We then give an intuitive explanation for this phenomenon using toy examples, showing that the superior performance of the random sampling method relies on the distribution of ``good neighbours''. Next, we show that the number of such neighbours follows a certain probability distribution during the search. Lastly, building on this distribution, we provide a theoretical insight for why random sampling is more efficient than systematic exploration, regardless of whether the best improvement or first improvement strategy is used.
\end{abstract}

%%
%% Keywords. The author(s) should pick words that accurately describe
%% the work being presented. Separate the keywords with commas.
\begin{IEEEkeywords}
Multi-objective combinatorial optimisation, multi-objective optimisation, local search.
\end{IEEEkeywords}

\section{Introduction}
Local search (LS) is a class of search heuristics that move from solution to solution in the space of candidate solutions by applying local changes. 
LS has been widely used to solve various challenging search and optimisation problems, including SAT \cite{liang2025diversat,ye2025better,zheng2025integrating}, max $c$-cut \cite{garvardt2024parameterized}, max quasi-clique \cite{chen2021nuqclq}, max $k$-plex \cite{sun2024nukplex}, graph edge partition \cite{guo2021enhancing}, Latin square completion \cite{xie2024swap}, matroid optimisation \cite{benabbou2021combining}, integer linear programming \cite{lin2024parailp}, and other applied problems \cite{shatabda2013mixed,zhang2024towards,su2021weighting,gruttemeier2021efficient}.

Multi-objective optimisation refers to an optimisation scenario where there is more than one objective to be considered simultaneously. LS is a popular tool in tackling multi-objective combinatorial optimisation problems (MOCOPs)~\cite{Blot2018}, due to its simple algorithmic structure and neighbourhood-driven exploration which fits well with the discrete search space. 

Early multi-objective LS methods were often desgined through converting an MOCOP into several single-objective problems and using scalar fitness (aggregated from multiple objectives) to rank solutions. Representative algorithms include~\cite{Ishibuchi1998,Jaszkiewicz2002,Paquete2003}.
Later, Pareto-based local search (PLS) heuristics have received increasing attention and becomes a mainstream method~\cite{Paquete2004, giel_expected_2003, Lust2010, Dubois2015, Drugan2012, jaszkiewicz2018many, shi2022improving, ren_pareto-optimal_2023, kang_new_2024}. PLS adopts Pareto dominance to compare solutions \cite{Paquete2004}. It maintains an archive of non-dominated solutions found so far and iteratively samples solutions from the archive to explore their
neighbourhoods.
Such PLS algorithms show competitive performance on MOCOPs such as the multi-objective TSP~\cite{Lust2010}, knapsack~\cite{alsheddy_guided_2009}, NK-landscape~\cite{Aguirre2005} and scheduling~\cite{Liefooghe2012} problems. More recently, several studies \cite{inja2014queued,phan2023pareto,li_empirical_2024} have shown the performance advantage of PLS over well-established multi-objective evolutionary algorithms (e.g., NSGA-II \cite{Deb2002}, SMS-EMOA \cite{Beume2007} and MOEA/D \cite{Zhang2007}).

A key issue in PLS algorithms is how to explore the neighbourhood of a solution sampled from the archive.
In general, there are two approaches: 1) systematic exploration and 2) random sampling.
The former, denoted by \textit{systematic} PLS or \textit{s}-PLS, systematically explores the solution's neighbours until a stopping condition is met -- for example when the neighbourhood is exhausted (i.e., the best improvement strategy), or once a better solution is found (i.e., the first improvement stragtegy).
The latter, denoted by \textit{randomised} local search or \textit{r}-PLS, randomly selects and evaluates only one neighbour of the solution.  
Systematic PLS is popular in the fields of operations research and metaheuristics~\cite{Dubois2015, derbel_multi-objective_2016, shi_pplsd_2020, santos_multi-objective_2022, ceschia_educational_2023, gao_bi-objective_2025}. On the other hand, a few studies consider randomised PLS \cite{chicano_efficient_2016, jaszkiewicz2018many}, particularly in the theory community of evolutionary computation \cite{laumanns2004, doerr2021theoretical, Dinot2023, wietheger_near-tight_2024}.

A natural question that arises is which of these two approaches is more efficient.
Systematic search might appear preferable, as it methodically explores a solution's neighbourhood and ensures each neighbour is visited at most once. In contrast, randomised PLS may revisit the same neighbours multiple times (as, at each step, it randomly selects a solution from the archive and then randomly picks one of its neighbours), resulting in redundant evaluations. 

In this paper, however, we show that randomised PLS is in general faster than systematic PLS. We first empirically demonstrate consistent performance advantage of randomised PLS over systematic PLS across a wide range of problems, no matter whether the best improvement or first improvement strategy of neighbourhood exploration is used. We then give an intuitive explanation for this phenomenon using toy examples. Lastly, we provide theoretical understanding through analysing the runtime of two PLS approaches in finding a good solution (i.e., one that is not dominated by the archive, thus leading to improvement of the archive quality) under the distribution of neighbouring solutions of the considered problems. Key contributions of this work can be summarised as:

\begin{itemize}
    \item Out of the two major classes of local search algorithms in multi-objective optimisation, we show one is faster than the other, which refutes a commonly-held belief that the two LS algorithm classes may have their own strengths and suit different problems.

    \item We empirically investigate the neighbourhood structure (the number of good neighbours) during the search process of the PLS algorithms, and demonstrate that they follow a certain probability distribution.
    
    \item We theoretically investigate the reason behind our empirical observations, and prove that one algorithm class requires less time to find a good solution under that probability distribution.
\end{itemize}

\makeatletter
\newlength{\@commentwidth}
\renewcommand{\algorithmiccomment}[1]{%
  \settowidth{\@commentwidth}{\scriptsize$\triangleright$\ #1}%
  \ifdim\@commentwidth>\linewidth
    {\scriptsize$\;\triangleright$\hspace{0.5em}#1}%
  \else
    {\scriptsize\hfill$\triangleright$\ #1}%
  \fi
}
\makeatother
\begin{algorithm}[tbp] 
	\footnotesize
    %\scriptsize
	\caption{Systematic Pareto Local Search (\emph{s}-PLS)}
	\label{Alg:s_PLS}
	\begin{algorithmic}[1]
        \REQUIRE $max\_eval$ (algorithm terminating condition), $\mathcal{N}$ (neighbourhood function, e.g. 1-bit neighbourhood)
		\STATE $s \gets random\_solution()$ 
		\STATE $explored(s) \gets \text{FALSE}$ \COMMENT{Mark $s$ as unexplored}
		\STATE $A \gets \{s\}$ \COMMENT{Place $s$ into the archive}
        \STATE $eval\gets1$
		\WHILE{$eval\leq max\_eval$ \textbf{ and } $\exists a\in A : explored(a)=\text{FALSE}$ }
		\STATE {$s \gets selection(\{a\in A | explored(a)=\text{FALSE}\})$ }\COMMENT{Select an unexplored solution from $A$ based on certain strategy (e.g., random selection or using an indicator/scalarisation function) }
        \REPEAT
        \STATE $s'\gets next\_neighbour(\N(s))$ \COMMENT{Get the next neighbour from the neighbourhood of $s$, usually in lexicographic or random order}
		\IF {$\nexists a\in A: a\preceq s' $}
		\STATE $A \gets A \cup \{s^\prime\} \setminus \{a' \in A \mid s'\prec a'\}$\COMMENT{Ensure that the archive only contains unique non-dominated solutions}
        \STATE $explored(s') \gets \text{FALSE}$ 
		\ENDIF
        \STATE $eval\gets eval+1$
		\UNTIL{$stop\_condition()$}\COMMENT{Stop exploring this neighbourhood if a condition is met, e.g., it is exhausted, or a satisfactory solution is found (e.g., it is not dominated by the archive)}        
		\STATE $explored(s) \gets \text{TRUE}$ \COMMENT{ Mark $s$ as explored}
		\ENDWHILE
		\RETURN $A$
	\end{algorithmic}
\end{algorithm}

It is worth noting that randomised PLS heuristics, such as SEMO \cite{laumanns2004}, have been frequently studied in the runtime analysis literature within the evolutionary computation theory community \cite{bian2018general, doerr2021theoretical, Dinot2023, zheng_how_2024, wietheger_near-tight_2024}. These works, along with others in the theory field, focus on the runtime required to find Pareto optimal solutions, offering insights into questions such as whether the algorithm struggles to find a Pareto optimum, and how easy it is to find other optimal solutions in the Pareto front once a single one has been found. Our work does not consider the time required to find optimal solutions. The results presented in this paper does not imply any guarantees of optimality for the solutions found by PLS, but rather how fast different PLS algorithms find a better solution during the search process. This has practical significance, as optimal solutions are often infeasible to obtain within a reasonable time for most real-world MOCOPs \cite{figueira2017easy}, and it is valuable to know which types of algorithms can make faster progress.

\section{Preliminaries}

Let $\boldsymbol{f}(\x)=(f_{1}(\x),\dots,f_{m}(\x))$ denote an $m$-objective (minimisation) problem over a decision space $X$, $\x\in X$, $f_{i}:X\to\mathbb{R}$. We say $\x$ weakly Pareto-dominates $\mathbf{y}$ (notation: $\x\preceq\mathbf{y}$) iff $
  \forall\,i\in\{1,\dots,m\}:\;f_{i}(\x)\le f_{i}(\mathbf{y}),
$
and $\x$ Pareto-dominates $\mathbf{y}$ (notation: $\x\prec\mathbf{y}$) if at least one inequality is strict. The Pareto set consists of all non-dominated solutions, denoted by
$
  PS \;=\;\bigl\{\x\in X \mid \nexists\,\mathbf{y}\in X:\,\mathbf{y}\prec\x\bigr\};
$
and the Pareto front consists of their corresponding point in the objective space, denoted by $PF = \{\boldsymbol{f}(\x) \mid \x\in PS\}.$ 

A central concept in local search algorithms is \emph{neighbourhood}, which defines the structure of the search landscape. The neighbourhood of a solution $\x$ is denoted by $\mathcal{N}(\x) \subseteq X \setminus {\x}$, and typically consists of solutions reachable from $\x$ by a single elementary operation (e.g., a one-bit flip in binary encoded problems).

\begin{algorithm}[tbp] 
	\footnotesize
    %\scriptsize
	\caption{Randomised Pareto Local Search (\emph{r}-PLS)}
	\label{Alg:r_PLS}
	\begin{algorithmic}[1]
       \REQUIRE $max\_eval$ (algorithm terminating condition), $\mathcal{N}$ (neighbourhood function, e.g. 1-bit neighbourhood)
		\STATE $s \gets random\_solution()$
		\STATE $A \gets \{s\}$ \COMMENT{Place $s$ into the archive}
        \STATE $eval\gets1$
		\WHILE{$eval\le max\_eval$}
		\STATE $s \gets {selection}(A)$  \COMMENT{Select a solution from $A$ based on certain strategy (e.g., random selection or using an indicator/scalarisation function)} 
		\STATE $s' \overset{U}{\sim} \N(s)$ \COMMENT{Sample a neighbour uniformly at random from the neighbourhood of $s$}
		\IF {$\nexists a\in A: a\preceq s' $}
		\STATE $A \gets A \cup \{s^\prime\} \setminus \{a' \in A \mid s'\prec a'\}$ \COMMENT{Ensure that the archive only contains unique non-dominated solutions}
		\ENDIF 
        \STATE $eval\gets eval+1$
		\ENDWHILE
		\RETURN $A$
	\end{algorithmic}
\end{algorithm}

\section{Two Classes of Pareto-based Local Search}

In this work, we focus on Pareto-based local search (PLS) algorithms \cite{Paquete2004}, which encompass the majority of multi-objective local search heuristics~\cite{Blot2018}.   
PLS is an iterative search process that performs local variation to solutions and compare them with new solutions using Pareto dominance. 
It maintains an archive to store all non-dominated solutions found so far. 
For each iteration, PLS considers a solution in the archive to generate new solution(s) by exploring its neighbourhood.
Based on how the solution's neighbourhood is explored, there are two classes of PLS algorithms: systematic PLS (\emph{s}-PLS) and randomised PLS (\emph{r}-PLS). 

\begin{figure*}[htbp]
\vspace{-5pt}
\renewcommand{\arraystretch}{0.1} % Adjust the value to control spacing
\fontsize{8pt}{9.5pt}\selectfont
\begin{center}
	%\hspace*{-5pt}
    \begin{tabular}{@{}c@{}c@{}c@{}c@{}}
	\includegraphics[scale=0.31]{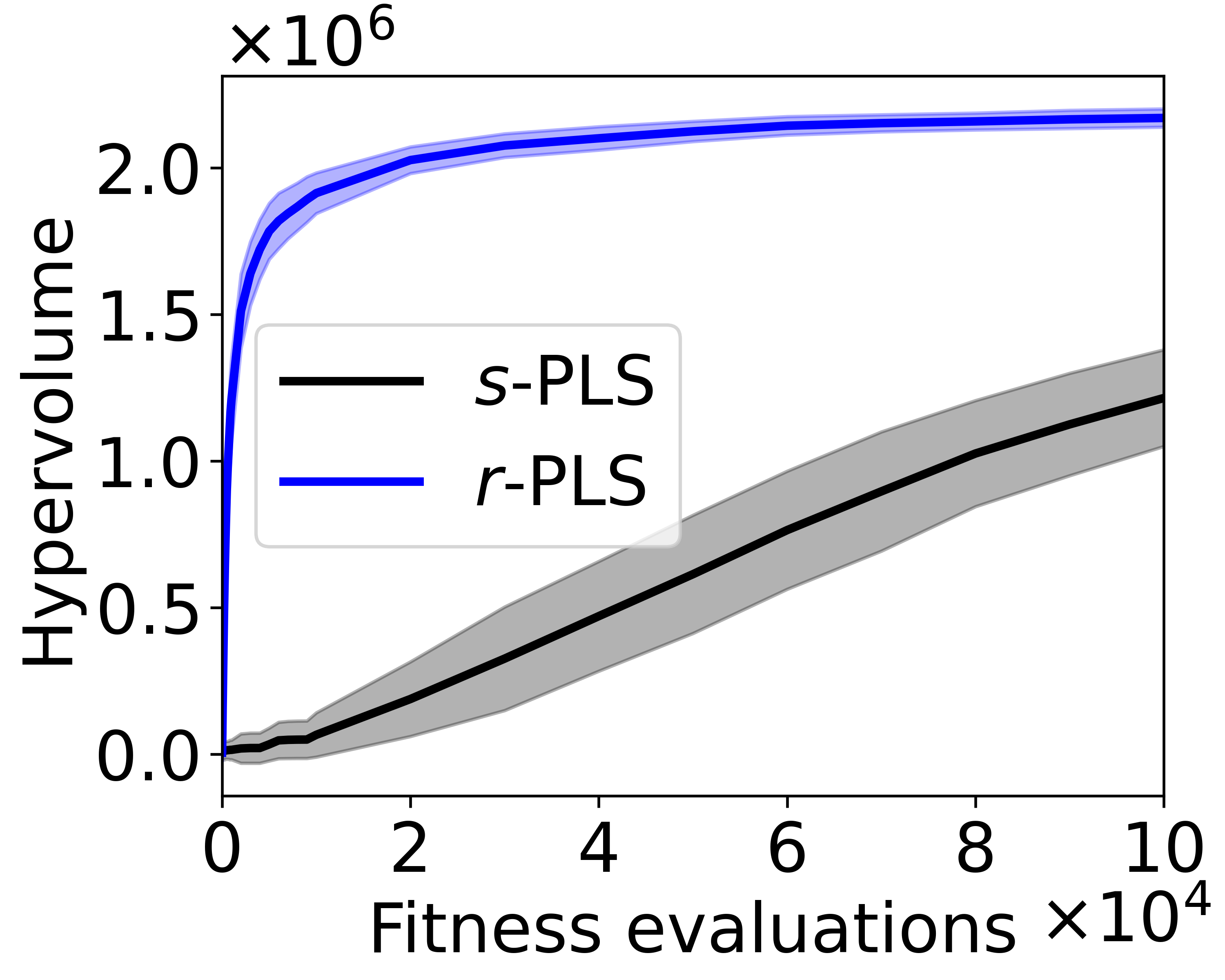} ~~&~~ 
    \includegraphics[scale=0.31] {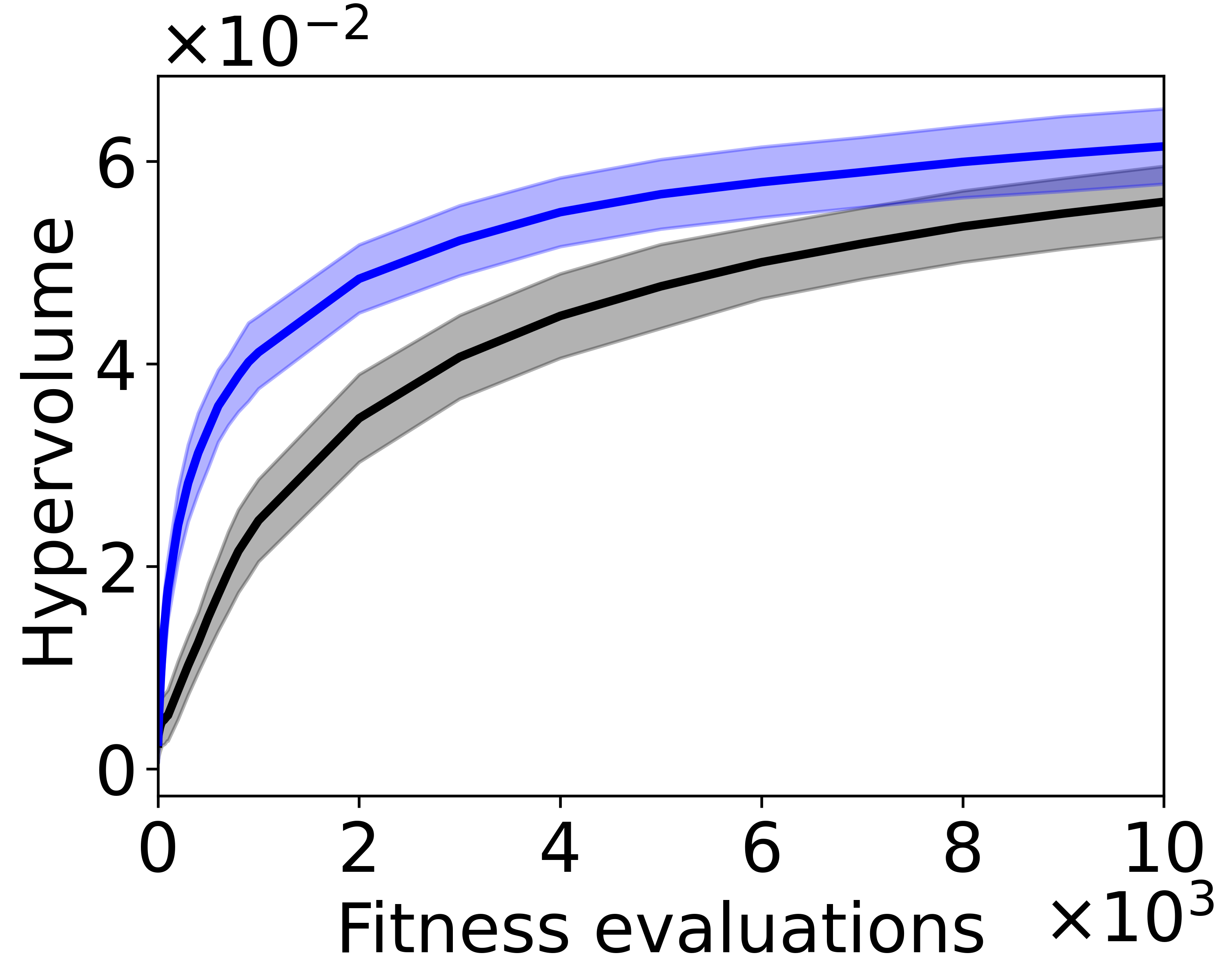} ~~&~~
    \includegraphics[scale=0.31]{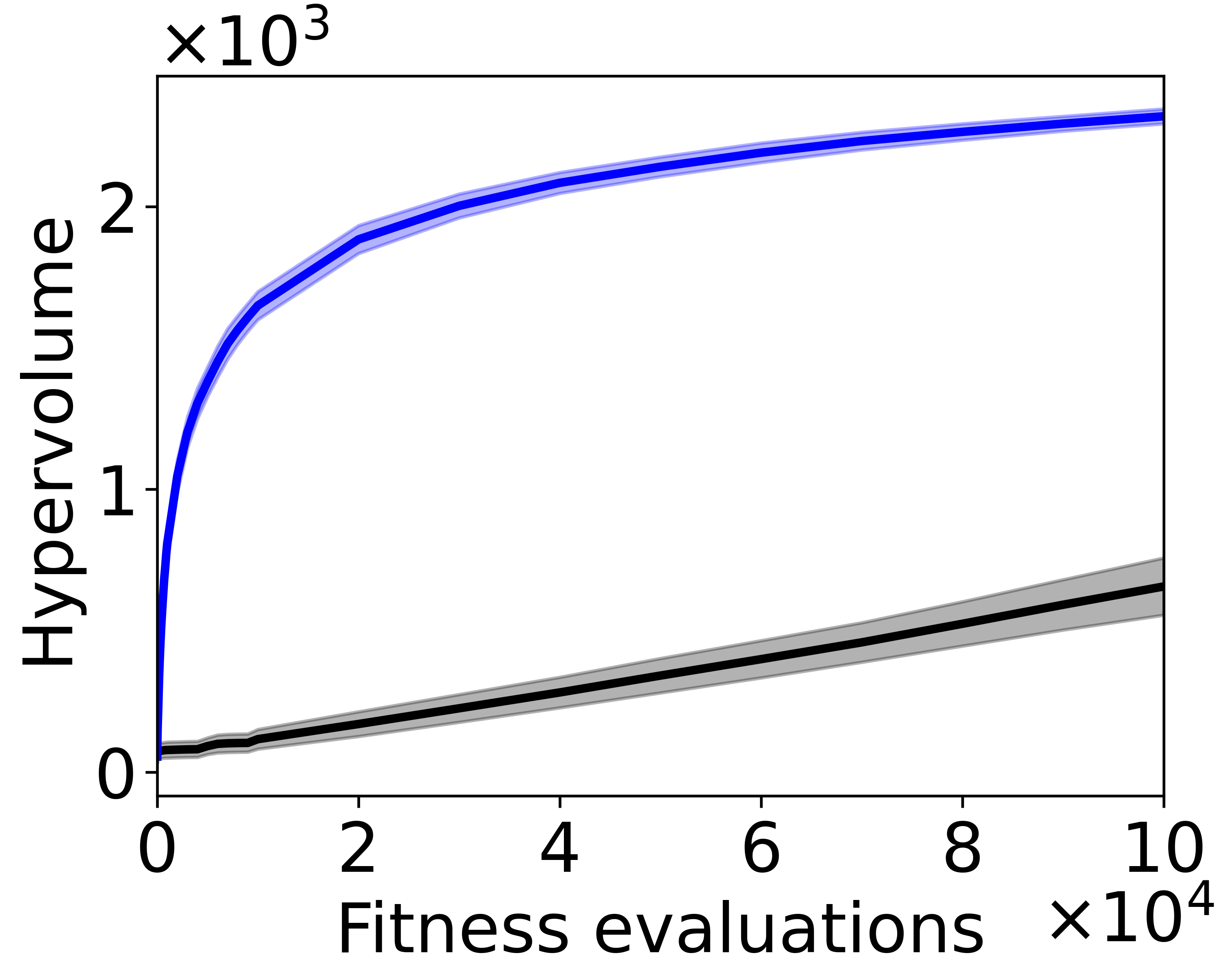} ~~&~~
	\includegraphics[scale=0.31] {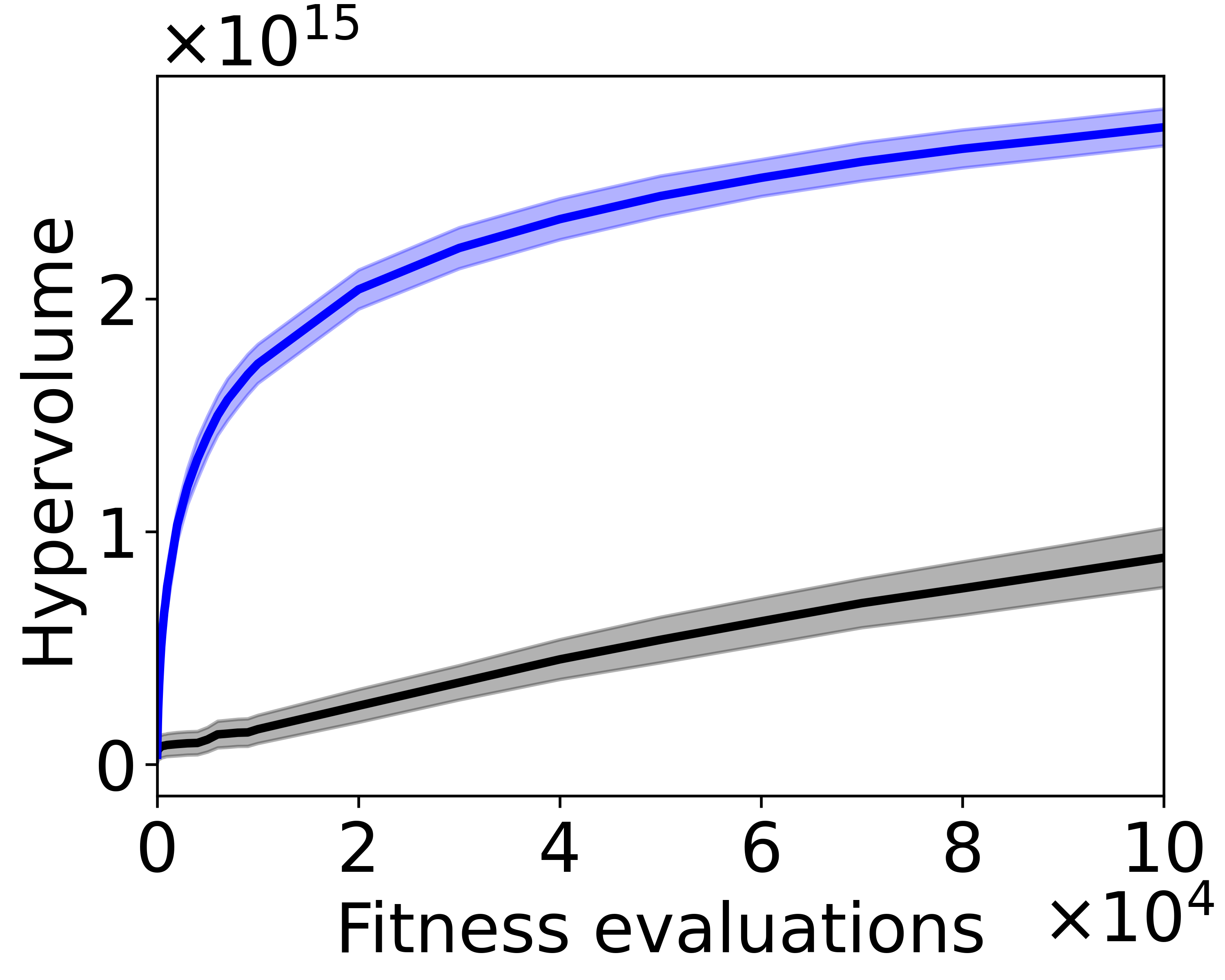} \\
	(a) Knapsack &
    (b) NK-Landscape &
    (c) TSP  &
    (d) QAP   \\	
	\end{tabular}
	\end{center}
        \vspace{-12pt}
	\caption{
    \small The hypervolume (HV) trajectory (higher is better) of the \emph{s}-PLS and \emph{r}-PLS algorithms across 30 runs on the four MOCOPs with 100 decision variables. The bolded line and shaded area represent the mean and standard deviation of the HV, respectively. 
    }
	\label{Fig:hv_2}
\end{figure*} 

\begin{figure*}[htbp]
	\vspace{-5pt}
	\renewcommand{\arraystretch}{0.1} % Adjust the value to control spacing
	\fontsize{8.5pt}{10pt}\selectfont
	\begin{center}
		%\hspace*{-5pt}
        \begin{tabular}{@{}c@{}@{}c@{}@{}c@{}@{}c@{}}
			\includegraphics[scale=0.31]{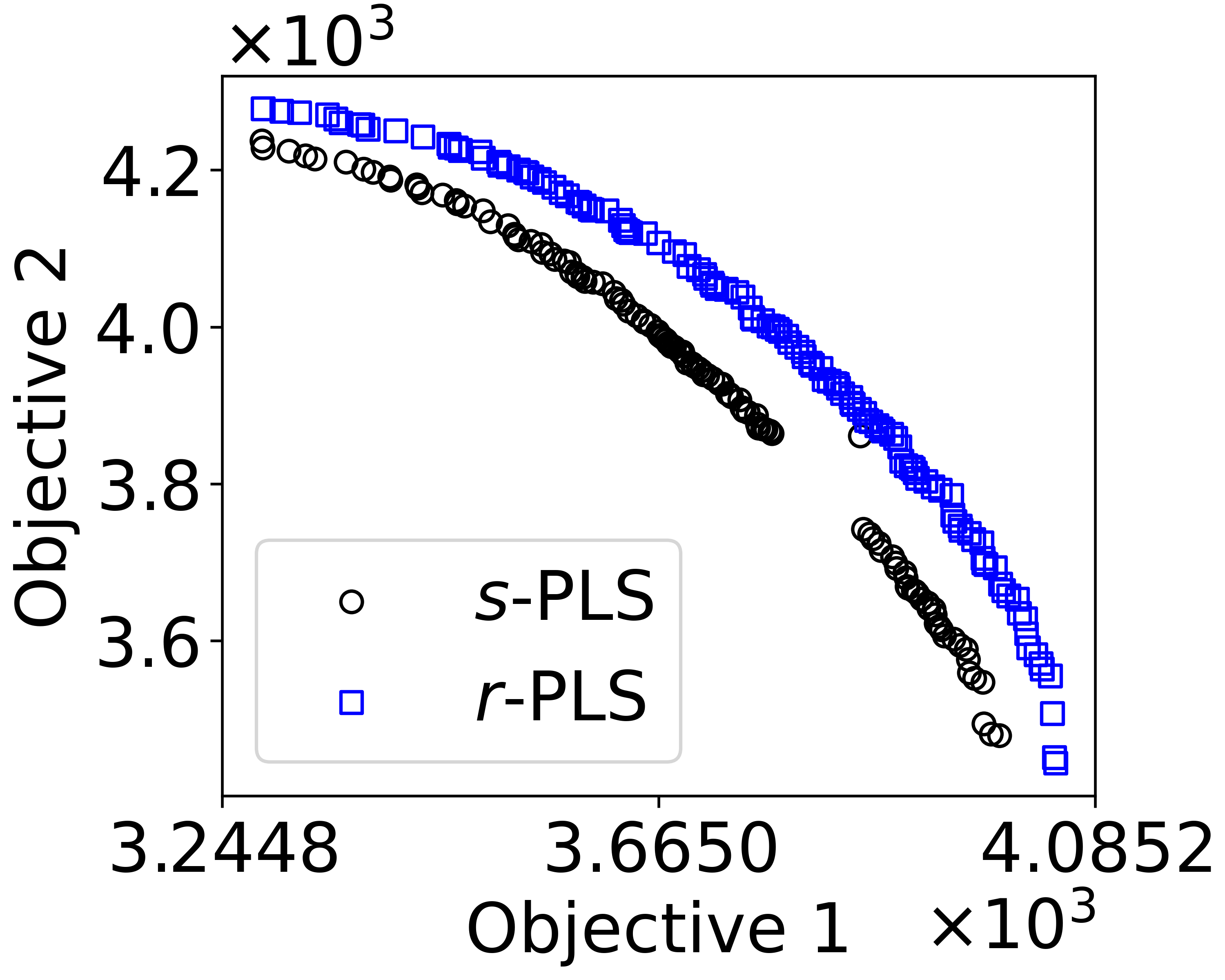} ~~&~~ 
			\includegraphics[scale=0.31] {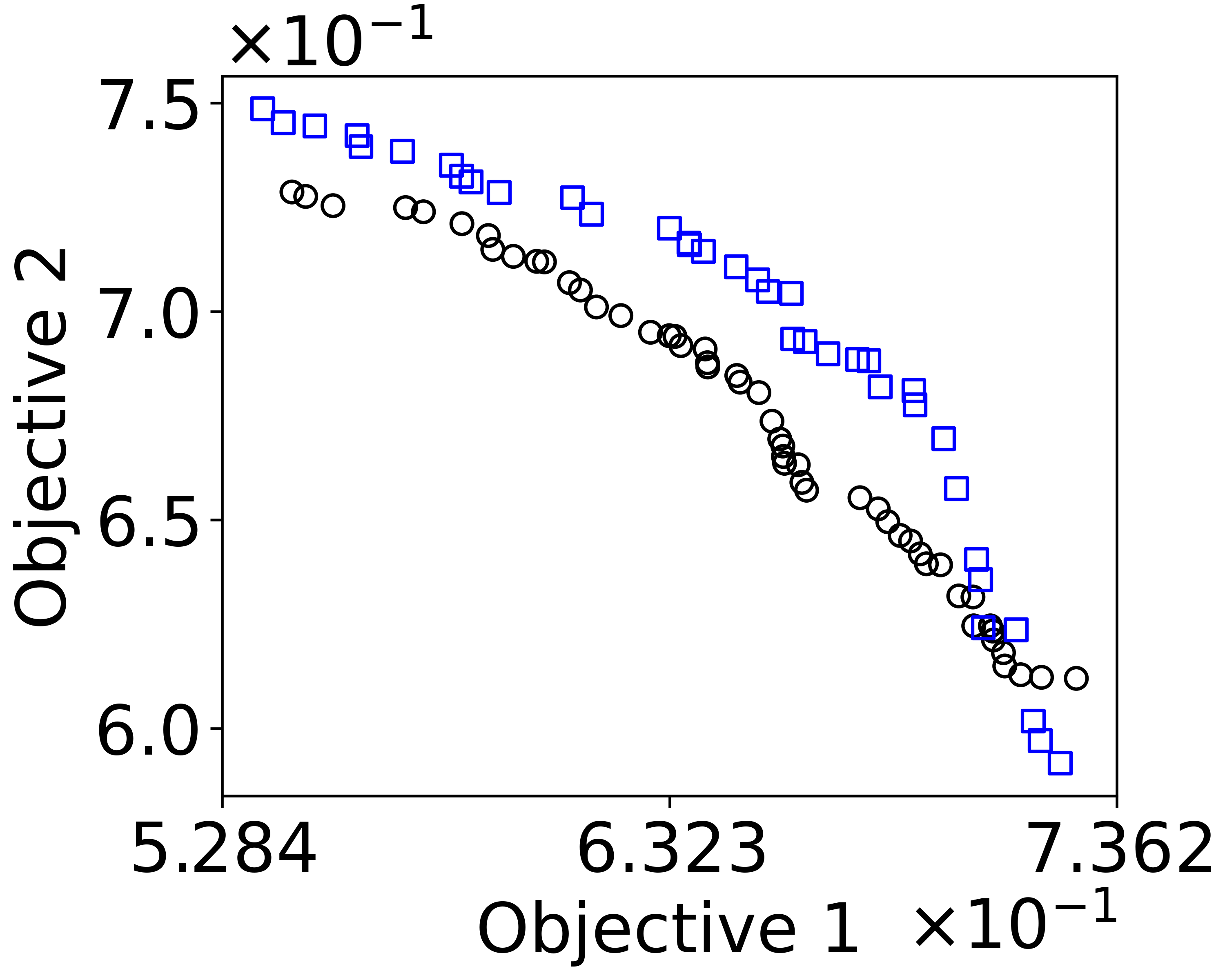} ~~&~~
			\includegraphics[scale=0.31]{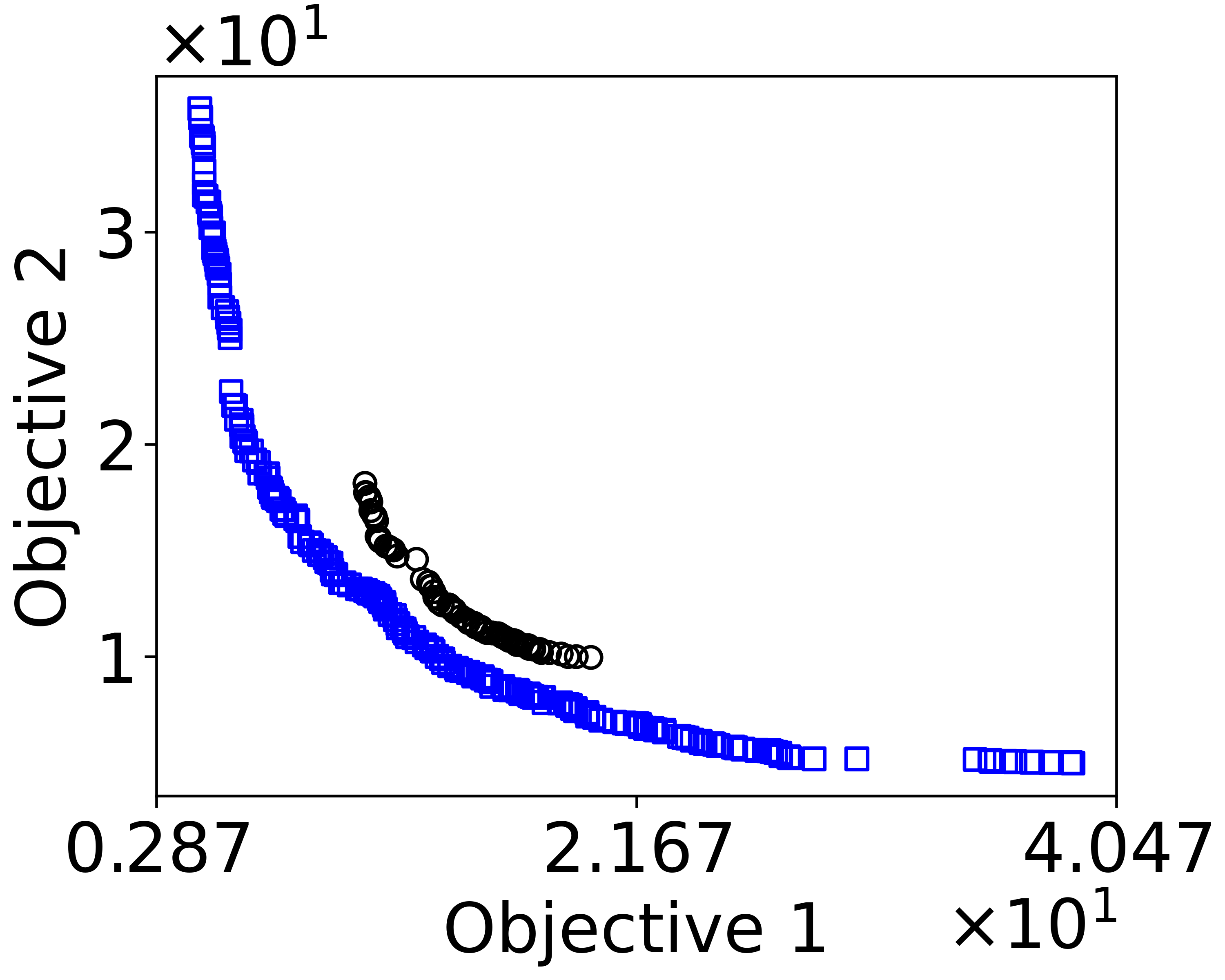} ~~&~~
			\includegraphics[scale=0.31] {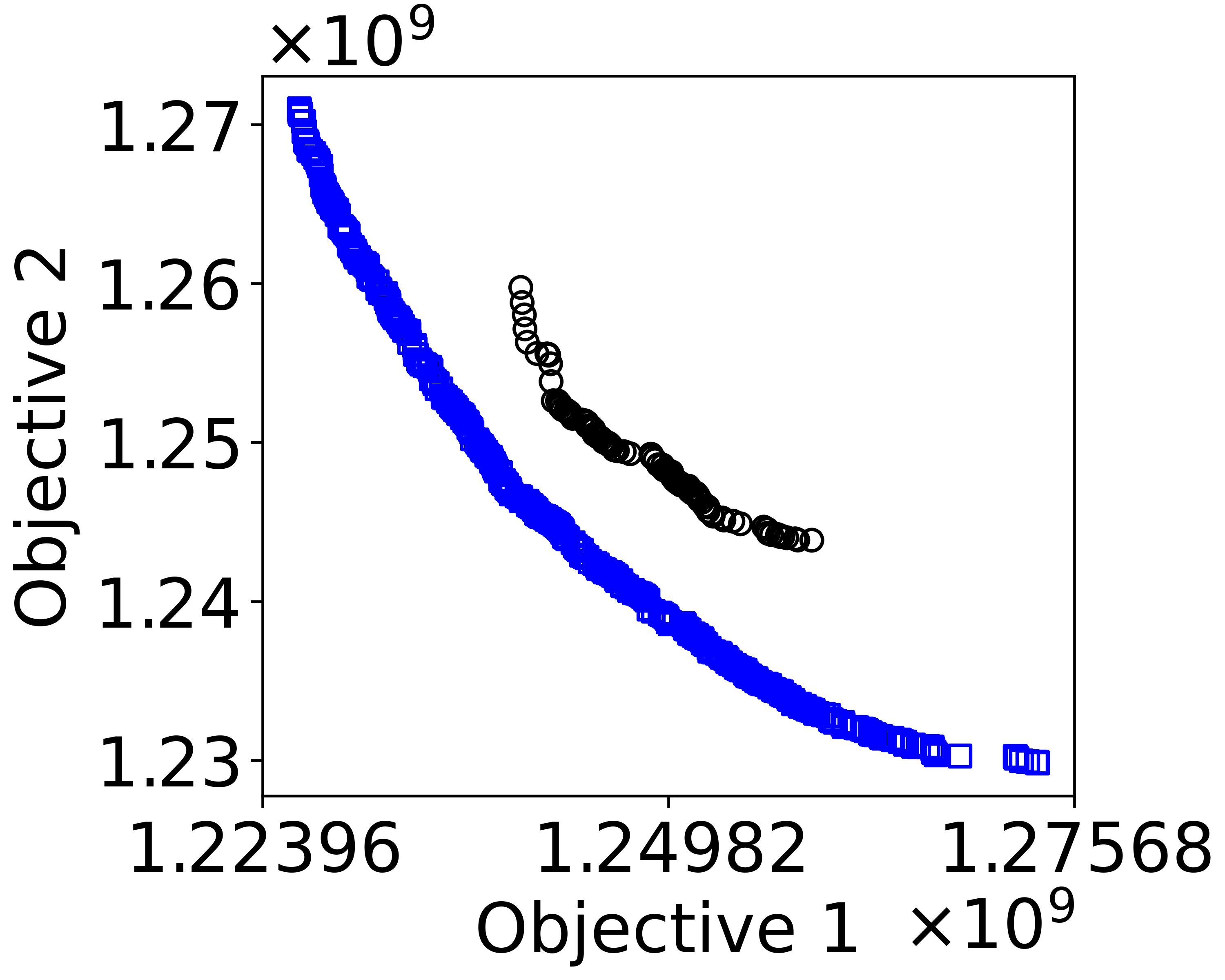}  \\
	(a) Knapsack &
        (b) NK-Landscape &
        (c) TSP  &
        (d) QAP  \\	
		\end{tabular}
	\end{center}
	\vspace{-12pt}
	\caption{\small All non-dominated solutions (i.e., solutions in the archive) obtained by the \emph{s}-PLS and \emph{r}-PLS in a typical run on the four MOCOPs, where the Knapsack and NK-Landscape are maximisation problems, and the TSP and QAP are minimisation problems.}
	\label{Fig:obj}
    \vspace{-5pt}
\end{figure*}

\emph{s}-PLS algorithms visit the solution's neighbours one-by-one (and test if they can be put into the archive) until a condition is met. 
The condition can be neighbourhood exhaustion (i.e., best improvement) or finding a satisfactory solution (i.e., first improvement).
In multi-objective optimisation, a satisfactory solution can be defined in different ways, but the two common ones are i) a solution that is not dominated by any solution in the archive and ii) a solution that dominates at least one solution in the archive \cite{Liefooghe2012,Dubois2015}. We consider both in this study.   
Once a solution’s neighbourhood is explored, it is labelled as ``explored'' and is never revisited.
The framework of \emph{s}-PLS is given in Algorithm~\ref{Alg:s_PLS}.
\emph{s}-PLS is popular in the area. Many existing PLS heuristics fall into this class, such as the original PLS~\cite{Paquete2004}, MORBC~\cite{Aguirre2005} and its variants~\cite{ide_multi-objective_2024}, GPLS~\cite{alsheddy_guided_2009}, 2PPLS~\cite{lust_multiobjective_2012}, anytime PLS~\cite{Dubois2015}, PPLS/D~\cite{shi2022improving}, LOMONAS~\cite{phan2023pareto}, and a new enhanced PLS~\cite{kang_new_2024}.

Unlike \emph{s}-PLS, for each iteration \emph{r}-PLS randomly selects a solution from the archive and then visit one of its neighbours at random, without tracking which solution has been visited or explored. 
This purely randomised sampling approach may visit the same neighbour of a solution multiple times, resulting in redundant cost. 
Algorithm~\ref{Alg:r_PLS} gives the framework of \emph{r}-PLS.
Existing work on developing \emph{r}-PLS algorithms is relatively rare. Notable examples include SEMO~\cite{giel_expected_2003,laumanns2004}, SPLS~\cite{Drugan2012} and MPLS~\cite{jaszkiewicz2018many}. That said, \emph{r}-PLS has been frequently studied in the evolutionary computation theory community~\cite{bian2018general, doerr2021theoretical, Dinot2023, zheng_how_2024, wietheger_near-tight_2024}, due to its appealing nature (simplicity, minimalism and generalisability).

A relevant question is which of the two classes of PLS is faster. One might assume that \emph{s}-PLS to be more efficient, as it conducts a systematic search and avoids repeatedly exploring the same solutions, particularly when using the first improvement strategy. However, in the following sections, we show that this may not be the case. 

\section{Empirical Comparison between \emph{s}-PLS and \emph{r}-PLS}
We consider the most basic versions of the PLS algorithm classes that randomly pick a solution in the archive for neighbourhood exploration, as our focus is on general neighbourhood exploration strategies rather than other tuneable algorithmic components (e.g., selection criteria). That is, for \emph{r}-PLS, we consider SEMO \cite{laumanns2004}. For \emph{s}-PLS, we consider the canonical version \cite{Paquete2004} that randomly selects one unexplored solution from the archive to explore its neighbourhood, and the exploration stops when the neighbourhood is exhausted, namely, the best improvement strategy.
Additionally, we later consider the first improvement strategy, which was regarded to be faster than the original \emph{s}-PLS \cite{Liefooghe2012,Dubois2015}.     

We consider four commonly used MOCOPs, the multi-objective 0/1 knapsack~\cite{teghem_multi-objective_1994}, travelling salesman problem (TSP)~\cite{ribeiro2002study}, quadratic assignment problem (QAP)~\cite{knowles2003instance} and NK-landscapes~\cite{Aguirre2004}. These problems exhibit diverse characteristics, including pseudo-Boolean (knapsack and NK-landscape) and permutation (TSP and QAP) formulations; smooth (knapsack and TSP) to rugged (QAP and NK-landscape with a large $K$) fitness landscape; order-based (TSP) versus position-based (QAP) permutations; and constrained (knapsack) versus unconstrained (TSP, QAP and NK-landscape) settings.
Each problem was instantiated in three sizes (100, 200 and 500 decision variables). Detailed description of these MOCOPs are given in Appendix~\ref{apx:mocop}.
We use the hypervolume \cite{Zitzler1999}, a strongly Pareto-compliant indicator capturing convergence, spread, uniformity, and cardinality \cite{Li2019}, to assess the quality of solutions obtained by search algorithms. The reference point in hypervolume, along with detailed experimental and algorithmic settings, are given Appendix~\ref{apx:setting}.

\begin{figure*}[tbp]
	\vspace{-5pt}
	\renewcommand{\arraystretch}{0.1} % Adjust the value to control spacing
	\fontsize{8pt}{9.5pt}\selectfont
	\begin{center}
		\hspace*{-5pt}\begin{tabular}{@{}c@{}@{}c@{}@{}c@{}@{}c@{}}
			\includegraphics[scale=0.31]{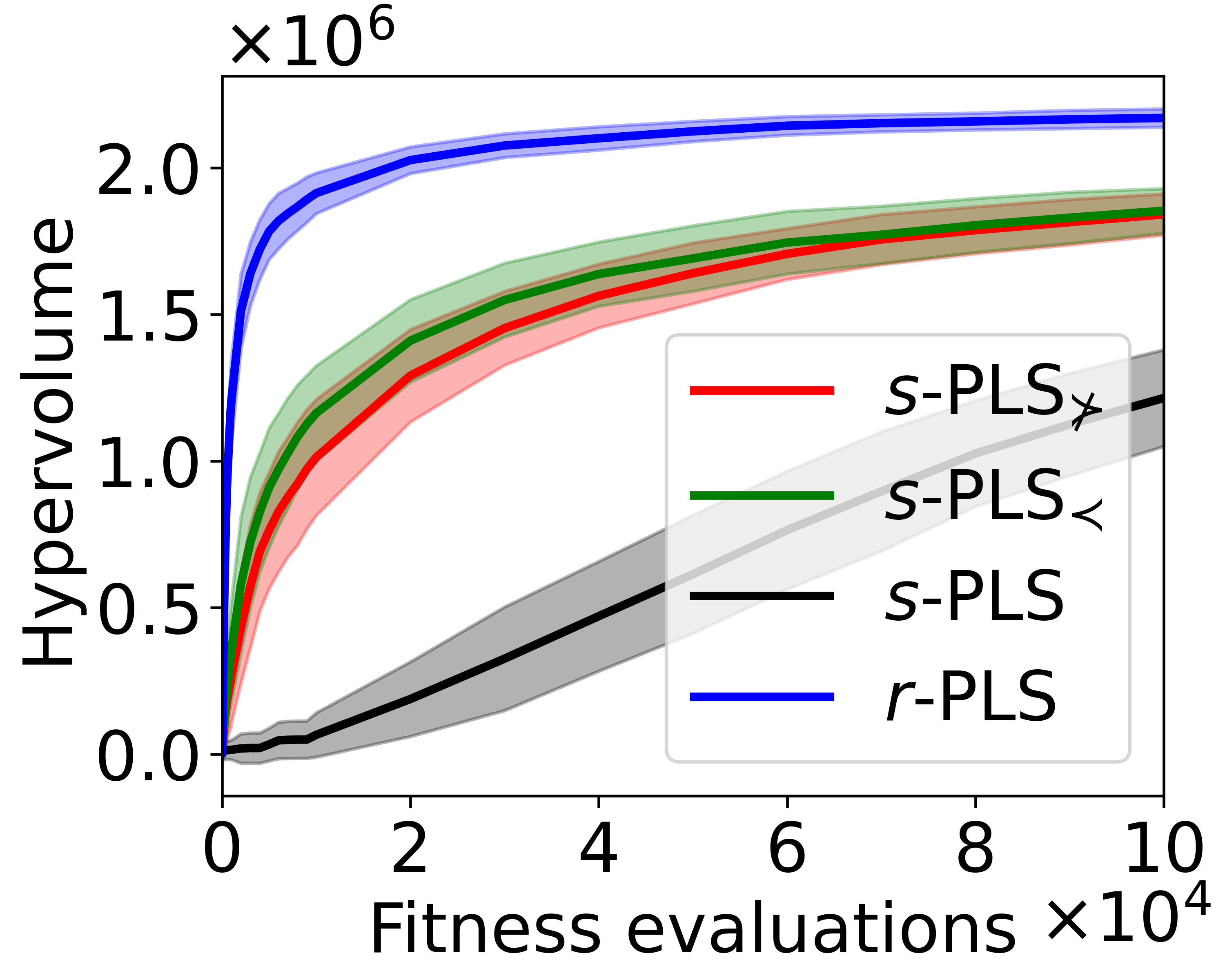} ~~&~~ 
			\includegraphics[scale=0.31] {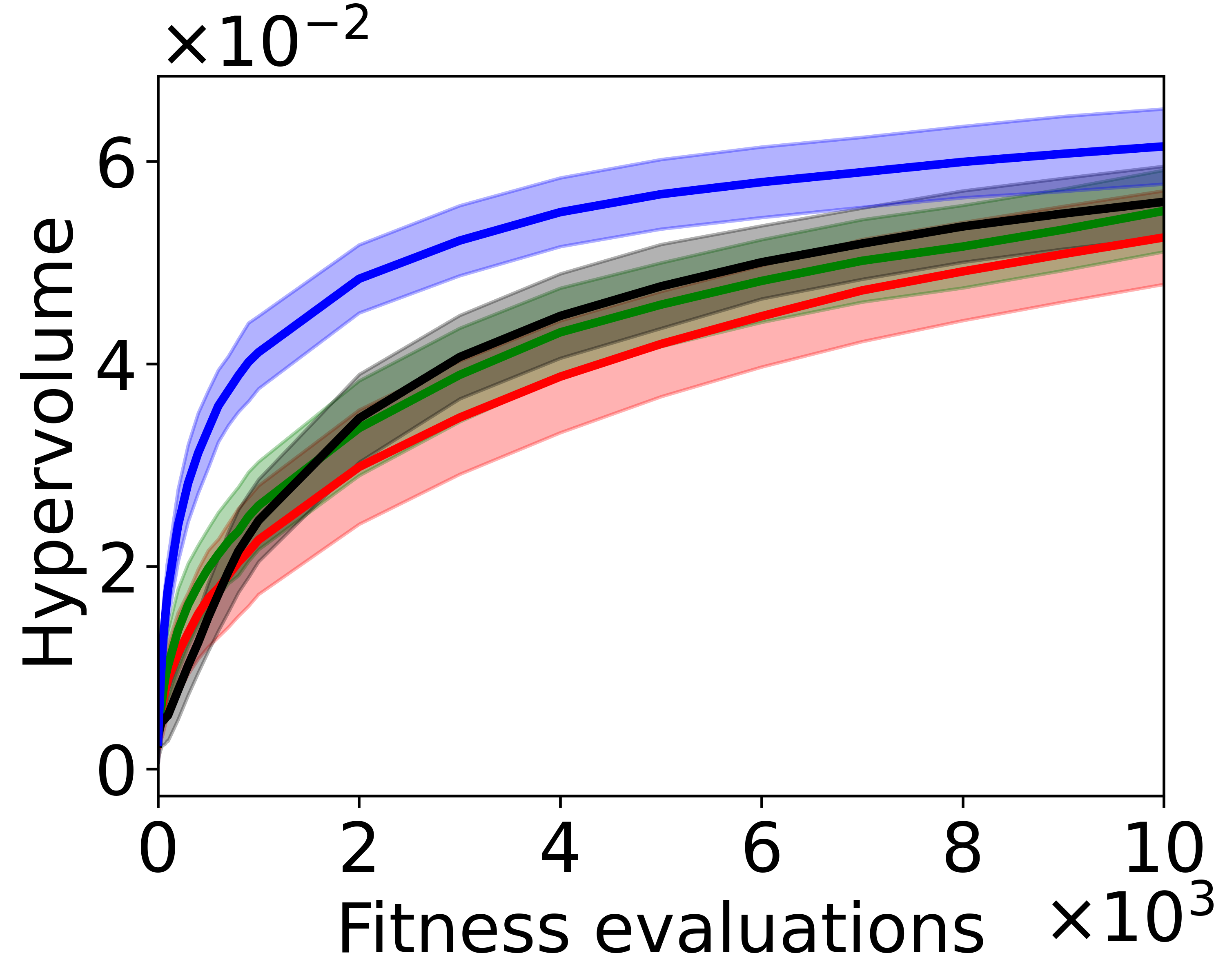} ~~&~~
			\includegraphics[scale=0.31]{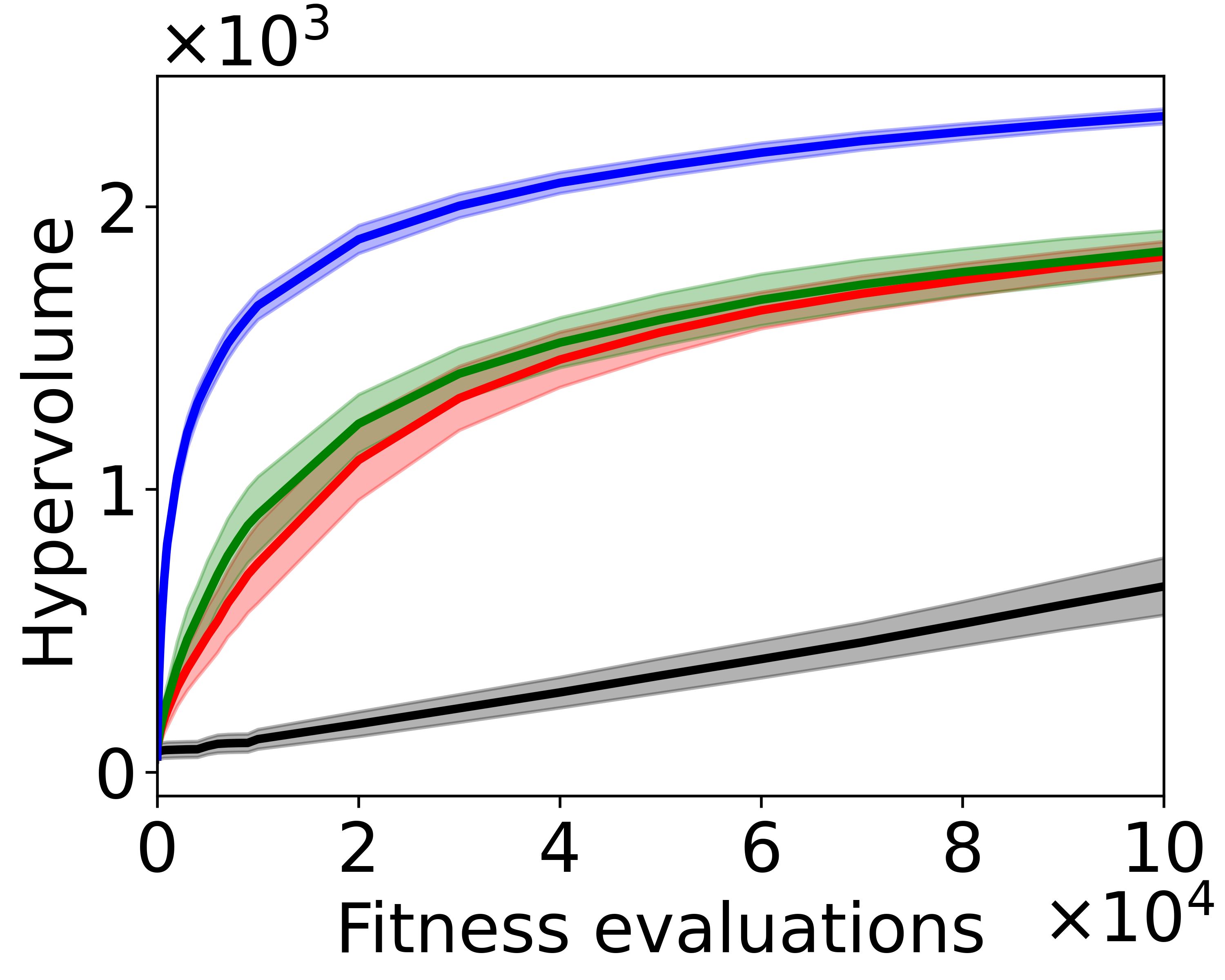} ~~&~~
			\includegraphics[scale=0.31] {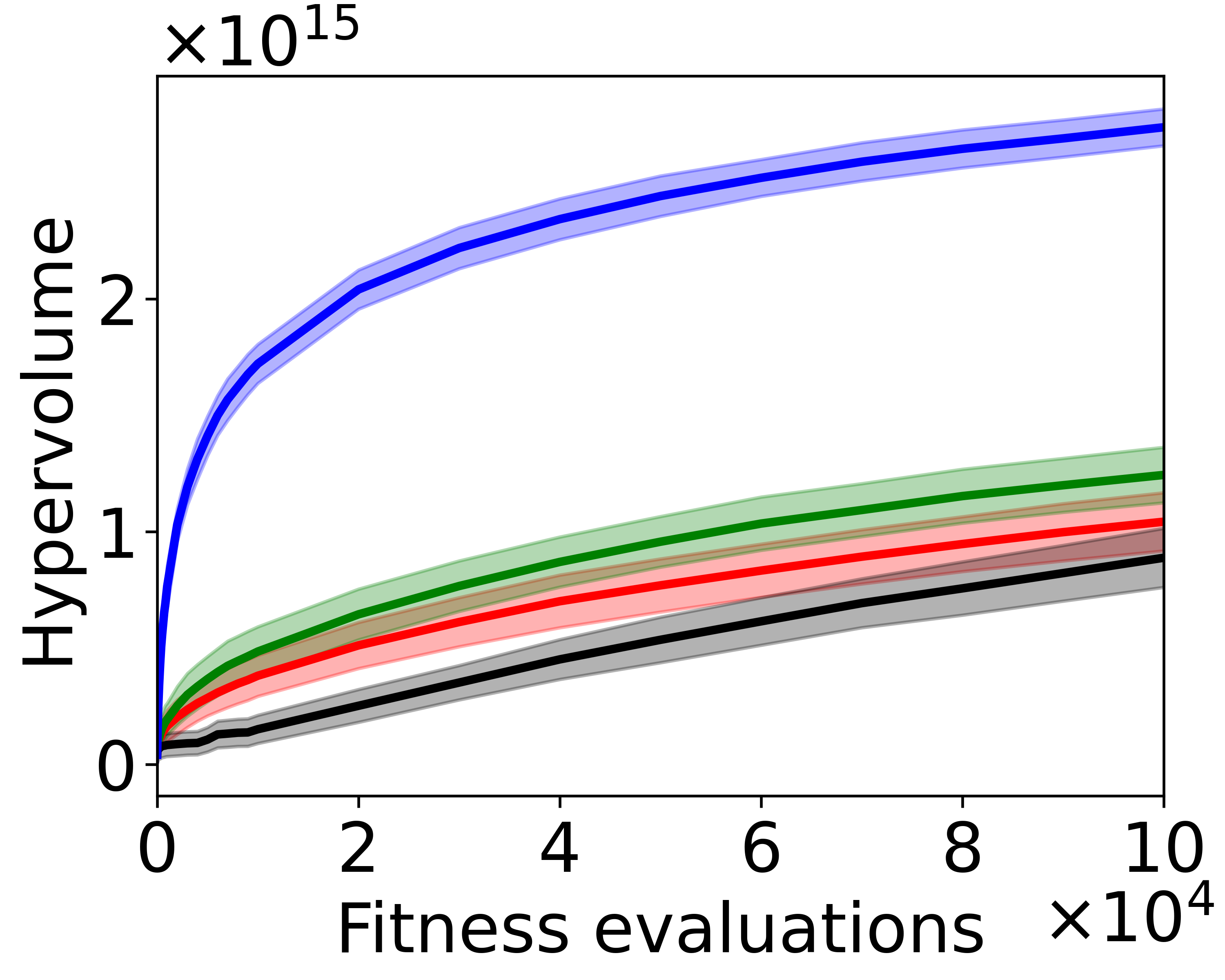} \\
			(a) Knapsack  &
			(b) NK-Landscape &
			(c) TSP  &
			(d) QAP \\	
		\end{tabular}
	\end{center}
	\vspace{-12pt}
	\caption{\small The hypervolume trajectory (higher is better) of the \emph{s}-PLS, \emph{s}-PLS$_\nsucc$ (it stops when finding a solution not dominated by the archive), \emph{s}-PLS$_\prec$ (it stops when finding a solution dominating at least one solution in the archive), and \emph{r}-PLS across 30 runs on the four MOCOPs with 100 decision variables. The bolded line and shaded area represent the mean and standard deviation of the hypervolume, respectively.}
	\label{Fig:hv_all}
    \vspace{-8pt}
\end{figure*}

Figure~\ref{Fig:hv_2} shows the average hypervolume (bolded line) and standard deviation (shaded area) of \emph{s}-PLS (black) and \emph{r}-PLS (blue) across 30 independent runs on the four problems with 100 decision variables.
As can be seen, \emph{r}-PLS obtains a better HV value than \emph{s}-PLS throughout the search on all the problems.
On the NK-landscape problem, the performance advantage of \emph{r}-PLS is relative small, while on the other three problems, \emph{r}-PLS demonstrates a clear edge since the very beginning of the search process. The difference is even clearer when a larger problem size is involved (see results on the four 200D\&500D problems in Appendix~\ref{apx:results}).
Figure~\ref{Fig:obj} gives the non-dominated solutions obtained by the two algorithms in a typical run on the four problems when the search ends in Figure~\ref{Fig:hv_2}. It is clear that the \emph{r}-PLS's solutions have better convergence on the Knapsack and NK-landscape problems, and better convergence and diversity on the TSP and QAP problems. 

One explanation for \emph{s}-PLS's slow progress is its exhaustive neighbourhood exploration (the best improvement strategy) -- it always evaluates the entire neighbourhood of a solution before moving on. In contrast, the first improvement strategy -- the exploration stops once a satisfactory solution is found -- can be faster, as shown in \cite{Liefooghe2012,Dubois2015}. We thus now consider two representative \emph{s}-PLS alternatives that stop the neighbourhood exploration earlier \cite{Dubois2015}. The first one stops the neighbourhood exploration once a non-dominated solution (i.e., it is not dominated by any solution in the archive) is found, denoted by \emph{s}-PLS$_\nsucc$. The other one stops once a dominating solution (i.e., it dominates at least one solution in the archive) is found, denoted by \emph{s}-PLS$_\prec$.

Figure~\ref{Fig:hv_all} shows the average hypervolume (bolded line) and standard deviation (shaded area) of the two \emph{s}-PLS algorithms, as well as the basic \emph{s}-PLS and \emph{r}-PLS across 30 independent runs on the 100D four problems. 
The results on the higher-dimensional problems (200D and 500D) are given in Appendix~\ref{apx:results}.
As shown, the two first-improvement \emph{s}-PLS algorithms achieve noticeably better hypervolume performance than the basic \emph{s}-PLS on most problems. Between them, \emph{s}-PLS$_\prec$ is faster than \emph{s}-PLS$_\nsucc$. However, they are still outperformed by \emph{r}-PLS to a large extent. 

The above result is interesting -- \emph{r}-PLS, as a simple randomised local search heuristic that may revisit solutions multiple times, has been shown to be substantially more effective than the \emph{s}-PLS algorithms which conduct a systematic search and avoid repeated exploration and revisiting of solutions. Next, we will investigate why this happens.

\section{Intuitions from Toy Examples}
In this section, we present two toy examples of solution distributions in the archive to help illustrate why and when \emph{r}-PLS can be faster than \emph{s}-PLS. We here are interested in how quickly (in terms of the number of evaluations) an algorithm can find a ``good'' solution. A good solution is one that is not dominated by the archive, leading to an update of the archive and an improvement of the archive quality.

\begin{figure}[!ht]
  \centering
  \begin{tabular}{c@{\hspace{2.5em}}c}
    % Scenario 1: filled vs. empty circles
    \begin{tikzpicture}[scale=1.4]
      \draw[thick] (0,0) rectangle (2,2);
      \foreach \x/\y/\filled in {0.5/0.5, 1.5/0.5, 0.5/1.5, 1.5/1.5} {
      \ifdim \y pt=0.5pt
        \filldraw (\x,\y) circle (0.3);
      \else
        \draw (\x,\y) circle (0.3);
      \fi
      }
    \end{tikzpicture}
    &
    % Scenario 2: half‐filled circles
    \begin{tikzpicture}[scale=1.4]
      \draw[thick] (0,0) rectangle (2,2);
      \foreach \x/\y in {0.5/0.5, 1.5/0.5, 0.5/1.5, 1.5/1.5} {
        \draw (\x,\y) circle (0.3);
        \begin{scope}
          \clip (\x,\y) circle (0.3);
          \fill (\x,\y) ++(0,-0.3) rectangle ++(0.3,0.6);
        \end{scope}
      }
    \end{tikzpicture}
    \\
    (a) Example 1 & (b) Example 2
  \end{tabular}
  \vspace{-5pt}
  \caption{\small Two toy examples of the distribution of \textit{promising} solutions in the archive. Here, a promising solution (filled circle) means all of its neighbours being non-dominated to the archive; an unpromising solution (empty circle) means all of its neighbours being dominated by at least one solution in the archive; and a half-promising solution (half-filled circle) means half of its neighbours being non-dominated to the archive. 
  In Example 1, the archive contains two promising solutions and two unpromising solutions. In Example 2, all the four solutions in the archive are half-promising.}  
  \vspace{-10pt}
  \label{fig:halfgood}
\end{figure}
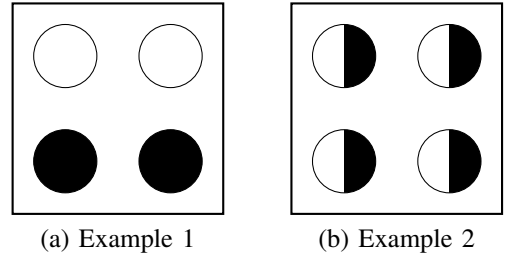

\vspace{-5pt}
\subsection{Example 1}\label{sec:toy2}
Let us consider a scenario where the archive contains four solutions: half of them \textit{promising} and the other half \textit{unpromising}, as illustrated in Figure~\ref{fig:halfgood}(a). 
Here, a promising solution (filled circle in the figure) is one for which every neighbour is a good solution 
(i.e., non-dominated to the archive); an unpromising solution (empty circle) is one for which every neighbour is dominated by at least one solution in the archive. 
To start with, both \emph{s}-PLS and \emph{r}-PLS select one solution uniformly at random from the archive to explore.

For \emph{r}-PLS, 
if a promising solution is selected,
its randomly sampled neighbour will be a non-dominated solution, and the search succeeds in a single evaluation. 
If an unpromising solution is selected, 
its randomly sampled neighbour will be a dominated solution; one evaluation is spent and the algorithm proceeds.
Since the probability of selecting a promising solution is $1/2$, the expected number of evaluations needed to find a good neighbour is 2.

For \emph{s}-PLS, if a promising solution is selected, 
the first neighbour picked must be a non-dominated solution, and the search succeeds in a single evaluation.
However, if an unpromising solution is selected, \emph{s}-PLS will exhaustively evaluate its entire neighbourhood and waste $|\N|$ evaluations before 
marking it as explored and selecting another solution for exploration, 
where $|\N|$ denotes the neighbourhood size.
Therefore, while both algorithms have a 50\% chance of selecting a promising solution, the potential cost after selecting an unpromising solution is much higher for \emph{s}-PLS.
In fact, the expected number of evaluations needed for \emph{s}-PLS to find a good solution in this scenario is $\frac{n|\N|}{n+2}+1$, where $n$ is the number of solutions in the archive. 
The proof can be found in Appendix~\ref{apx:example_proof}. As can be calculated, for a problem whose neighbourhood size is 10, the expected number of evaluations needed for \emph{s}-PLS to find a good solution in our example is $\frac{4\times10}{4+2}+1=7.67$, substantially larger than the expected number of evaluations for \emph{r}-PLS (i.e., $2$).   

\subsection{Example 2}\label{sec:toy2}

The above example illustrates an extremely uneven distribution of their neighbours: half of the solutions have only good neighbours, while the other half have only poor ones. 
In Example 2, we present the opposite case: each solution has an equal mix of good and poor neighbours (Figure~\ref{fig:halfgood}(b)).

For \emph{r}-PLS, the expected number of evaluations to find a new good solution remains $2$, as the probability of selecting a good neighbour for each solution in the archive is $1/2$.
For \emph{s}-PLS, since all the solutions have the same distribution that half of their neighbours are good, the probability of selecting a good neighbour for a solution is $1/2$. For the case that \emph{s}-PLS first chooses a poor neighbour, the probability of selecting a good neighbour of that solution at the second pick is $\frac{|\N|}{2(|\N|-1)}$, as the algorithm does not repeatedly visit the same neighbour. This process can continue until there are only good neighbours left for that solution. In fact, the expected number of evaluations needed for \emph{s}-PLS to find a good neighbour is $\frac{|\N|+1}{(|\N|/2)+1}$, which is always less than that of \emph{r}-PLS (i.e., $2$). The proof is given in Appendix~\ref{apx:example_proof}. 

From the above two examples, we can see the behaviour of the two PLS algorithms depends on the distribution of neighbours of solutions in the archive. If the distribution of solutions' neighbours (in terms of their quality) is even, then \emph{s}-PLS is faster; if the distribution is not even, then \emph{r}-PLS has the edge. But, what is the distribution of the neighbours during the search of the PLS algorithms on common MOCOPs? In the next section, we aim to answer this question.

\section{Distribution of Solutions' Neighbours in \emph{s}-PLS and \emph{r}-PLS}\label{sec:geo}

In this section, we investigate the distribution of solutions' neighbours during the search process of \emph{s}-PLS and \emph{r}-PLS on the MOCOPs. 
Specifically, we count the number of good neighbours among solutions in the archive, and test if this number follows a known discrete probabilistic model (e.g., uniform, binomial and Poisson distributions). Note that this practice is a common approach to test whether an empirical dataset fits a known discrete distribution, for example to fit the number of web requests (which has been found to follow a Zipf distribution) \cite{breslau_web_1999}, and the citation counts of scientific papers~\cite{xie_geometric_2016} (which follows a geometric distribution). 
In this study, we consider commonly seen discrete distributions~\cite{johnson2005}, characterised by their different probabilistic tail type: \emph{uniform} (a balance distribution), \emph{Zipf} (heavy-tailed with polynomial decay), \emph{geometric} (light-tailed with exponential decay), \emph{Poisson} (light-tailed with super-exponential decay), and \emph{binomial} (light-tailed with bounded support and a hard cut-off) distributions. 
Following the common practice, goodness-of-fit here is examined via a $\chi ^2$ test at the $\alpha=5\%$ significance level.
Details on distribution fitting, comparison of different distributions, and parameter settings can be seen in Appendix~\ref{apx:dist}. 

\begin{figure}[t]
  \begin{flushright}
  \includegraphics[scale=0.20]{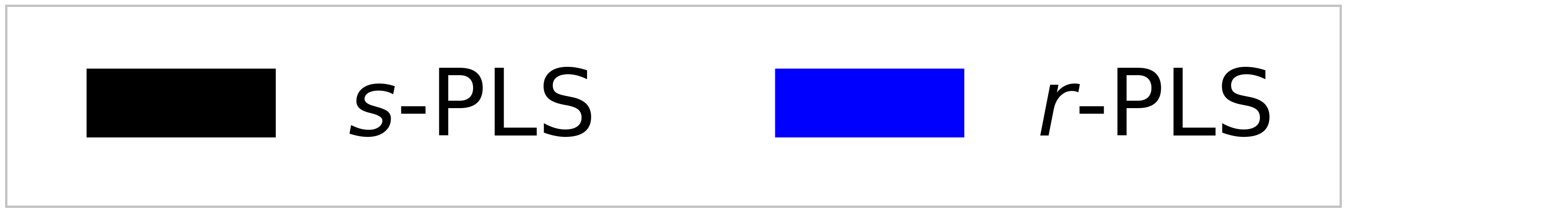}
  \end{flushright}
  \vspace{-8pt}
  \centering
  \includegraphics[scale=0.25]{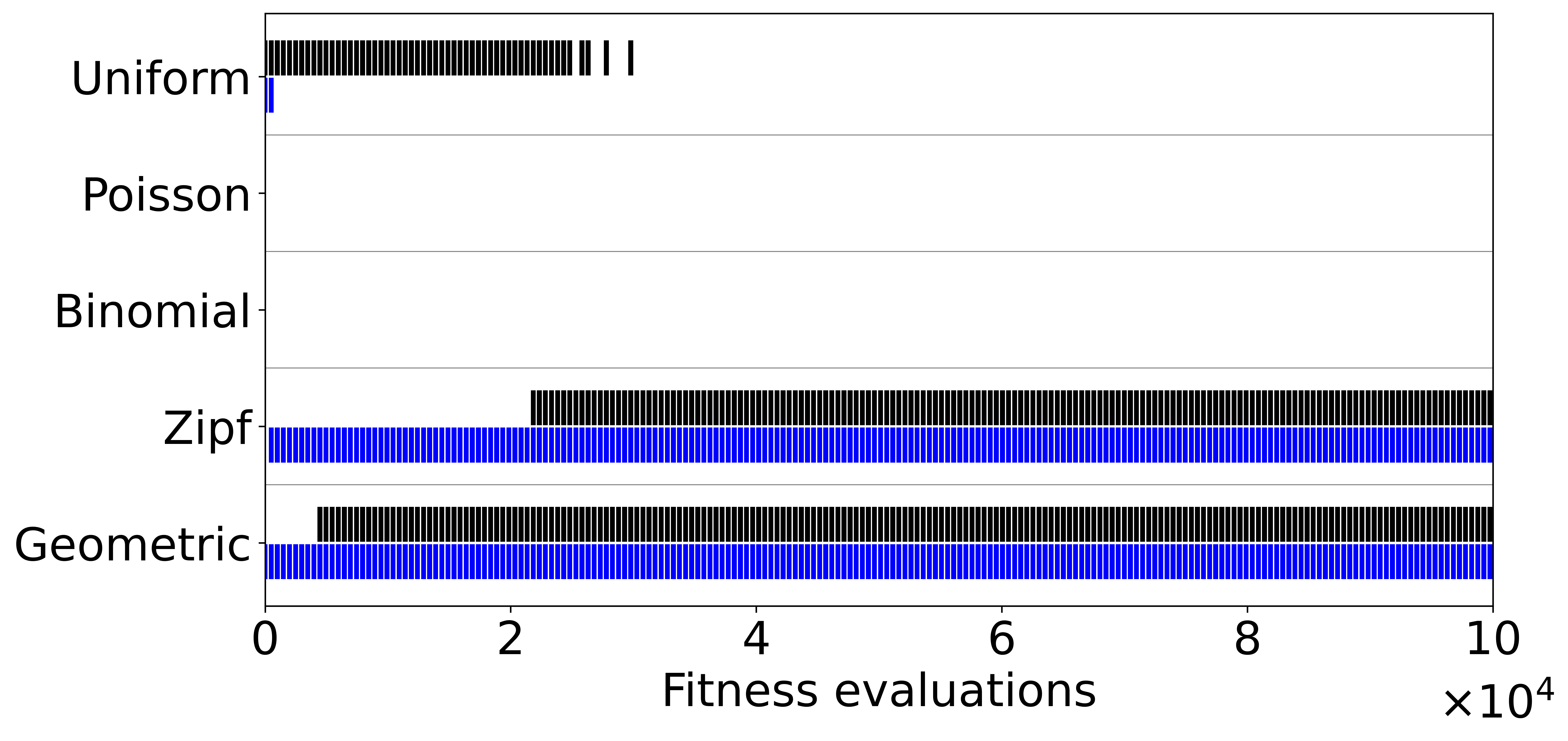}
  \vspace{-10pt}
  \caption{\small Goodness-of-fit of the distributions with respect to the number of good neighbours among solutions in the archive during the search process of \emph{s}-PLS (black) and \emph{r}-PLS (blue) on the TSP (100 cities). A coloured tick in a row indicates that the corresponding algorithm’s data at that point was not rejected under the model. For example, for the geometric distribution model, the blue ticks virtually cover the whole band. This indicates that the number of good neighbours among solutions in the archive for \emph{r}-PLS follows a geometric distribution throughout the search process. Note that an algorithm can fit multiple distribution models.}
  \label{fig:gof_tsp}
  \vspace{-5pt}
\end{figure}

Figure~\ref{fig:gof_tsp} plots the goodness-of-fit of the five discrete distributions with respect to the number of good neighbours among solutions in the archive during the search process of \emph{s}-PLS and \emph{r}-PLS on the TSP. 
Results on the other three problems can be seen in Appendix~\ref{apx:dist}.
In the figures, each horizontal band corresponds to a candidate distribution, and at each sampled evaluation, two coloured ticks, black and blue, respectively indicate a good fit for \emph{s}-PLS and \emph{r}-PLS at $\alpha=0.05$, namely, the $\chi^2$ test does not reject the distribution.
As shown in Figure~\ref{fig:gof_tsp}, \emph{s}-PLS initially follows a uniform distribution, but for most of its search process, it follows a geometric distribution. In contrast, \emph{r}-PLS exhibits behaviour consistent with both Zipf and geometric distributions. Note that multiple distributions can provide statistically acceptable fits. To further compare Zipf and geometric, we use Akaike’s Information Criterion (AIC)~\cite{akaike1974aic}, which considers both the likelihood of the fit and model complexity (i.e., it penalises distributions with more free parameters). Among the two, the geometric distribution consistently yields a lower AIC, indicating a better fit (see Appendix~\ref{apx:dist} for details).
In summary, the geometric distribution best describes the behaviour of both \emph{s}-PLS and \emph{r}-PLS on all the considered problems.

The geometric distribution is defined by the probability mass function $\Prob(G=k)=p(1-p)^k,$ where $k\in\mathbb{N}$ and $p\in(0,1]$.
It is light-tailed, meaning that its probability mass decays exponentially as $k$ increases. 
Unlike uniform distribution (e.g., Example 2 in the previous section) where distribution of the number of good neighbours among solutions is even, the light-tailed yields a highly uneven distribution -- most solutions have only a few good neighbours, while those with many good neighbours exist but are rare.

\section{Theoretical Analysis}

The above result demonstrates that geometric distribution model is the fittest one for the two PLS algorithms during their search process. In this section, building on the previous finding, we aim to provide theoretical explanation for why \emph{r}-PLS is more efficient than \emph{s}-PLS. That is, we analytically compare the expected time of the two algorithms to find a new good neighbour, under the same geometric distribution (i.e., the same parameter $p$ value) that the number of good neighbours among the solutions in the archives follows. 
Note that despite both algorithms fitting the same distribution model, they are likely to follow different geometric distributions (i.e., different $p$ values) at a specific timestep. 
That said, considering the same distribution allows us to analyse their efficiency at an identical search state, hence the performance difference can be attributed solely to the algorithms' neighbourhood exploration approaches, i.e., random sampling versus systematic exploration.

We now begin with a lemma, which will be used in the proof for \emph{s}-PLS in the main theorem.

\begin{lemma}\label{lem:position} 
Suppose a set of $N$ ($N \in \mathbb{Z}^+$) solutions contains $k$ good ones ($1 \leq k\leq N$). Assume the solutions in the set are visited one by one in a random order until a good solution is found. Let $J$ denote the number of solutions visited. Then, the expected value of $J$ is $\E[J] = \frac{N+1}{k+1}$.

\end{lemma}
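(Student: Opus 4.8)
The plan is to compute $\E[J]$ directly via a standard combinatorial/symmetry argument on the uniformly random ordering of the $N$ solutions. The cleanest route is to observe that in a uniformly random permutation, the $k$ good solutions and $N-k$ bad solutions partition the sequence, and $J$ is exactly one more than the number of bad solutions appearing \emph{before} the first good solution. So I would set $B = J - 1$ = (number of bad solutions preceding the first good one) and compute $\E[B]$.

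\textbf{Indicator-variable approach.} For each bad solution $i$ (there are $N-k$ of them), let $X_i$ be the indicator that solution $i$ precedes all $k$ good solutions in the random order. Then $B = \sum_{i} X_i$, and by linearity $\E[B] = \sum_i \Prob(X_i = 1)$. The event $\{X_i = 1\}$ depends only on the relative order of the $k+1$ items consisting of bad solution $i$ together with the $k$ good solutions; by symmetry each of these $k+1$ items is equally likely to come first, so $\Prob(X_i=1) = \frac{1}{k+1}$. Hence $\E[B] = \frac{N-k}{k+1}$ and therefore
\[
\E[J] = 1 + \frac{N-k}{k+1} = \frac{(k+1)+(N-k)}{k+1} = \frac{N+1}{k+1}.
\]

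\textbf{Alternative (gaps) argument as a sanity check.} One can also argue that the $k$ good solutions split the $N-k$ bad solutions into $k+1$ gaps (before the first good one, between consecutive good ones, and after the last), and by symmetry each gap has the same expected size $\frac{N-k}{k+1}$; the leading gap is $B$, giving the same answer. I would likely present only the indicator argument in the write-up and perhaps mention the gaps intuition in a sentence.

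\textbf{Main obstacle.} There is no real analytic difficulty here; the only thing to be careful about is justifying the symmetry claim cleanly — namely that conditioning on (or restricting attention to) the sub-permutation induced on the $k$ good items plus a single fixed bad item is genuinely uniform over those $k+1$ orderings, so that each is first with probability $1/(k+1)$. I would state this as a one-line consequence of the exchangeability of a uniformly random permutation (the induced order on any fixed subset is uniform). It is also worth noting the degenerate check $k=N$: the formula gives $\E[J] = \frac{N+1}{N+1} = 1$, matching the fact that the first solution visited is certainly good, which confirms the boundary case.
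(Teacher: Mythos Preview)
Your proof is correct. The indicator argument is clean and the symmetry justification (the induced order on any fixed subset of a uniform permutation is itself uniform) is exactly the right one-liner; the $k=N$ sanity check is a nice touch.

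The paper, however, does not actually prove this lemma at all: its ``proof'' consists of a single sentence identifying the process with sampling without replacement until the first labelled item and then citing Knuth~[Sec.~3.4.1] for the formula $\E[J]=\frac{N+1}{k+1}$. So your route is genuinely different in that it is self-contained rather than an appeal to a reference. What your approach buys is transparency --- the reader sees exactly why $k+1$ appears in the denominator (each bad item competes with the $k$ good ones for the first slot among those $k+1$). What the paper's approach buys is brevity and an authoritative pointer for anyone wanting the classical treatment. Either is fine for a lemma of this weight; if you want to match the paper's style you could compress your argument to two sentences, but there is no need.
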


\begin{proof}
This process is equivalent to sampling uniformly at random without replacement from $N$ items, among which $k$ items are labelled. The number of items examined until encountering the first labelled one is known to have expected value $\E[J] = \frac{N+1}{k+1}$~\cite[Sec.~3.4.1]{knuth1997}.
\end{proof}
This lemma indicates that to first find a good neighbour of a solution, the expected number of its neighbours that \emph{s}-PLS needs to visit is $\frac{|\N|+1}{k+1}$, where $\N$ denotes the neighbourhood size of the problem and $k$ denotes the number of good neighbours the solution has.

\begin{theorem}[\emph{r}-PLS is faster than \emph{s}-PLS in finding the next good solution] \label{thn:geometric}
Let the number of good neighbours among solutions in the archive of \emph{s}-PLS and \emph{r}-PLS follows a geometric distribution with parameter $p\in(0,1]$, i.e., $\Prob(G=k)=p(1-p)^k,\;k\in\mathbb{N}.$
Then, for any neighbourhood size $|\N|\in\mathbb{Z}^+$, \emph{r}-PLS has less expected time (fewer evaluations) than \emph{s}-PLS in finding the next good solution.
\end{theorem}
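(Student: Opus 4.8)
The plan is to compute, for each algorithm, the expected number of evaluations to find a good solution as a function of the geometric parameter $p$ (and the neighbourhood size $|\N|$), and then show the $r$-PLS quantity is strictly smaller for every $p\in(0,1]$ and every $|\N|\in\mathbb{Z}^+$. First I would set up the common model: at the current search state, a solution drawn from the archive has $G$ good neighbours, where $\Prob(G=k)=p(1-p)^k$. For $r$-PLS, conditioned on $G=k$, each sampled neighbour is good independently with probability $k/|\N|$, so the number of evaluations until the first good neighbour is geometric with that success probability and has conditional expectation $|\N|/k$; when $k=0$ the solution yields no good neighbour and the algorithm resamples. I would therefore model the $r$-PLS cost as a sum over independent archive picks: at each pick we either (with probability $\Prob(G=0)=p$) spend one wasted evaluation and restart, or (with probability $1-p$, conditioned on $G=k\ge 1$ with the truncated geometric weights) spend on average $|\N|/k$ evaluations and stop. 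Summing the geometric series of restarts gives a closed form $\E[T_r] = \frac{1}{1-p}\bigl(p + \sum_{k\ge 1} p(1-p)^k \cdot \tfrac{|\N|}{k}\bigr)$; the inner sum is $p|\N|\cdot(-\ln p)/(1-p)$ up to the usual $\sum_{k\ge1}x^k/k=-\ln(1-x)$ identity with $x=1-p$, so $\E[T_r]$ has an explicit form in $p$ and $|\N|$.

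Next I would handle $s$-PLS. Here, conditioned on $G=k\ge 1$ for the selected solution, Lemma~\ref{lem:position} gives the expected number of neighbours visited before finding a good one as $\frac{|\N|+1}{k+1}$; conditioned on $G=0$, the whole neighbourhood of size $|\N|$ is exhausted (a wasted burst), the solution is marked explored, and a fresh solution is selected. Crucially, the geometric assumption is applied afresh to each newly selected solution, so again we get a restart structure: with probability $p$ we pay $|\N|$ and restart, otherwise we pay on average $\sum_{k\ge 1}\frac{p(1-p)^k}{1-p}\cdot\frac{|\N|+1}{k+1}$ and stop. This yields $\E[T_s] = \frac{1}{1-p}\bigl(p|\N| + \sum_{k\ge1} p(1-p)^k\cdot\tfrac{|\N|+1}{k+1}\bigr)$. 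The inner sum can be simplified using $\sum_{k\ge1}\frac{x^{k}}{k+1} = \frac{1}{x}\bigl(-\ln(1-x) - x\bigr)$ with $x=1-p$, again giving an explicit expression.

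The final step is the inequality $\E[T_r] < \E[T_s]$ for all $p\in(0,1]$ and all integer $|\N|\ge 1$. After cancelling the common factor $\frac{1}{1-p}$ (the $p=1$ case should be checked separately, where both expectations are finite limits — indeed $\E[T_r]=\E[T_s]$-style degeneracy must be examined, so I would treat $p=1$ by a direct computation or a limiting argument), it reduces to comparing $p + \sum_{k\ge1}p(1-p)^k\frac{|\N|}{k}$ against $p|\N| + \sum_{k\ge1}p(1-p)^k\frac{|\N|+1}{k+1}$. A clean way is term-by-term domination plus the $k=0$ discrepancy: note $\frac{|\N|}{k} \le \frac{|\N|+1}{k+1}$ is \emph{false} in general (it goes the wrong way for large $k$), so a naive term comparison fails — this is where care is needed. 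Instead I would bound the $r$-PLS restart contribution: the wasted mass on $G=0$ costs $r$-PLS only $1$ evaluation but costs $s$-PLS $|\N|$ evaluations, a surplus of $|\N|-1$ per unit probability; meanwhile on $G=k\ge 1$ the comparison between $|\N|/k$ and $(|\N|+1)/(k+1)$ can favour $s$-PLS by at most a bounded amount, and I would show the accumulated $s$-PLS surplus from $G=0$ dominates. Concretely I would prove $\sum_{k\ge0} p(1-p)^k\bigl[\text{(}s\text{-PLS term)} - \text{(}r\text{-PLS term)}\bigr] > 0$ by splitting off the $k=0$ term $p(|\N|-1)$ and showing $\sum_{k\ge1}p(1-p)^k\bigl(\frac{|\N|+1}{k+1}-\frac{|\N|}{k}\bigr) \ge -p(|\N|-1)$, using that $\frac{|\N|+1}{k+1}-\frac{|\N|}{k} = \frac{k-|\N|}{k(k+1)}$, whose negative part sums to something controlled by $|\N|\sum_{k>|\N|}\frac{1}{k(k+1)} < 1 < |\N|-1$ (for $|\N|\ge 2$), with the edge case $|\N|=1$ handled directly (there $s$-PLS and $r$-PLS both need exactly one evaluation on a promising solution, and $s$-PLS wastes the full single-neighbour burst identically, so one checks the tiny case by hand).

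The main obstacle is this last inequality: the per-$k$ terms do not dominate pointwise, so the argument must be \emph{global}, balancing the large cost $s$-PLS pays on empty neighbourhoods ($G=0$) against the modest region ($k>|\N|$) where $s$-PLS is locally cheaper. I expect the cleanest route is to reduce everything to the two elementary series $\sum x^k/k = -\ln(1-x)$ and $\sum x^k/(k+1) = -(\ln(1-x)+x)/x$, obtain $\E[T_r]$ and $\E[T_s]$ in closed form in terms of $L:=-\ln p$ and $|\N|$, and then verify $\E[T_s]-\E[T_r] = \frac{1}{1-p}\bigl(p(|\N|-1) + \frac{p}{1-p}L - (|\N|+1)(1 - \tfrac{p L}{1-p}) \cdot(\text{sign-corrected})\bigr) > 0$ — at which point it becomes a one-variable calculus fact in $p$ that I would confirm by checking the endpoints $p\to 0^+$ (where $s$-PLS blows up like $|\N|\,L$ but $r$-PLS like $L$, so the gap is large and positive) and $p\to 1^-$, and monotonicity/convexity in between.
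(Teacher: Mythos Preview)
Your $s$-PLS analysis is essentially the paper's: condition on $G$, use Lemma~\ref{lem:position} for $G\ge 1$, pay $|\N|$ for $G=0$, then multiply by the expected number of archive picks $1/(1-p)$. The gap is in your $r$-PLS model. In $r$-PLS (Algorithm~\ref{Alg:r_PLS}), each iteration selects a \emph{fresh} solution from the archive and evaluates exactly \emph{one} neighbour; it does not stay with the chosen solution. So conditioning on $G=k$ and then charging $|\N|/k$ evaluations --- as if the algorithm kept sampling that solution's neighbourhood until success --- is not what the algorithm does. Indeed, your hybrid ``spend one wasted evaluation if $G=0$, but $|\N|/k$ evaluations if $G\ge 1$'' does not describe any coherent procedure, since after a single failed sample the algorithm cannot distinguish $G=0$ from $G\ge 1$. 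The correct per-evaluation success probability is simply $\E[G/|\N|]=\E[G]/|\N|=(1-p)/(p|\N|)$, because each iteration draws a fresh $G$; hence $\E[T_{r\text{-PLS}}]=p|\N|/(1-p)$ directly, with no series at all.

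This correction eliminates the hard step you anticipated. With $\E[T_{r\text{-PLS}}]$ a single closed-form term rather than a log-series, the paper just forms the ratio and obtains
\[
\frac{\E[T_{s\text{-PLS}}]}{\E[T_{r\text{-PLS}}]}
= 1 + \frac{|\N|+1}{|\N|}\cdot\frac{1}{1-p}\sum_{j=2}^{|\N|+1}\frac{(1-p)^j}{j},
\]
which is manifestly $>1$. Your worry that term-by-term domination fails, forcing a global balance of the $G=0$ surplus against a $k>|\N|$ deficit, is an artefact of the inflated $r$-PLS cost: by Jensen, $\E[|\N|/G\mid G\ge 1]>|\N|/\E[G\mid G\ge 1]$, so your model systematically overcharges $r$-PLS and makes the comparison harder than it is. With the correct $r$-PLS expectation the inequality is a one-line algebraic identity.
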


\begin{proof}

For \emph{r}-PLS, at each iteration, a random neighbour of a randomly selected solution from the archive is sampled. Here, the probability of finding a good neighbour is given by $p_{\text{success}}=\frac{\E[G]}{|\N|}$, where $\E[G]$ represents the expected number of good neighbours.
Since $G$ follows a geometric distribution, $\E[G]=\frac{1-p}{p}$~\cite{johnson2005}.
Thus, the expected number of evaluations for \emph{r}-PLS to find the next good neighbour is:
\noindent
\footnotesize{
\vspace{-2pt}
\begin{align}
  \E[T_{r\text{-PLS}}]=\frac{1}{p_{\text{success}}}=\frac{|\N|}{\E[G]}=\frac{p|\N|}{1-p}.
\end{align}
\vspace{-2pt}
}

\normalsize
Now let us consider \emph{s}-PLS. 
We first compute the expected number of evaluations required when exploring a single solution from the archive.
There are two possible cases.
The first case is that the solution has no good neighbour; this occurs with probability $\Prob(G=0)=p(1-p)^0=p$ and
requires the algorithm to scan the entire neighbourhood, so the expected number of evaluations is $p|\N|$. 
The second case is that the solution contains $k\geq1$ good neighbours; this occurs with probability $\Prob(G=k)=p(1-p)^k$ and it requires the algorithm to visit $\frac{|\N|+1}{k+1}$ neighbours on average (Lemma~\ref{lem:position}), so the expected evaluation number is $p(1-p)^k \cdot\frac{|\N|+1}{k+1}$
Thus, the overall expected number of evaluations to explore a solution is as follows. 
\noindent
\footnotesize{
\vspace{-3pt}
\[
\begin{split}
\E[T^*] &= p|\N|+\sum_{k=1}^{|\N|}p(1-p)^k\cdot\frac{|\N|+1}{k+1} \\
%&= p|\N|+p(|\N|+1)\sum_{k=1}^{|\N|}\frac{(1-p)^k}{k+1}    \\
%& \quad (\text{let }r=1-p, j=k+1)\\
&= p|\N|+p(|\N|+1)\sum_{j=2}^{|\N|+1}\frac{(1-p)^{j-1}}{j} \quad (\text{let }j=k+1)  \\
&= p|\N|+p(|\N|+1)\cdot\frac{1}{1-p}\bigl(\sum_{j=1}^{|\N|+1}\frac{(1-p)^j}{j} - (1-p)\bigr) \\
&= p|\N|+p(|\N|+1)\cdot\frac{1}{r}\sum_{j=1}^{|\N|+1}\frac{r^j}{j} - \frac{r}{r}p(|\N|+1) \\
&= \frac{p|\N|+1}{1-p}\sum_{j=1}^{|\N|+1}\frac{(1-p)^j}{j} - p
\end{split}
\]
\vspace{-2pt}
}

\normalsize
Now that we have the expected number of evaluations required to explore a single solution, the total expected number of evaluations is obtained by multiplying this value by the expected number of solutions explored from the archive. 
Since the probability that a randomly selected solution has at least one good neighbour is $1-P(G=0)=1-p$, the expected number of solutions explored is $\frac{1}{1-p}$.
Thus, the expected number of evaluations for \emph{s}-PLS to find the next good neighbour is:
\noindent
\footnotesize{
\vspace{-2pt}
\begin{align}
\begin{split}
\E[T_{s\text{-PLS}}]&=\E[T^*]\cdot\frac{1}{1-p}  \\
&= \frac{p(|\N|+1)}{(1-p)^2}\sum_{j=1}^{|\N|+1}\frac{(1-p)^j}{j} - \frac{p}{1-p}.
\end{split}
\end{align}
\vspace{-2pt}
}

\normalsize
We now compare the number of evaluations required by \emph{s}-PLS and \emph{r}-PLS via examining their ratio.

\footnotesize{
\vspace{-2pt}
\begin{align}
\begin{split}
\frac{\mathbb{E}[T_{\text{s-PLS}}]}{\mathbb{E}[T_{\text{r-PLS}}]}
&= \frac{\dfrac{p(|\mathcal{N}| + 1)}{(1 - p)^2} \sum_{j=1}^{|\mathcal{N}| + 1} \frac{(1 - p)^j}{j} - \dfrac{p}{1 - p}}{\dfrac{p|\mathcal{N}|}{1 - p}} \\
&= \frac{(|\mathcal{N}| + 1) \sum_{j=1}^{|\mathcal{N}| + 1} \frac{(1 - p)^j}{j} - (1 - p)}{|\mathcal{N}| (1 - p)} \\
&= \frac{|\mathcal{N}|(1 - p) + (|\mathcal{N}| + 1)\sum_{j=2}^{|\mathcal{N}| + 1} \frac{(1 - p)^j}{j}}{|\mathcal{N}|(1 - p)} \\
&= 1 + \frac{|\mathcal{N}| + 1}{|\mathcal{N}|}\cdot\frac{\sum_{j=2}^{|\mathcal{N}| + 1} \frac{(1 - p)^j}{j}}{1 - p} > 1
\label{Eq:main_ratio}
\end{split}
\end{align}
}
\vspace{-2pt}
\end{proof}

The above proof shows that \emph{r}-PLS requires fewer expected evaluations than \emph{s}-PLS to find a good solution. 
This result does not distinguish between the best-improvement and first-improvement strategies within \emph{s}-PLS, as it focuses solely on identifying a good solution the algorithm first encounters.

Further looking at the ratio of the expected evaluations of the two algorithms $\frac{\E[T_{s\text{-PLS}}]}{\E[T_{r\text{-PLS}}]}$, the last part of the second term $\frac{\sum_{j=2}^{|\N|+1}\frac{(1-p)^j}{j}}{1-p}$ can be rewritten as $\sum_{j=2}^{|\N|+1}\frac{(1-p)^{(j-1)}}{j}$. This value increases with increasing $(1-p)$ (decreasing $p$), indicating that when good neighbours are abundant, the difference between \emph{s}-PLS and \emph{r}-PLS becomes even greater. 
This result confirms the earlier observations that the performance gap between \emph{r}-PLS and \emph{s}-PLS is more pronounced in the early stages of the search (see Figures~\ref{Fig:hv_2}), when it is easier to find good neighbours around the current solutions.

Noted that in single-objective optimisation, there is a trade-off between randomised and systematic LS, called Best-from-Multiple-Selections (BMS)~\cite{Cai2015balance}. BMS extends the idea of random neighbour sampling by drawing $k$ neighbours of a solution and retaining the best.
Under the probabilistic model adopted here, BMS acts as a natural generalisation of \emph{r}-PLS (which corresponds to $k=1$). The expected number of evaluations of BMS becomes $\E[T_{r-\text{PLS}}]\cdot(1+O(k/|\N|))$ (see Appendix~\ref{apx:bms}), slightly worse than that of \emph{r}-PLS. 

It is worth pointing out that the theorem presented above is a \emph{time-to-next} comparison. \emph{r}-PLS, under the same distribution of the number of good neighbours, is always faster than \emph{s}-PLS in finding a new good solution. It is not an \emph{end-of-run} analysis and thus does not imply that the final solutions obtained by \emph{r}-PLS -- once the search terminates (under a sufficient time) -- are necessarily better than those produced by \emph{s}-PLS.
In fact, as the search progresses, the neighbourhood structure of solutions continues to change, and good neighbours become increasingly scarce. Since \emph{r}-PLS finds good solutions more quickly, the underlying distribution may shift more rapidly (i.e., an increasing $p$ in the geometric distribution), hence potentially making it more difficult to discover a new good solution. That said, despite that here we cannot tell which algorithm is better in the end, \emph{r}-PLS is much faster than \emph{s}-PLS in the early stages of the search when the distribution is similar, particularly on large-scale problems, where a larger neighbourhood size $|\N|$ leads to a greater ratio (Eq.~\ref{Eq:main_ratio}). This is echoed by the results on the 200D and 500D problems (see Appendix~\ref{apx:results}).

\section{Conclusion}
This work has demonstrated, both empirically and theoretically, that among two standard multi-objective local search approaches, the randomised variant is more efficient than the systematic one. We thus recommend employing randomised local search in multi-objective optimisation when search efficiency is a priority, especially in large-scale problems.

The paper experimentally found that the number of good neighbours encountered during the PLS search follows a tail distribution (specifically, a geometric distribution) rather than being uniform or Gaussian-like. This observation aligned with the general pattern in many optimisation problems, where higher-quality solutions are rarer -- leading to a decreasing number of good neighbours as the search progresses. However, this may not hold in certain pseudo-Boolean benchmarks~\cite{Liang2025}, such as OneMinMax~\cite{giel2006effect}, LOTZ~\cite{Laumanns2002}, OJZJ~\cite{doerr2021theoretical}. In such cases, systematic PLS may be faster.

\bibliography{aaai2026}
\bibliographystyle{plain}

\appendix
\section{}
\subsection{Multi-Objective Combinatorial Problems}\label{apx:mocop}

We consider four MOCOPs, the multi-objective 0/1 knapsack ~\cite{teghem_multi-objective_1994}, travelling salesman problem (TSP)~\cite{ribeiro2002study}, quadratic assignment problem (QAP)~\cite{knowles2003instance} and NK-landscapes~\cite{Aguirre2004}. 
Each problem was instantiated in three sizes (100, 200 and 500 variables). 

\vspace{4pt}
\noindent\textbf{Multi-Objective 0-1 Knapsack Problem (Knapsack).}
The multi-objective 0-1 knapsack problem~\cite{teghem_multi-objective_1994} is a widely studied MOCOP.  
Given a set of $D$ items $\x = (x_1,x_2,\dots,x_D)\in \{0,1\}^D$, the $m$-objective problem is defined as the following.
\begin{align}
\begin{split}
\max~f_{j}{(x)} = \sum_{i=1}^{D}v_{ji}x_{i}, \; j=1,\dots,m  \quad
\text{s.t.}~ \sum_{i=1}^{D}w_{i}x_{i} \leq c 
\label{eq:Knapsack}
\end{split}
\end{align}
Here, $v_{ji} \geq 0$ is the value of the item $i$ in objective $j$, $w_{i}$ is the item's weight, and $c = \frac{1}{2}\sum w_i$ is the capacity.  
Following~\cite{li_empirical_2024}, both $v_{ji}$ and $w_i$ are sampled uniformly from $\{10,11,\dots,100\}$.

\vspace{4pt}
\noindent
\textbf{Multi-Objective Travelling Salesman Problem (TSP)}.
The multi-objective TSP extends the classical TSP, 
with multiple costs between each pair of cities~\cite{ribeiro2002study}, and aims to find the route minimising multiple travelling costs for visiting all the cities exactly once, returning to the start.
Formally, 
given a network $L=(V,C)$, where $V=\{v_{1},\dots,v_{D}\}$
is a set of $D$ nodes and $C=\{C_{j}: j\in \{1,\dots,m\}\}$ is a set of $m$ cost matrices
between nodes $(C_{j}: V \times V)$, the problem is to find the Pareto optimal set of Hamiltonian
cycles that minimise each of the $m$ cost objectives.

\vspace{4pt}
\noindent\textbf{Multi-Objective Quadratic Assignment Problem (QAP).}
The multi-objective QAP~\cite{knowles2003instance} models facility-location assignments with multiple flow types.  
Given $m$ cost matrices $[C_{1,i,j}],\dots,[C_{m,i,j}]$ and a distance matrix $[L_{u,v}]$, a solution is a permutation $x = (x_1, ..., x_D)$ where $x_i$ denotes the location of facility $i$.  
The problem is defined as the following.
\begin{align}
\min~f_{k}(x) = \sum_{i=1}^{D}\sum_{j=1}^{D}C_{k,i,j}L_{x_i,x_j},\quad k=1,\dots,m 
\label{eq:QAP}
\end{align}

\vspace{4pt}
\noindent\textbf{Multi-Objective NK-Landscape.}
The multi-objective NK-landscapes~\cite{Aguirre2004} are widely used due to their tunable ruggedness~\cite{Verel2013}.
Here, $N$ represents the length of the bit-string and $K$ represents the epistasis degree (i.e., each variable is influenced by $K$ other variables, collectively referred to as its locus).
For consistency, we denote the length of the bit-string as $D$.
Then, in a $m$-objective NK-landscape problem, each objective $f_j$ is defined as:
\begin{align}
\begin{split}
\max f_j(x)=\frac{1}{D}\sum_{i=1}^{D}c_{ij}(x_i, x_{k_{ij1}}, ..., x_{k_{ijK}}),\;j=1,\dots,m.
\label{eq:NK-landscape}
\end{split}
\end{align}
\noindent
Where $c_{ij}$ represent the fitness contribution of the $i$-th variable, influenced by $K$ other variables in its locus that collectively decide its contribution to the $j$-th objective. Each $c_{ij}$ depends on the values of the $i$-variable and the variables in its locus, resulting in $2^{K+1}$ possible combinations of input and the corresponding output values. Each output is randomly sampled from $(0,1]$. Following~\cite{Aguirre2007,Daolio2015}, the $K$ other variables of a variable's locus are drawn independently and uniformly at random for each $i$ (variable) and $j$ (objective), resembling a random epistasis pattern.

\begin{figure*}[!ht]
\vspace{-5pt}
\renewcommand{\arraystretch}{0.1} % Adjust the value to control spacing
\fontsize{8pt}{9.5pt}\selectfont
\begin{center}
	%\hspace*{-5pt}
    \begin{tabular}{@{}c@{}@{}c@{}@{}c@{}@{}c@{}}
	\includegraphics[scale=0.35]{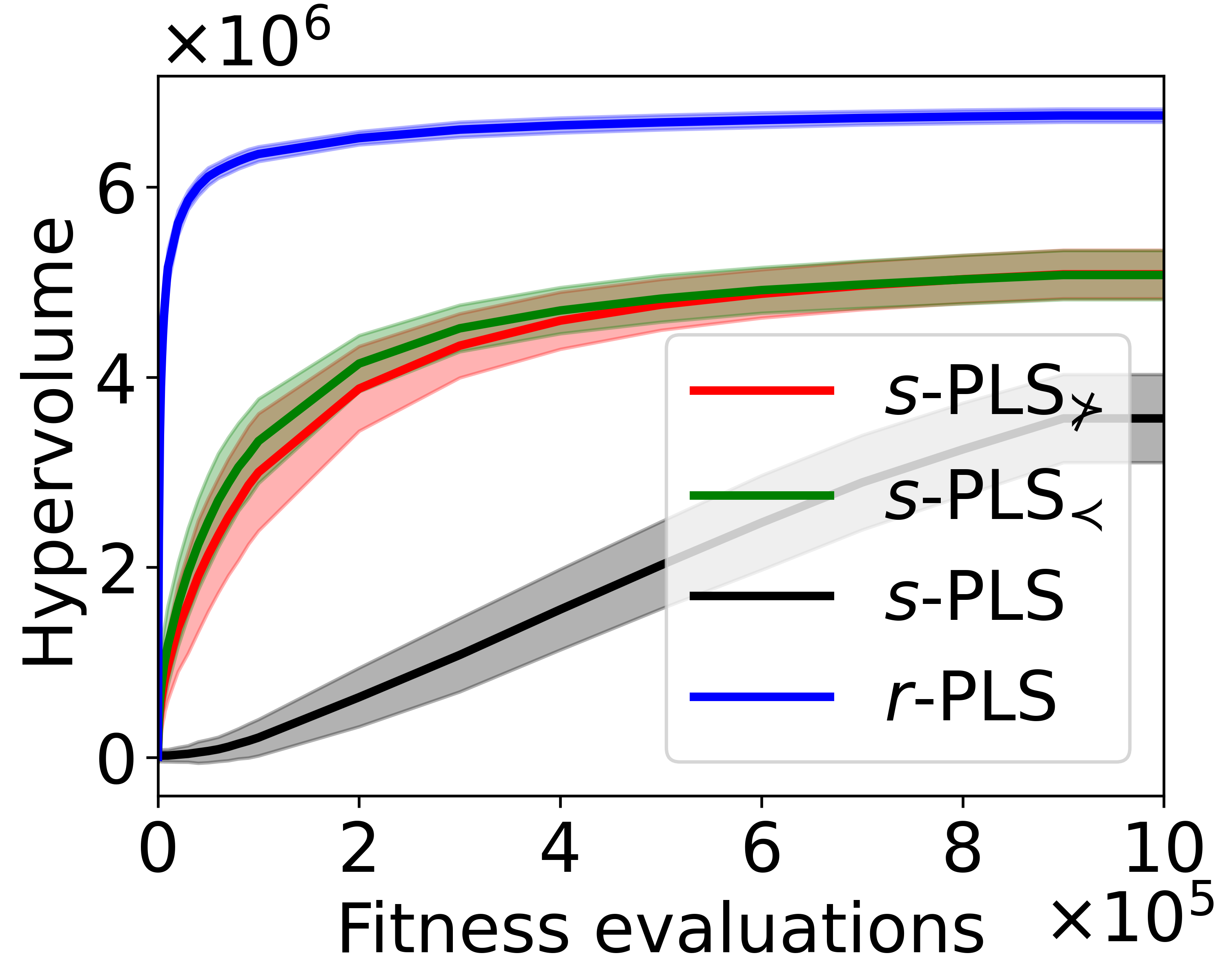} & 
    \includegraphics[scale=0.35] {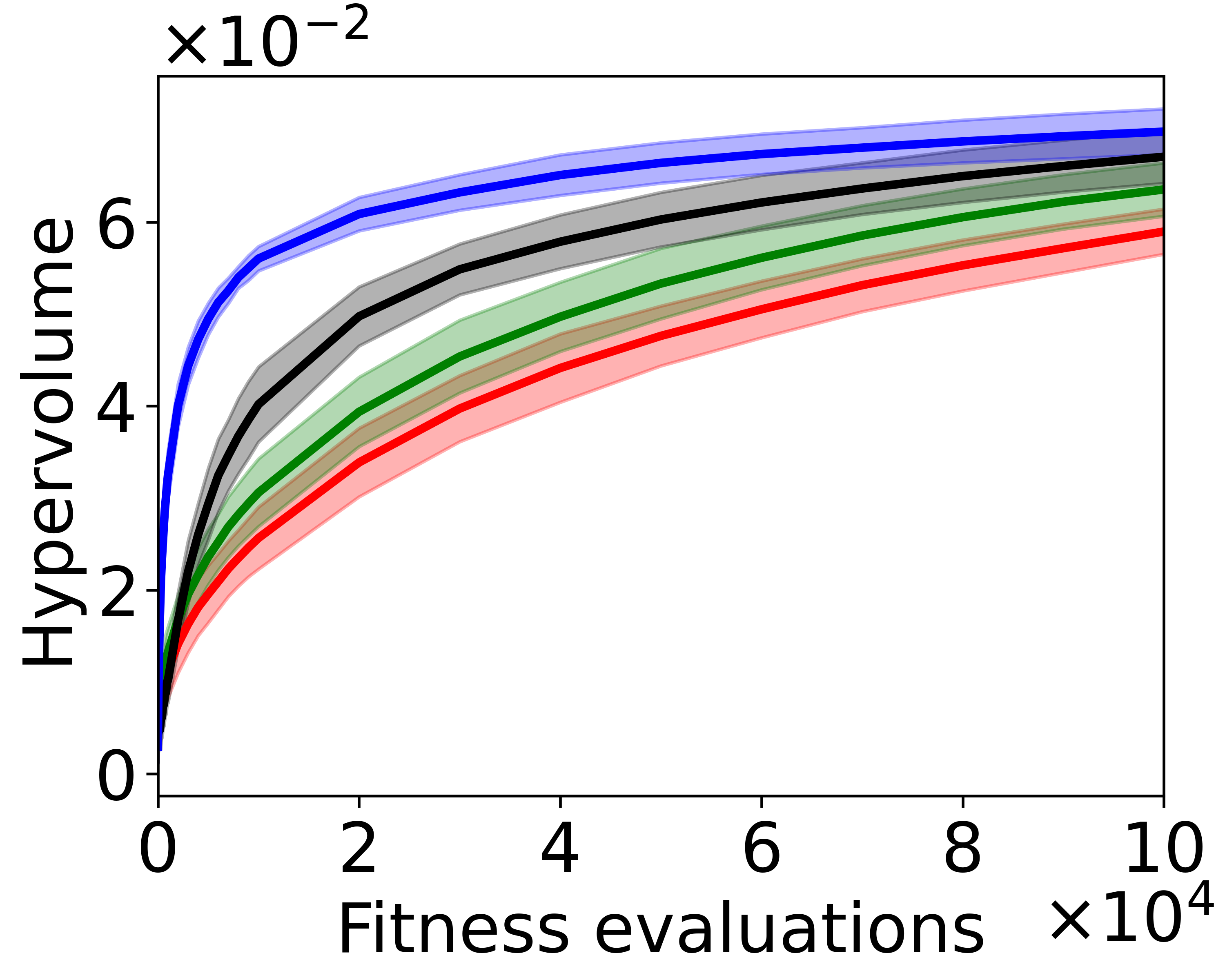} &
    \includegraphics[scale=0.35]{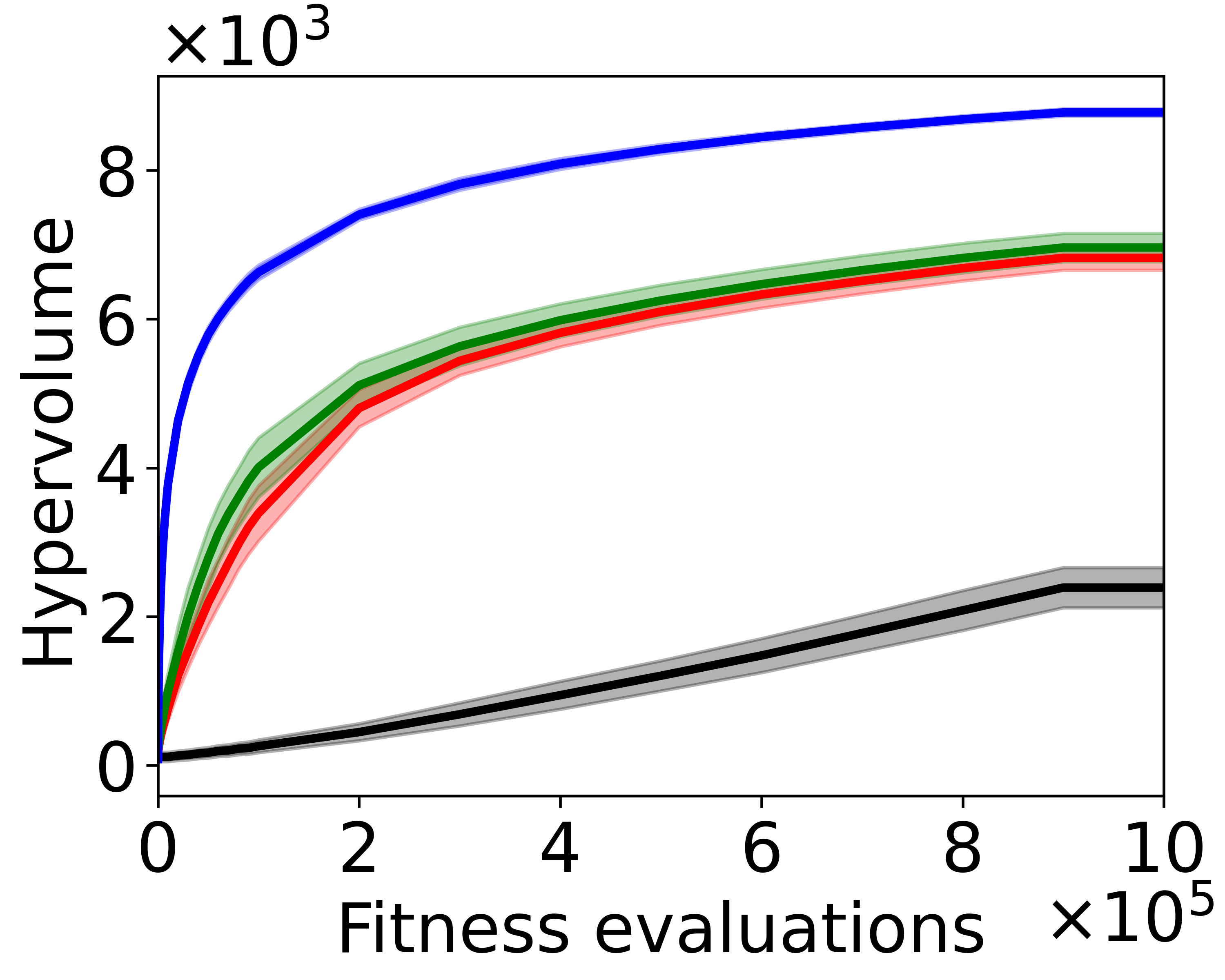}&
	\includegraphics[scale=0.35] {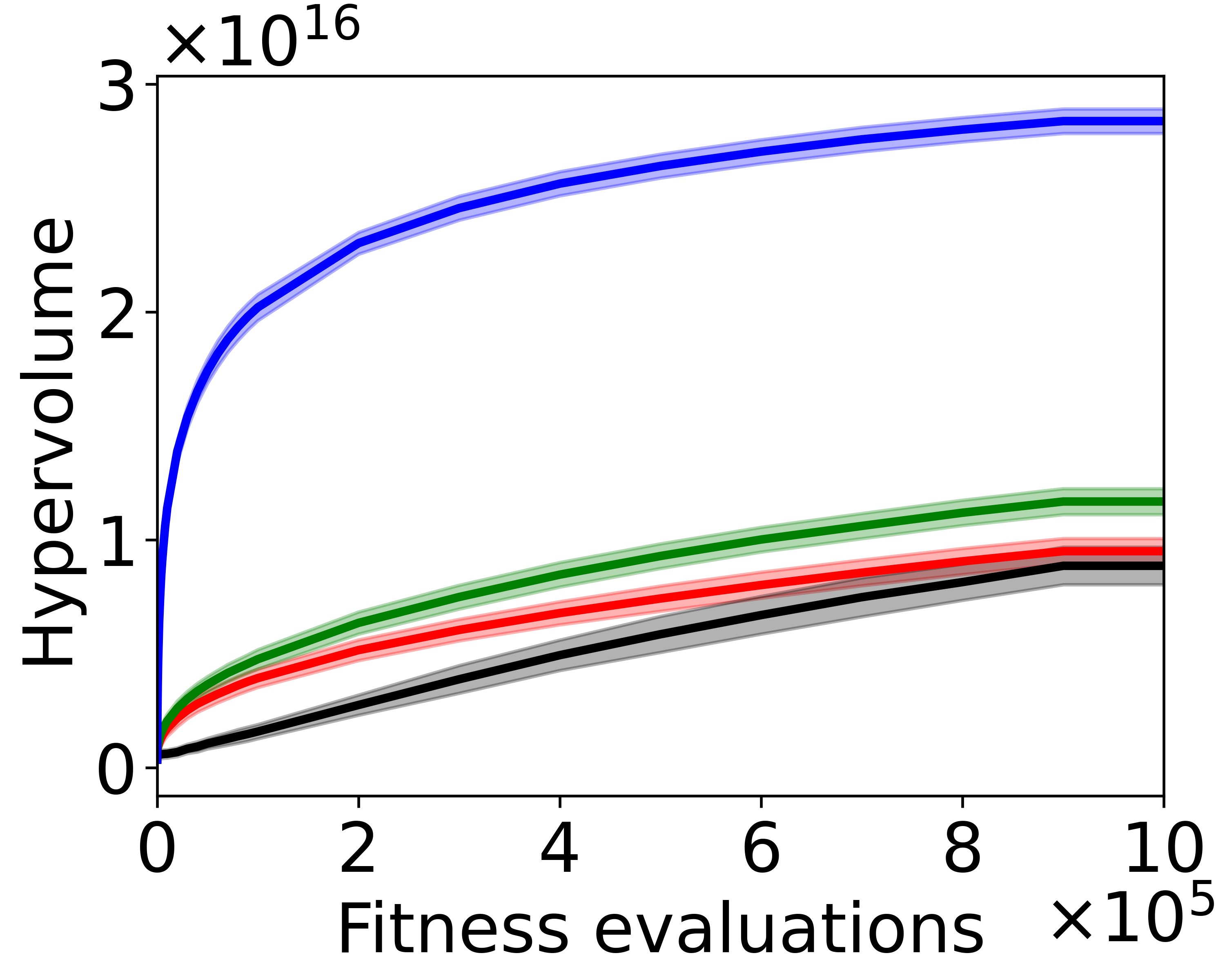} \\
	(a) Knapsack 200D &
    (b) NK-Landscape 200D &
    (c) TSP 200D  &
    (d) QAP 200D  \\	
    \includegraphics[scale=0.35]{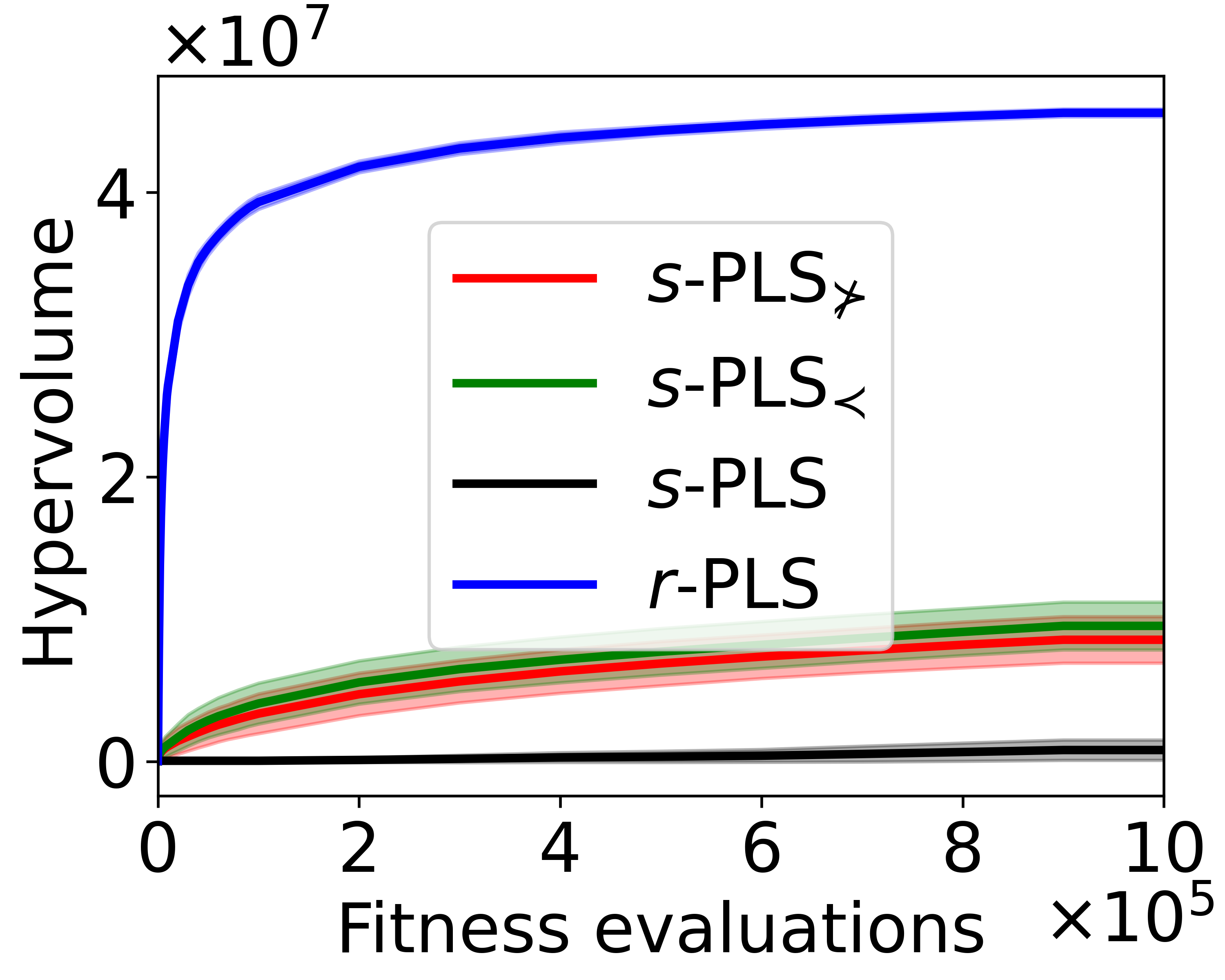} & 
    \includegraphics[scale=0.35] {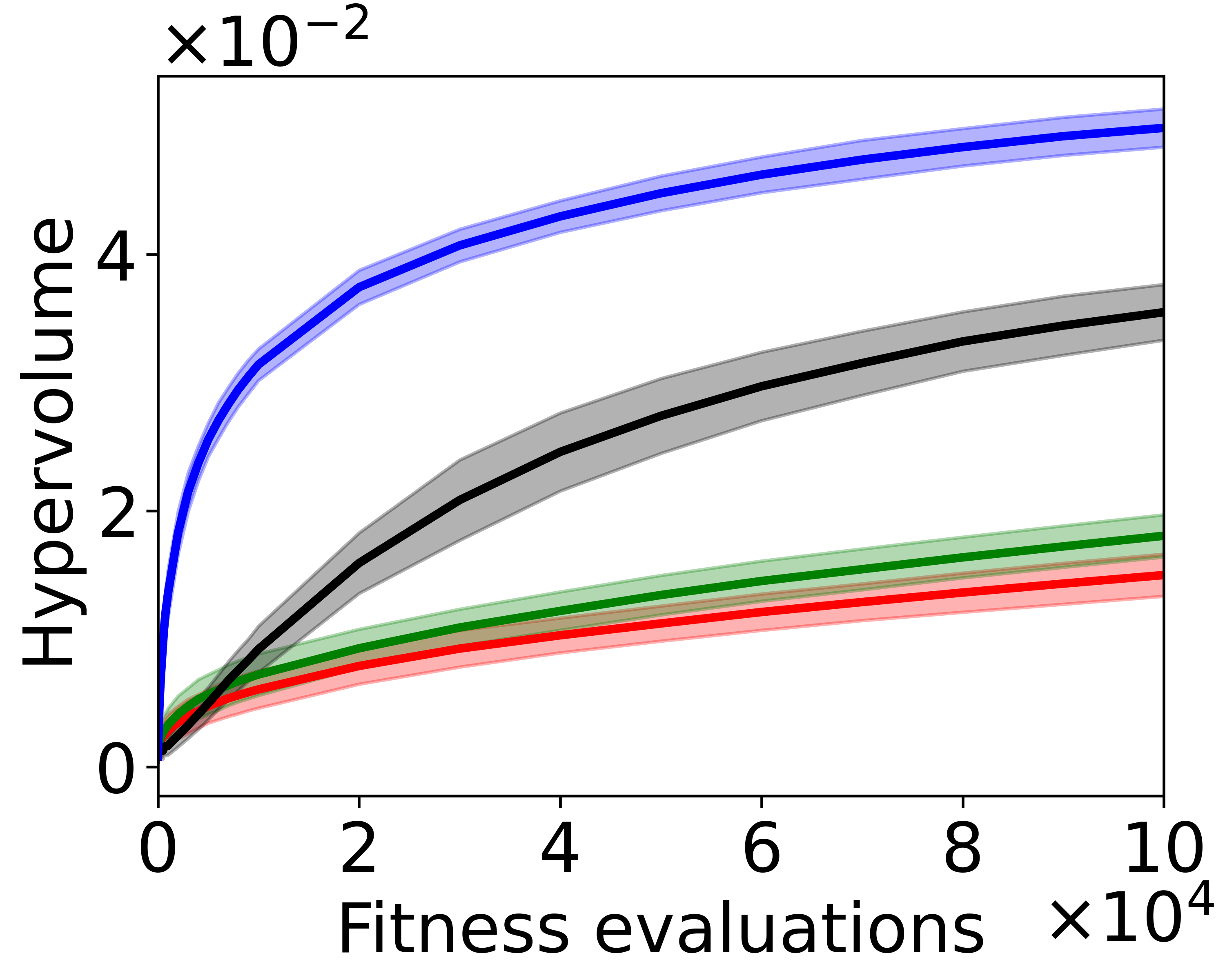}&
    \includegraphics[scale=0.35]{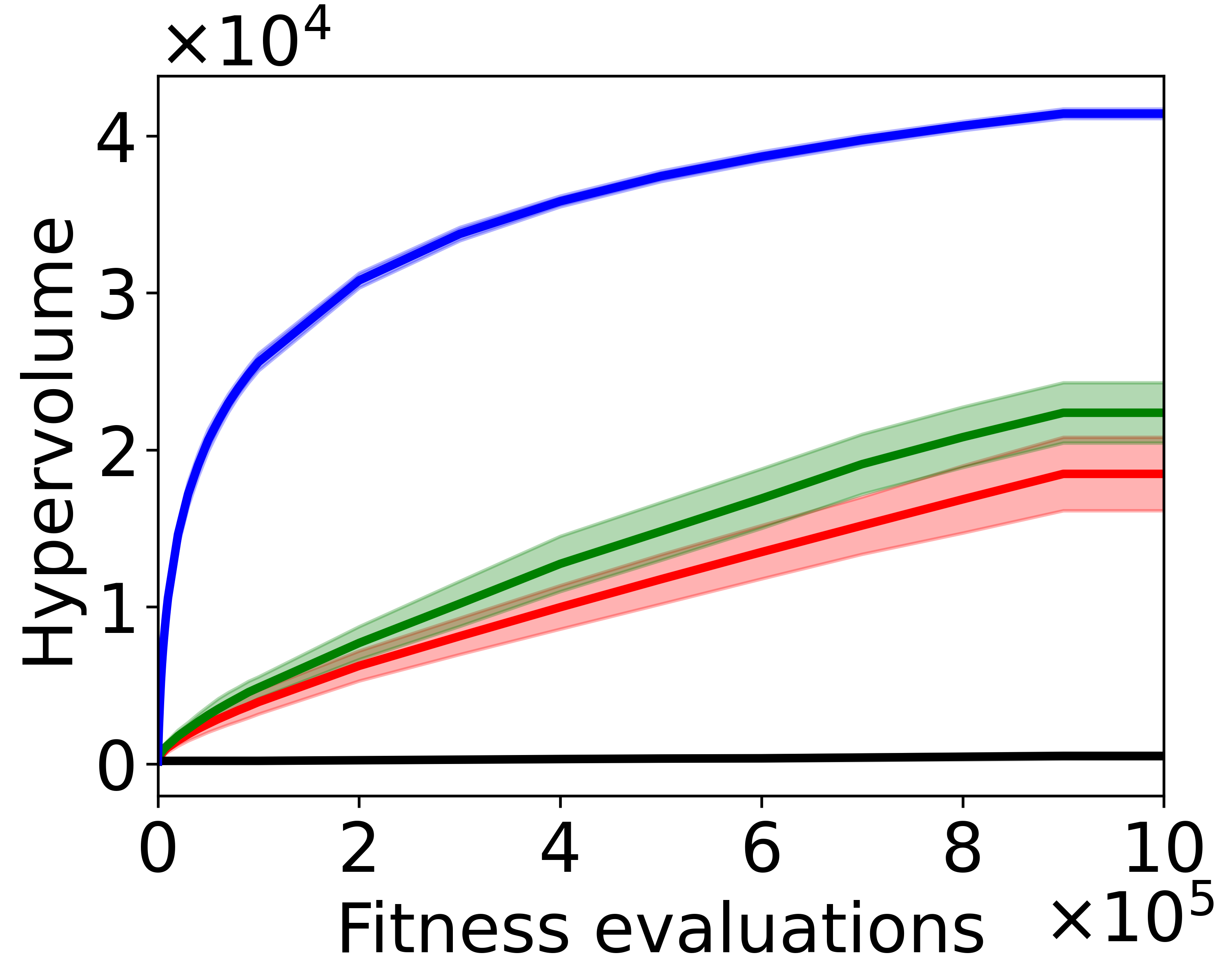}&
	\includegraphics[scale=0.35] {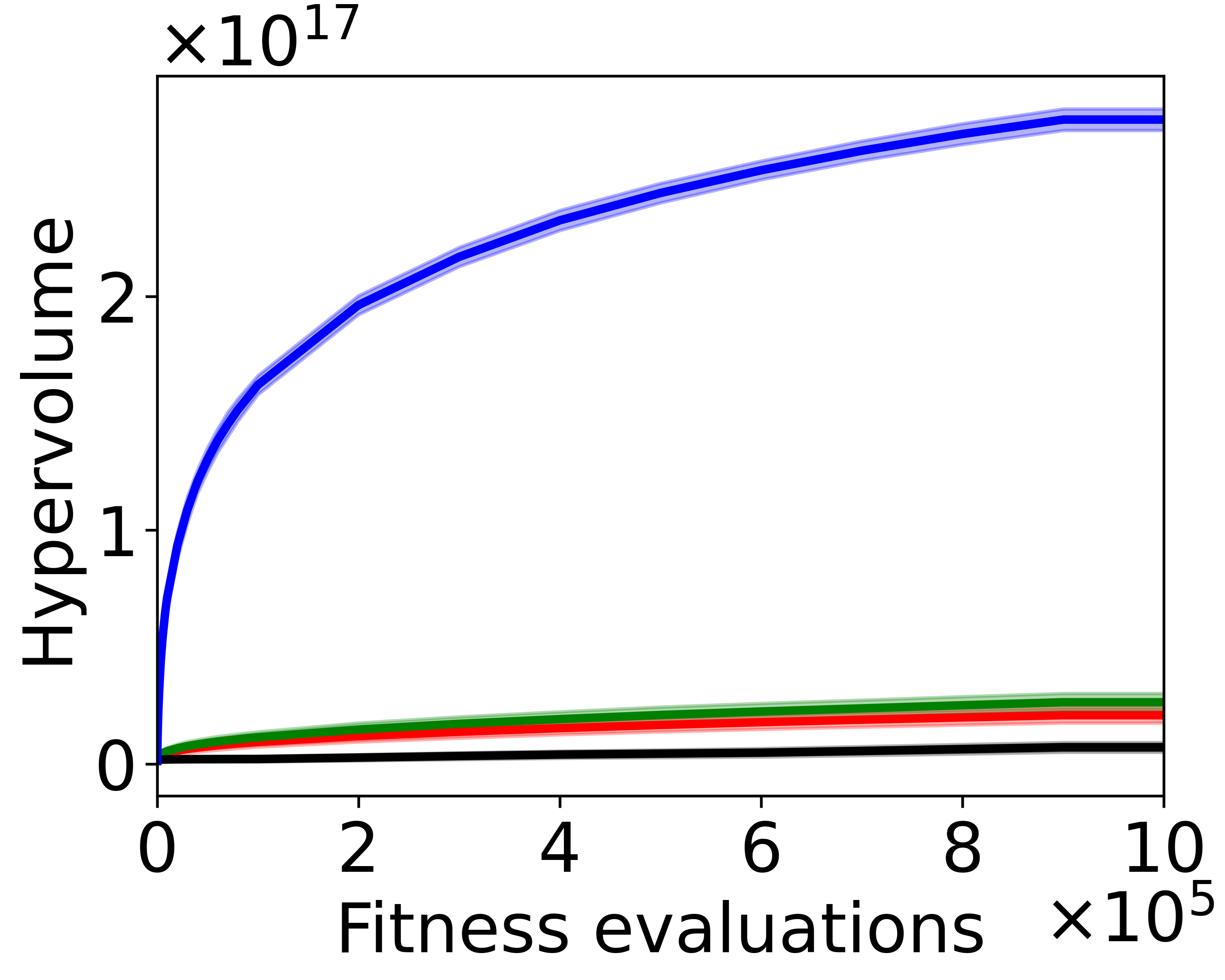}  \\
	(e) Knapsack 500D  &
    (f) NK-Landscape 500D &
    (g) TSP 500D  &
    (h) QAP 500D  \\	
	\end{tabular}
	\end{center}
        \vspace{-10pt}
	\caption{The hypervolume trajectory (higher is better) of the considered \emph{s}-PLS, \emph{s}-PLS$_\nsucc$, \emph{s}-PLS$_\prec$ and \emph{r}-PLS across 30 runs on the four MOCOPs with 200 variables (the top panel) and with 500 variables (the bottom panel). The bolded line and shaded area represent the mean and standard deviation of the hypervolume, respectively. %\miqing{may need to check the NK-Landscape plot}
    }
	\label{Fig:hv_all}
\end{figure*}

\begin{figure*}[!h]
	\vspace{-5pt}
	\renewcommand{\arraystretch}{0.1} % Adjust the value to control spacing
	\fontsize{8.5pt}{10pt}\selectfont
	\begin{center}
		%\hspace*{-5pt}
        \begin{tabular}{@{}c@{}@{}c@{}@{}c@{}@{}c@{}}
			\includegraphics[scale=0.35]{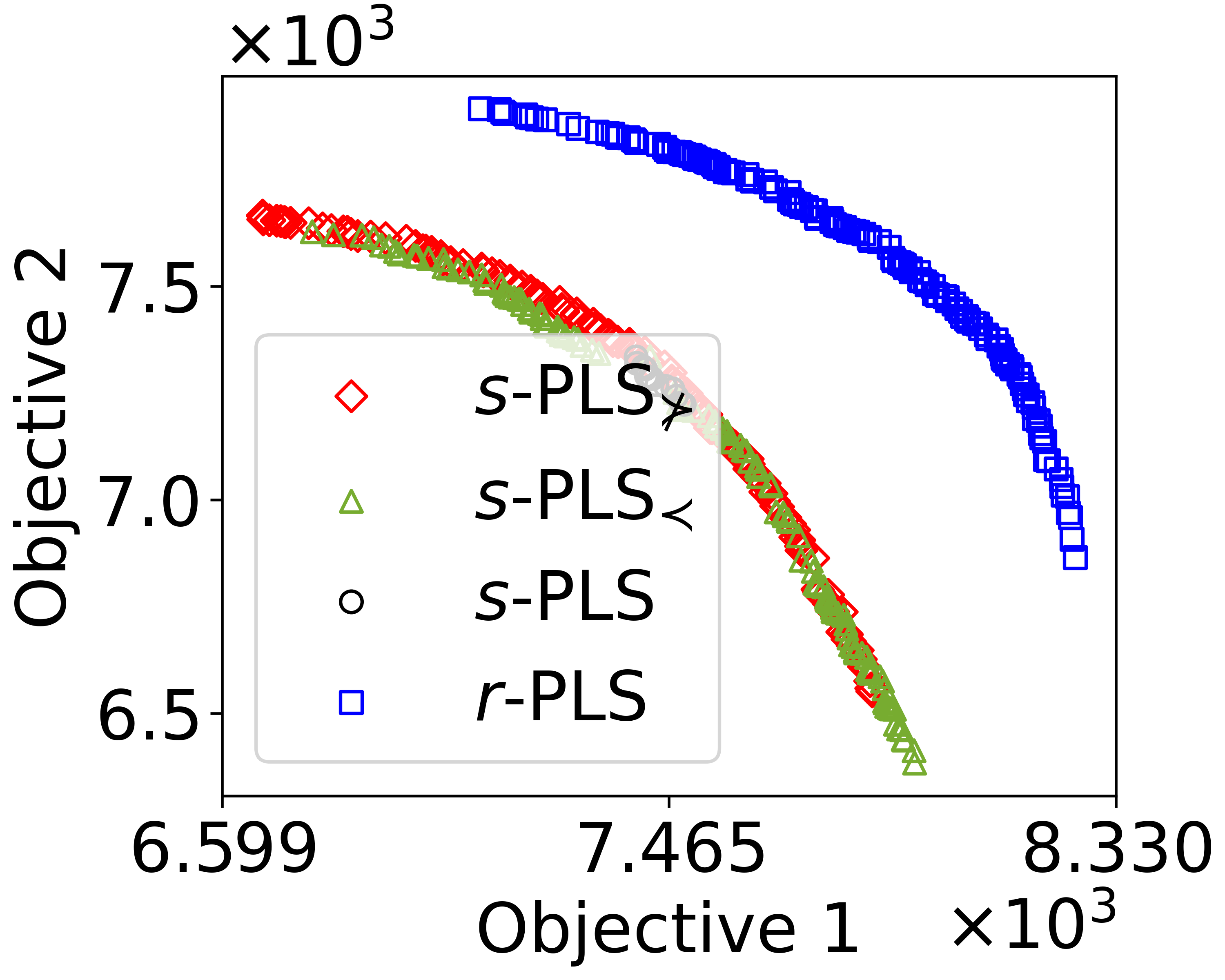} & 
			\includegraphics[scale=0.35] {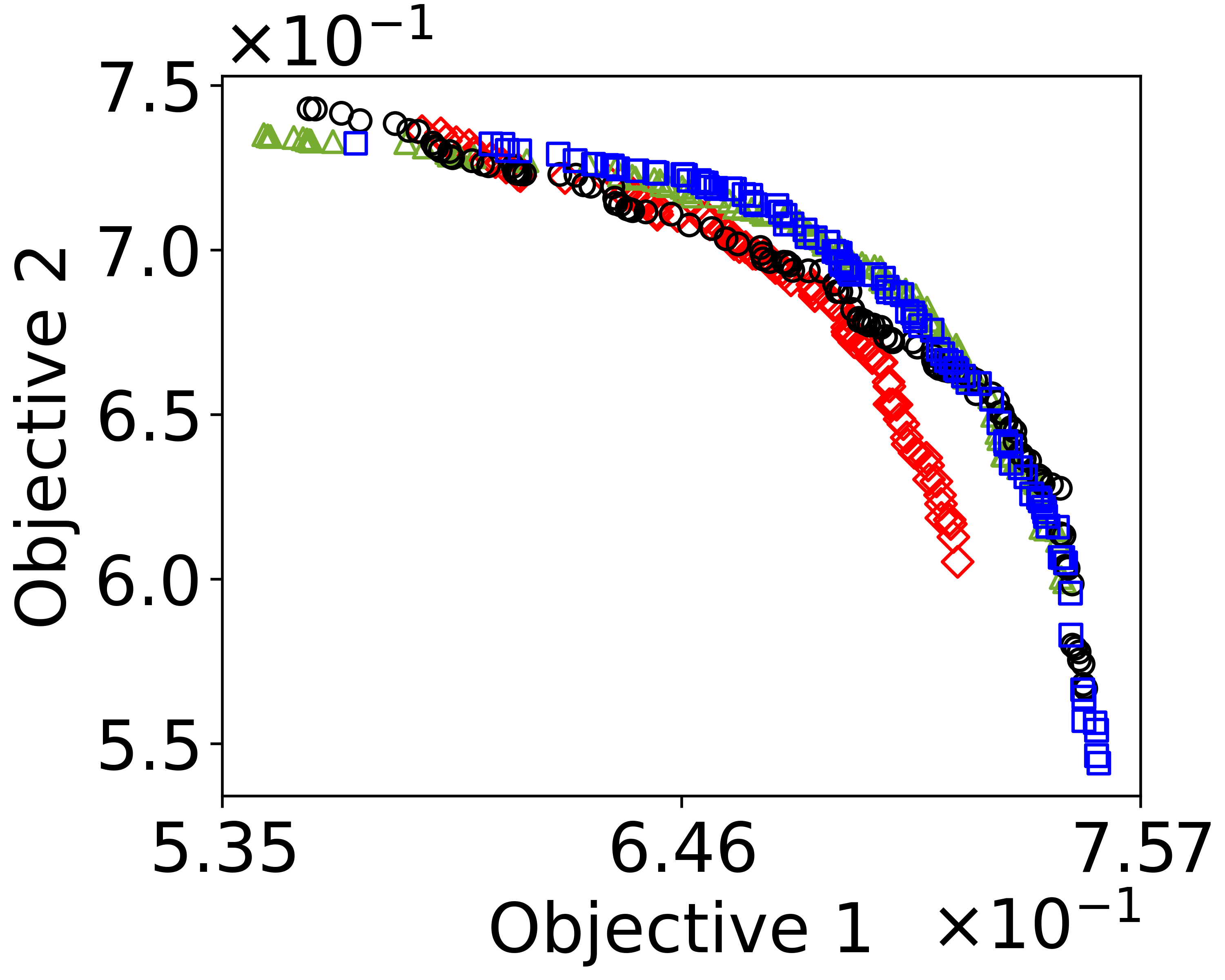}&
			\includegraphics[scale=0.35]{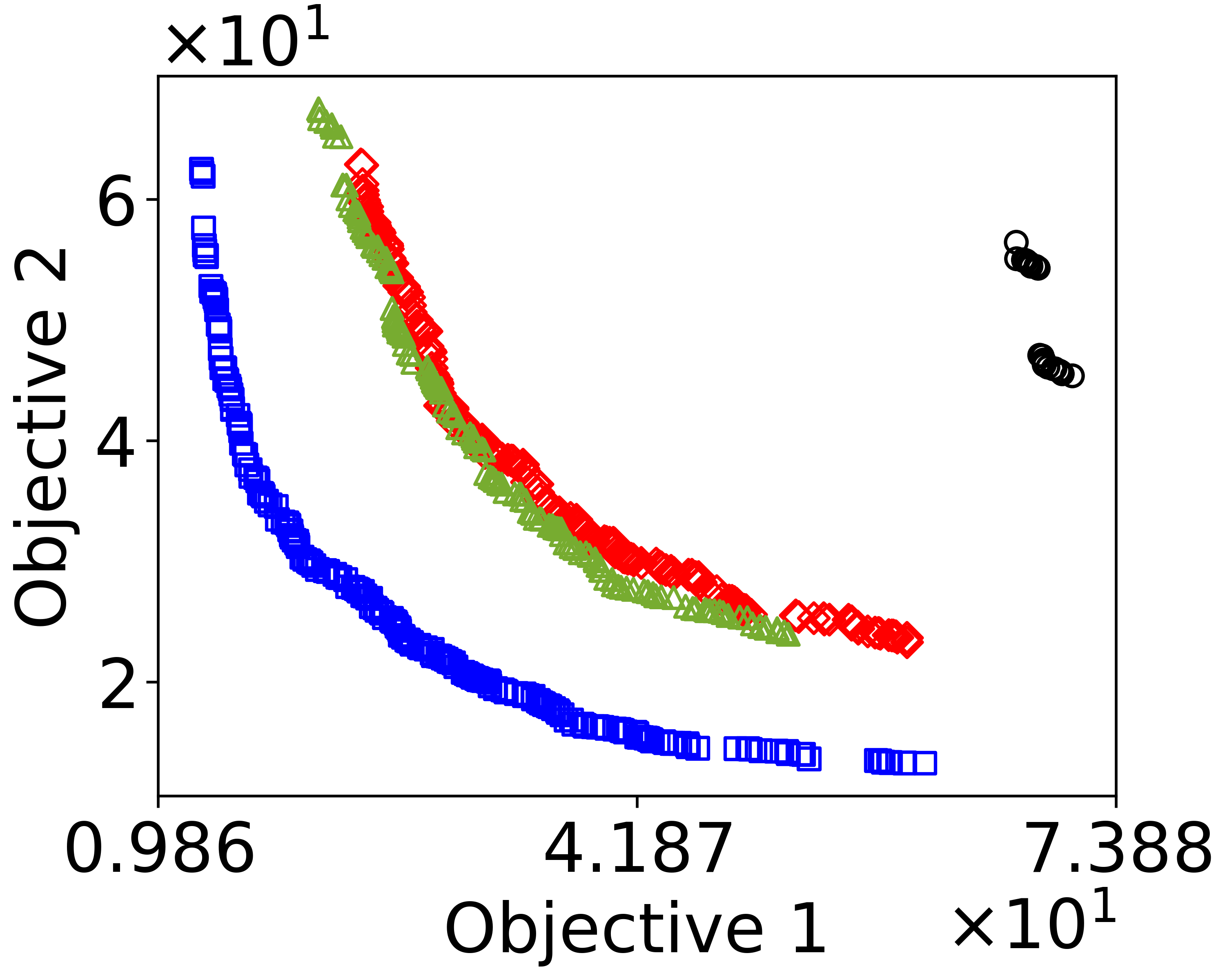}&
			\includegraphics[scale=0.35] {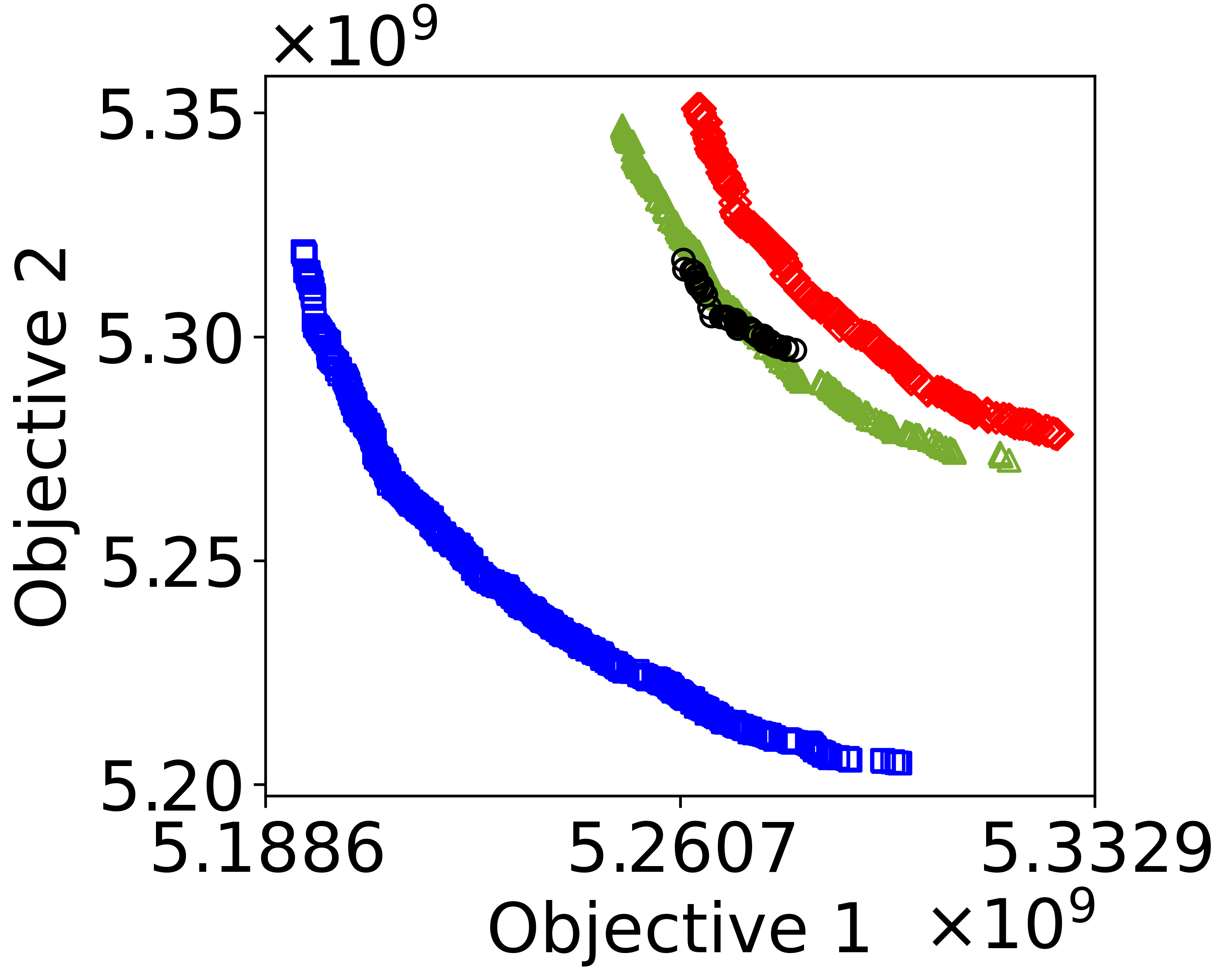}  \\
	(a) Knapsack 200D &
        (b) NK-Landscape 200D &
        (c) TSP 200D  &
        (d) QAP 200D  \\	
			\includegraphics[scale=0.35]{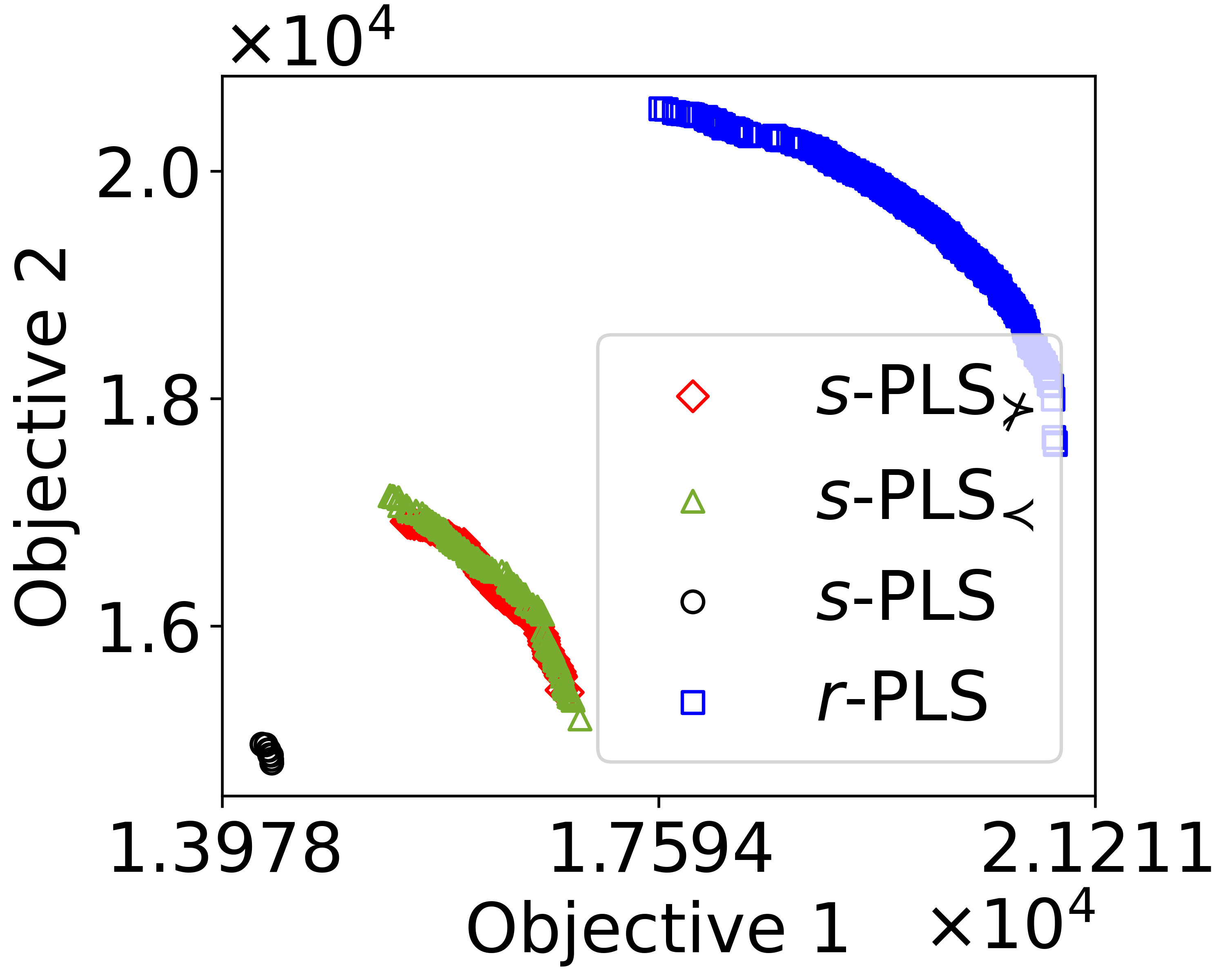} & 
			\includegraphics[scale=0.35] {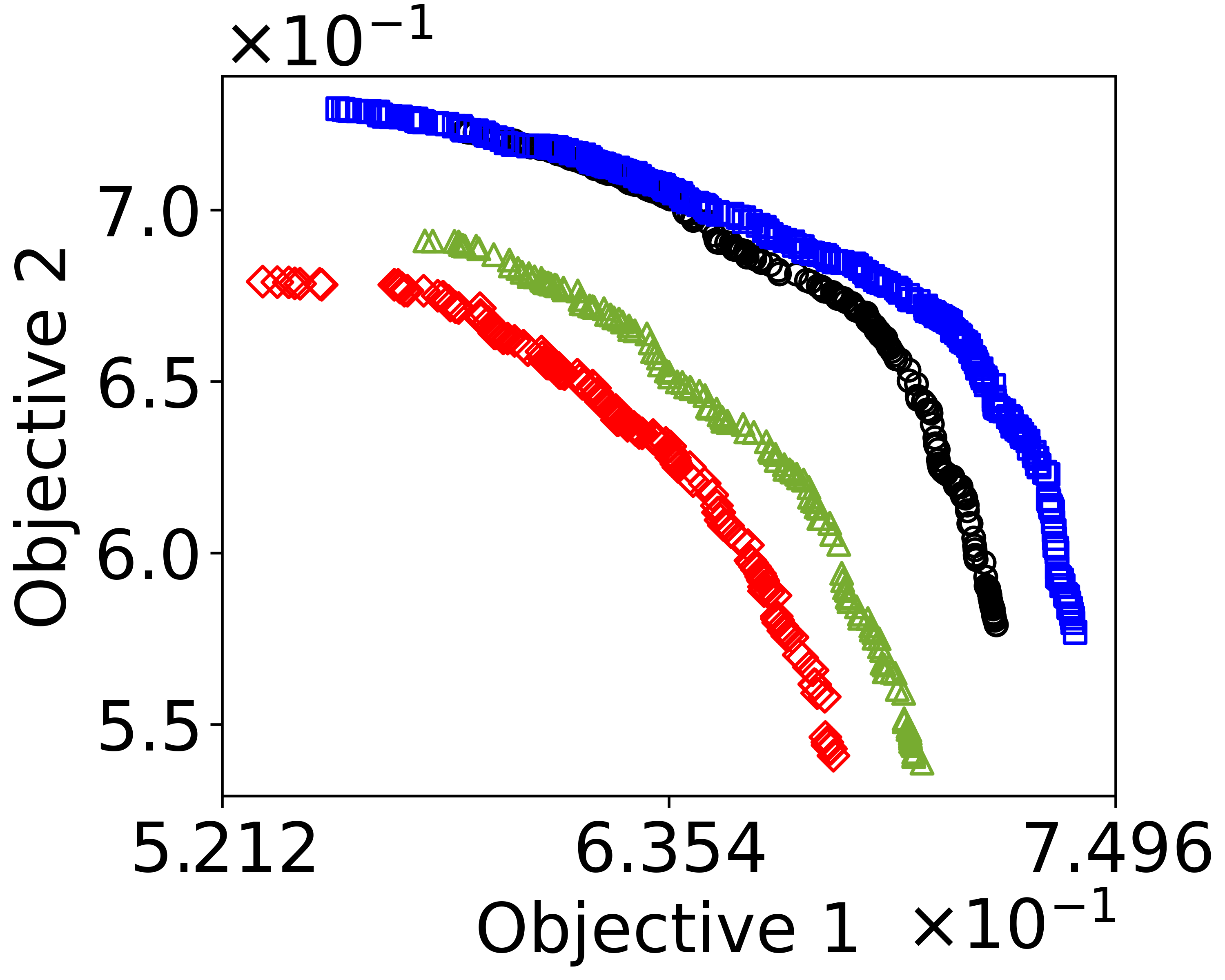} &
			\includegraphics[scale=0.35]{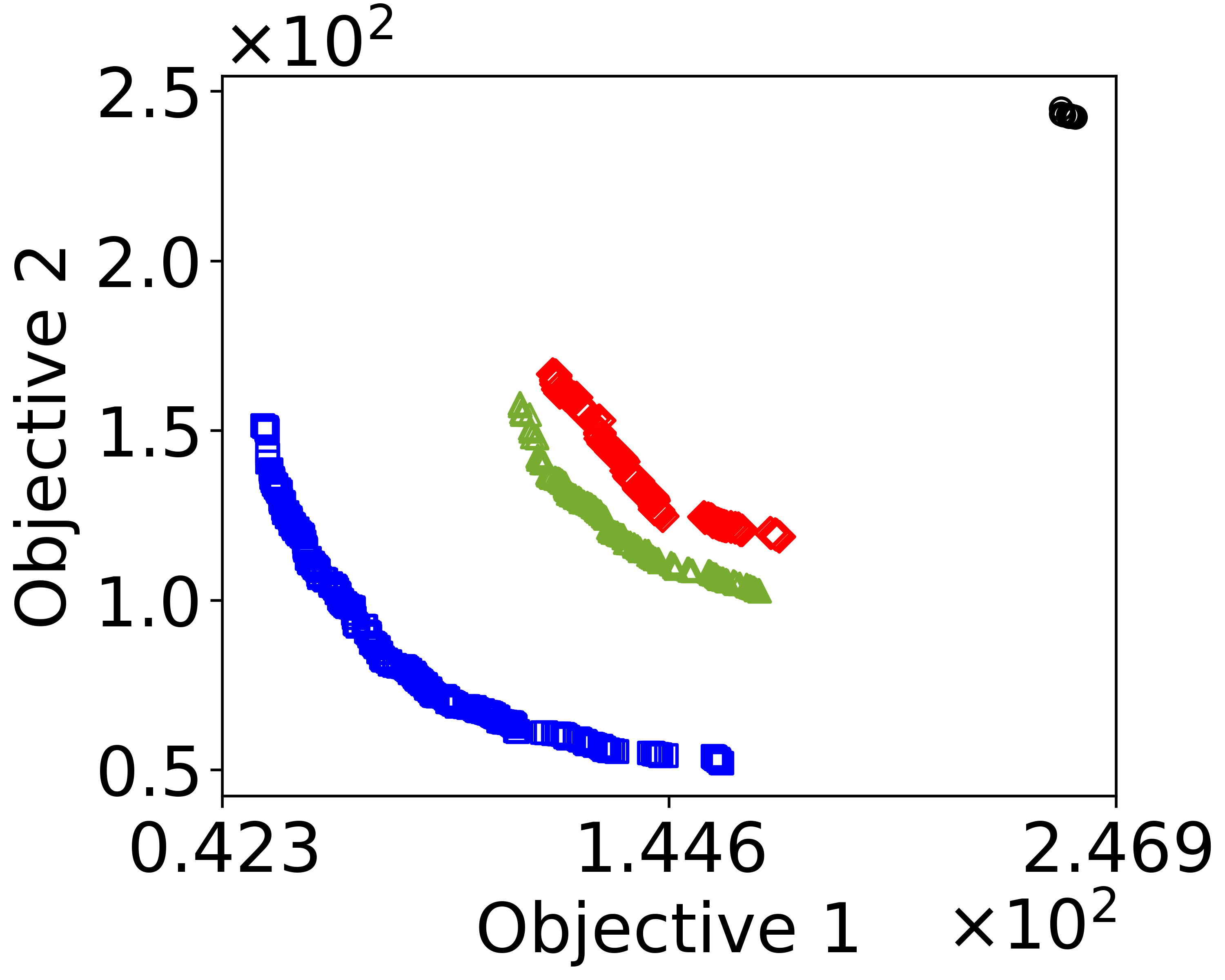}&
			\includegraphics[scale=0.35] {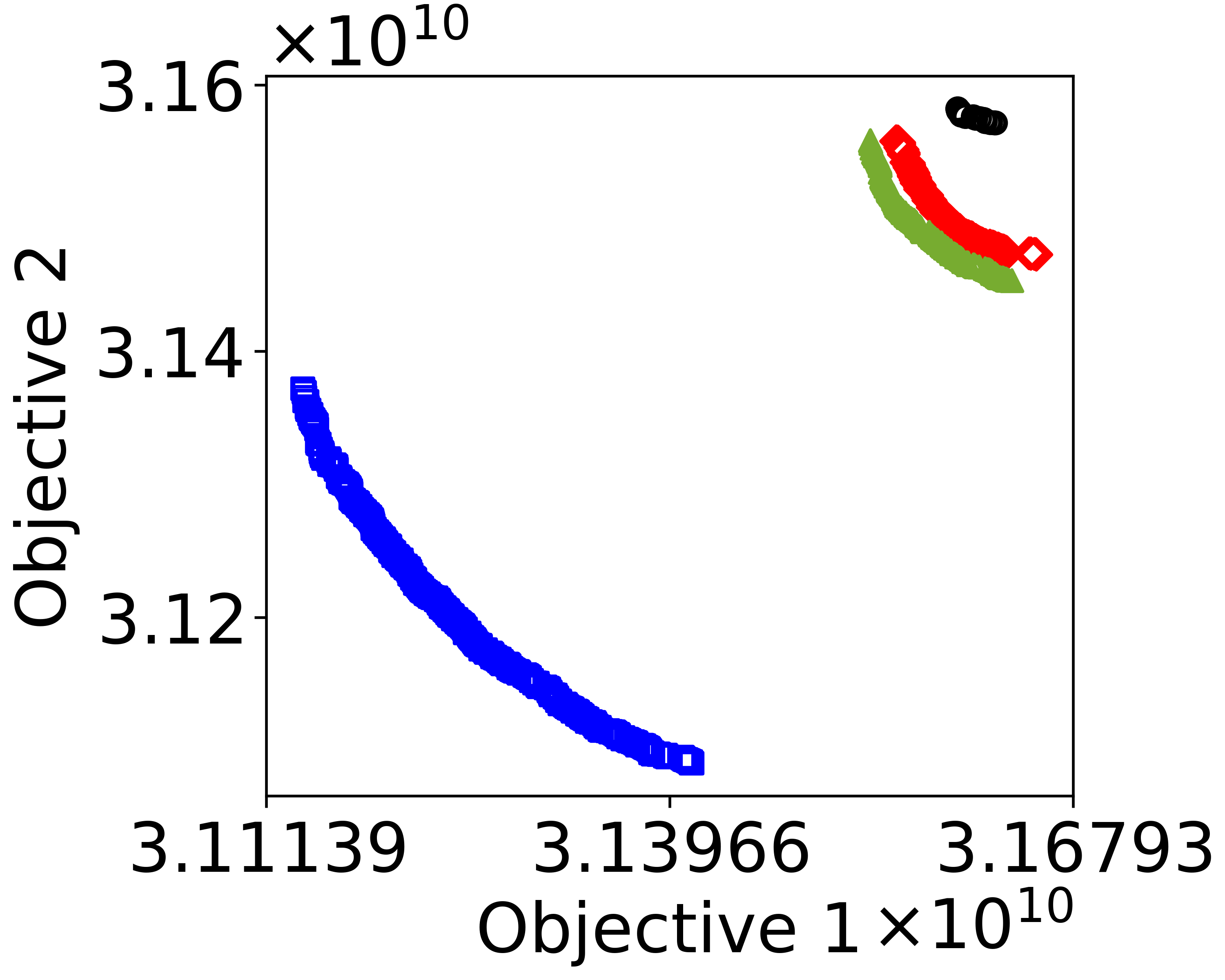} \\
	(e) Knapsack 500D  &
        (f) NK-Landscape 500D &
        (g) TSP 500D  &
        (h) QAP 500D  \\	
		\end{tabular}
	\end{center}
	\vspace{-10pt}
	\caption{All non-dominated solutions (i.e., solutions in the archive) obtained by the considered \emph{s}-PLS, \emph{s}-PLS$_\nsucc$, \emph{s}-PLS$_\prec$ and \emph{r}-PLS in a typical run on the four MOCOPs with 200 variables (the top panel) and with 500 variables (the bottom panel), where the Knapsack and NK-Landscape are maximisation problems, and the TSP and QAP are minimisation problems. 
    }
	\label{Fig:obj}
\end{figure*}

\subsection{Experimental Settings}\label{apx:setting}
In our experiments, for each MOCOP we consider three problem sizes (100, 200, and 500). 
It is important to note that \emph{r}-PLS and \emph{s}-PLS typically use different stopping conditions: \emph{r}-PLS runs until the evaluation budget is exhausted, whereas \emph{s}-PLS may terminate earlier when there are no more unexplored solutions in the archive.
To ensure fairness, all algorithms start from the same initial solution. For some situation, \emph{s}-PLS can terminate rather early, for example, within $10^5$ evaluations. In such cases, we stop \emph{r}-PLS when \emph{s}-PLS terminates. 
All algorithms are implemented in Java using the jMetal platform~\cite{durillo_jmetal_2011}. For each problem instance, each algorithm is executed over 30 independent runs, with each run performed on a separate core of an AMD Ryzen R9-7945HX CPU, with 16~GB of memory allocated per run.

Neighbourhood operators are matched to the encoding of the variables for each problem:  
NK-landscape uses 1-bit flip (binary encoding), TSP uses 2-opt (order-based permutation encoding), and QAP uses 2-swap (position-based permutation encoding)~\cite{eiben2015introduction}.
The knapsack problem requires a 2-bit flip neighbourhood, as 1-bit flip often fails to yield feasible or improved solutions near the constraint boundary, following the practice in this study~\cite{li_empirical_2024}. 

We evaluate performance using the hypervolume (HV) indicator~\cite{Zitzler1999}, with respect to a reference point estimated via random sampling. Specifically, we generate 100,000 random solutions from the decision space and only retain the non-dominated ones. According to \cite{Li2022}, we define the reference point as $r_i = max_i + (max_i-min_i)/10$ for minimisation problems, and $r_i = min_i - (max_i-min_i)/10$ for maximisation problems,
where $max_i$ and $min_i$ denote the maximum and minimum values of the non-dominated set on the $i$-th objective, respectively.
This is more suitable than considering the common practice that uses a reference point determined by the combined non-dominated set of all generated solutions, because in multi-objective combinatorial optimisation different algorithms may exhibit substantially different levels of convergence (i.e., closeness to the Pareto front)~\cite{li_empirical_2024}.

\subsection{Additional Results}\label{apx:results}

This section gives additional experimental results on the four MOCOPs with the problem sizes of 200 and 500.

Figure~\ref{Fig:hv_all} shows the average hypervolume (bolded line) and standard deviation (shaded area) of the basic \emph{s}-PLS and \emph{r}-PLS, as well as the two \emph{s}-PLS variants across 30 independent runs on the four problems.
The top row corresponds to problems with 200 variables, and the bottom row to those with 500 variables. %\miqing{did we report the number of evaluations?}\zimin{No, we did not report this in the main text to avoid emphasising this. I should add it here.}
Across all problem types, \emph{r}-PLS consistently achieves higher hypervolume values than all \emph{s}-PLS variants throughout the search. It also attains better final performance --
the final HV values of \emph{r}-PLS are significantly higher under the Wilcoxon rank-sum test at a 0.05 significance level.
On the NK-landscape problem, the performance advantage of \emph{r}-PLS is relatively smaller, while on the other three problems, \emph{r}-PLS demonstrates a clear edge since the very beginning of the search process, particularly as the problem size increases. 
It is also worth noting that \emph{s}-PLS$\prec$ and \emph{s}-PLS$\nsucc$ (first-improvement variants) are faster than the original \emph{s}-PLS (best-improvement) on most problems (except for the NK-landscape), which is in line with the previous findings~\cite{Liefooghe2012,Dubois2015}.

Figure~\ref{Fig:obj} presents the non-dominated solutions obtained by the four algorithms in a typical run on the four problems when the search ends in the Figure~\ref{Fig:hv_all}. The layout mirrors that of Figure~\ref{Fig:hv_all}, with smaller instances shown in the top row and larger ones in the bottom row.
For the Knapsack and NK-Landscape problems (maximisation), better solutions are located toward the top-right corner; for the TSP and QAP (minimisation), the desirable region lies in the bottom-left.
As illustrated, \emph{r}-PLS yields solutions with superior convergence and diversity compared to all the \emph{s}-PLS variants. The original \emph{s}-PLS, in particular, often exhibits significantly poorer spread across the objective space than the other algorithms. 

It is worth pointing out that \emph{s}-PLS exhibits poorer diversity on the TSP and QAP than on the NK-Landscape for all problem sizes (Figure~\ref{Fig:obj} here and Figure 2 in the main paper). The systematic neighbourhood exploration of \emph{s}-PLS evaluates all neighbours of a solution -- its effectiveness depends on how diverse those neighbours are. 
On the NK-Landscape, random epistatic interactions~\cite{Aguirre2004} make the objective changes of each bit-flip unpredictable, leading to distinct directions in the objective space even from a single solution. However, such diversity within a neighbourhood do not hold on the TSP and QAP.

The above results demonstrate that \emph{r}-PLS achieves better performance in both convergence and diversity, with the advantage becoming more pronounced as the problem size increases.

\begin{figure*}[!h]
	\vspace{-5pt}
	\renewcommand{\arraystretch}{0.1} % Adjust the value to control spacing
	\fontsize{8.5pt}{10pt}\selectfont
    \begin{flushright}
    \includegraphics[scale=0.28]{figures/gof_legend.png}
    \end{flushright}
	\begin{center}
    %\includegraphics[scale=0.27]{figures/gof_legend.png}
		%\hspace*{-5pt}
        \begin{tabular}{@{}c@{}@{}c@{}}
			\includegraphics[scale=0.3]{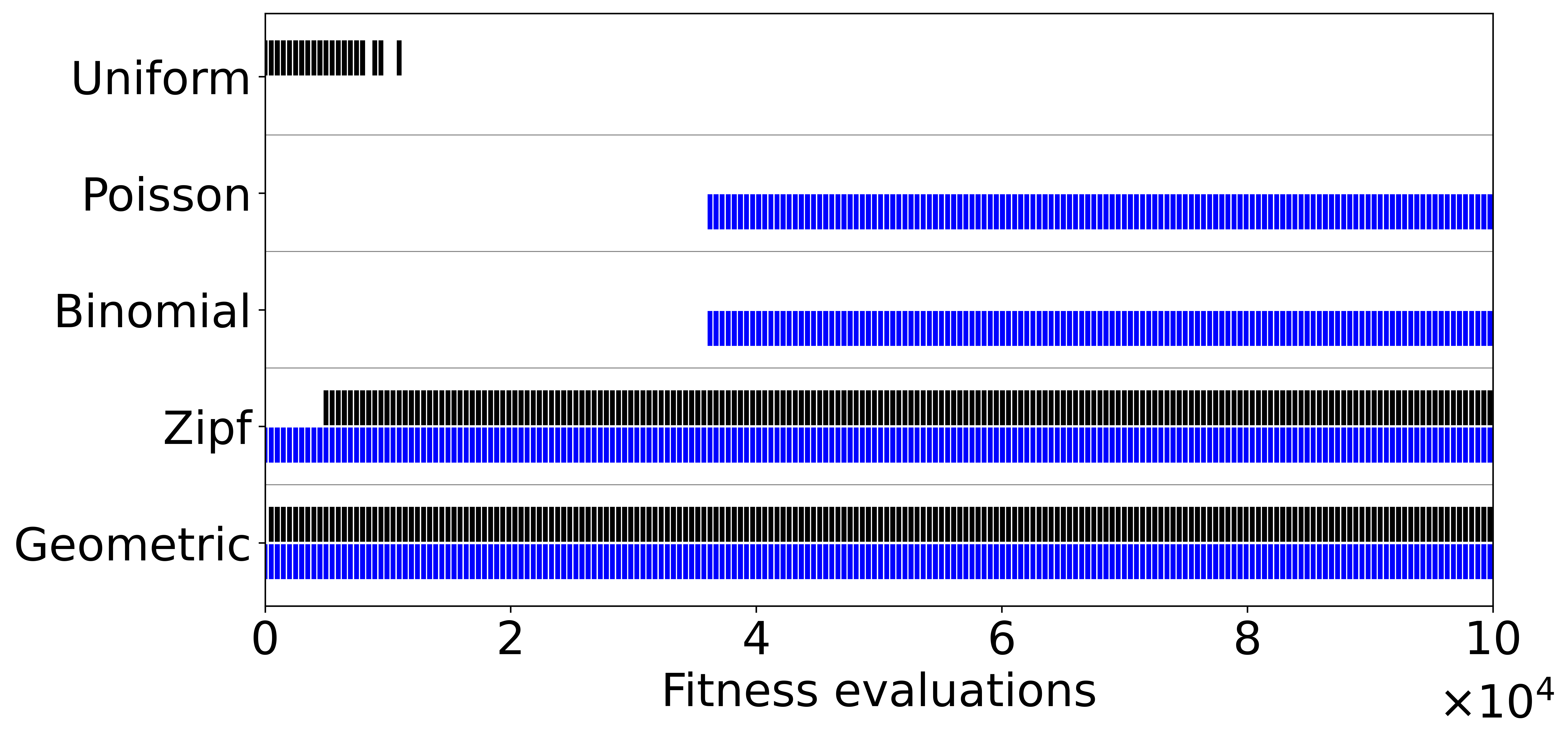} & 
			\includegraphics[scale=0.3] {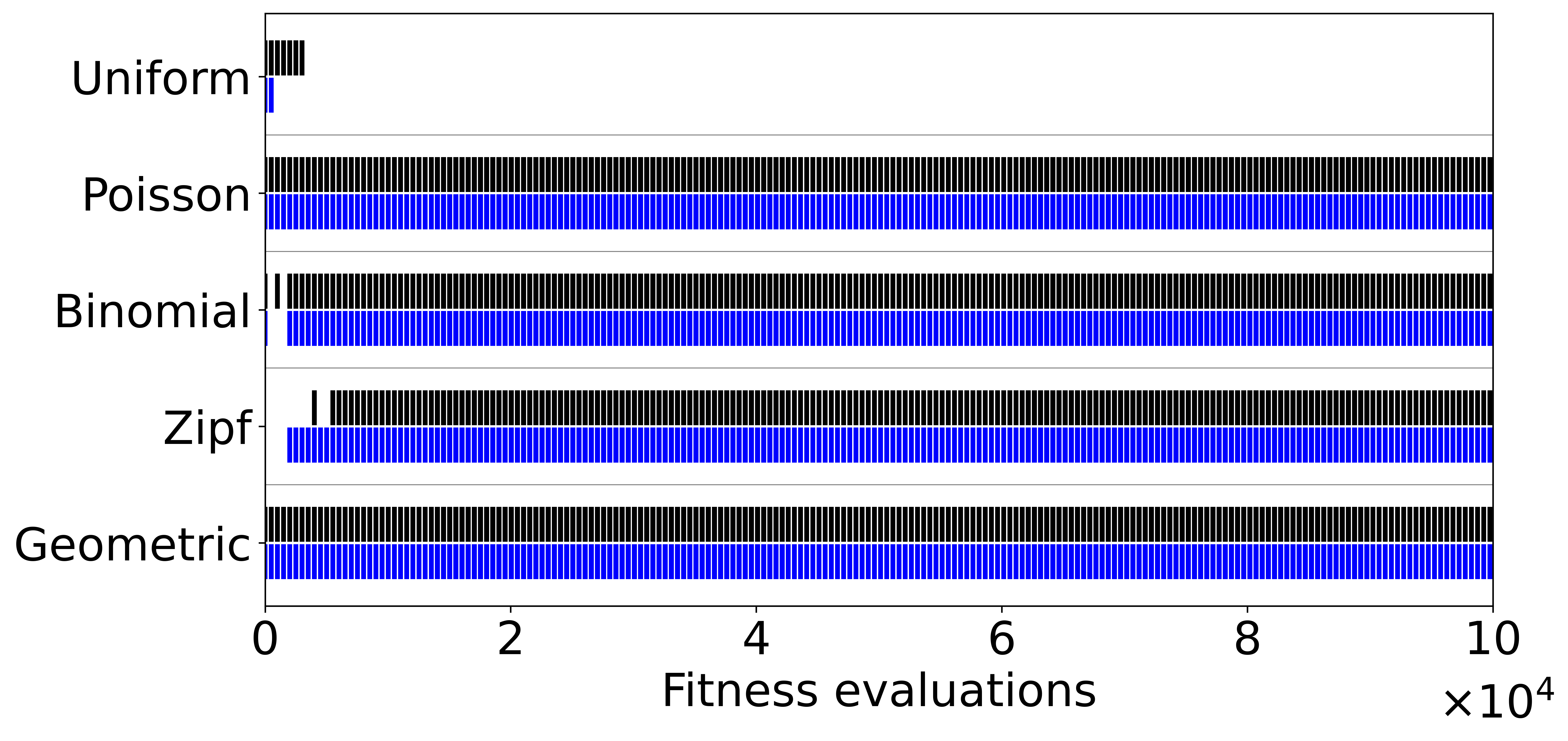} \\
            (a) Knapsack &
            (b) NK-Landscape \\
			\includegraphics[scale=0.3]{figures/TSP_100_gof_map.png} & \includegraphics[scale=0.3]{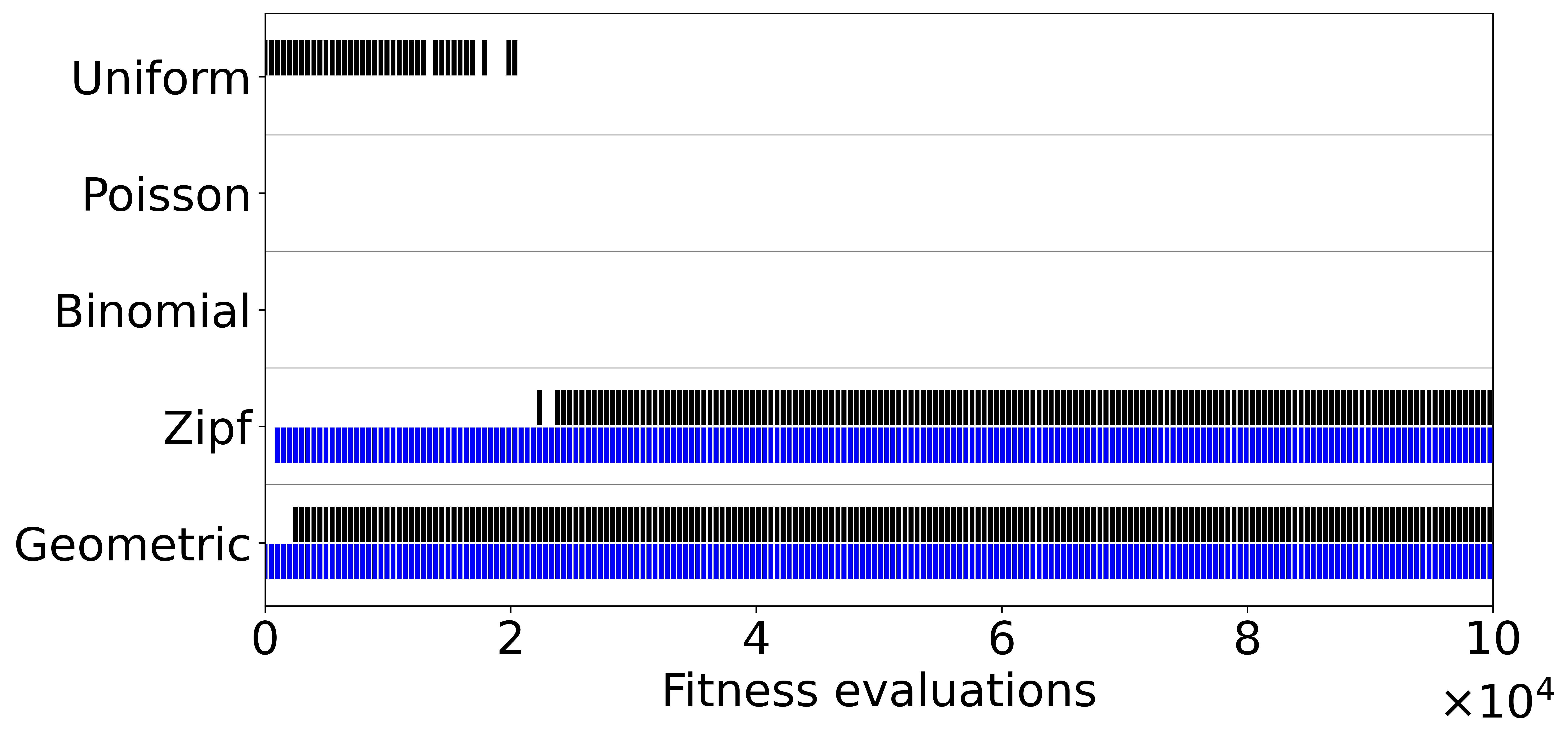} \\
	       (c) TSP & (d) QAP \\	
		\end{tabular}
	\end{center}
	\vspace{-10pt}
	\caption{Goodness-of-fit of the distributions with respect to the number of good neighbours of solutions in the archive during the search process of \emph{s}-PLS (black) and \emph{r}-PLS (blue) on the (a) Knapsack (100D), (b) NK-Landscape (N=100, K=10), (c) TSP (100D) and (d) QAP (100D). A coloured tick in a row indicates that the corresponding algorithm’s data at that point was not rejected under the model. }
	\caption{Goodness-of-fit of the distributions with respect to the number of good neighbours of solutions in the archive during the search process of \emph{s}-PLS (black) and \emph{r}-PLS (blue) on the (a) Knapsack (100D), (b) NK-Landscape (N=100, K=10), (c) TSP (100) and (d) QAP (100 factories). A coloured tick in a row indicates that the corresponding algorithm’s data at that point was not rejected under the model. }
	\label{Fig:gof}
\end{figure*}

\subsection{Distribution of Solutions' Neighbours in \emph{s}-PLS and \emph{r}-PLS}\label{apx:dist}
This section describes the details on investigating the distribution of the number of good neighbours for \emph{s}-PLS and \emph{r}-PLS.
Specifically, we count the number of good neighbours among solutions in the archive during the search process of \emph{s}-PLS and \emph{r}-PLS on the four MOCOPs, and test if this number follows a known discrete probabilistic model.
The distribution models~\cite{johnson2005} considered in this paper are the most commonly seen ones, characterised by their different probabilistic tail type: \emph{uniform} (a balance distribution), \emph{Zipf} (heavy-tailed with polynomial decay), \emph{geometric} (light-tailed with exponential decay), \emph{Poisson} (light-tailed with super-exponential decay), and \emph{binomial} (light-tailed with bounded support and a hard cut-off) distributions. 
%Additionally, we include the categorical distribution as a fallback model, since it can fit any empirical data regardless of underlying structure \miqing{this needs to be removed!}.

Following the common practice, we evaluate the absolute fit quality using the $\chi ^2$ goodness-of-fit test at a significance level of $\alpha=5\%$. %\miqing{people may wonder if this is a common practice?}
A fit is considered acceptable if the test does not reject the null hypothesis that the observed data come from the tested distribution. Parameters of the distributions were estimated via maximum likelihood.
For the Poisson distribution, the rate parameter $\lambda$ was set to the sample mean.
Similarly, for the geometric and binomial distributions, the success probability $p$ was estimated from the sample mean as well.
For the Zipf distribution, the exponent parameter $s$ was estimated by numerically minimising the negative log-likelihood over the interval $[1.01, 10]$.

\begin{figure*}[!h]
	\vspace{-5pt}
	\renewcommand{\arraystretch}{0.1} 
	\fontsize{8.5pt}{10pt}\selectfont
	\begin{center}
    \includegraphics[scale=0.4]{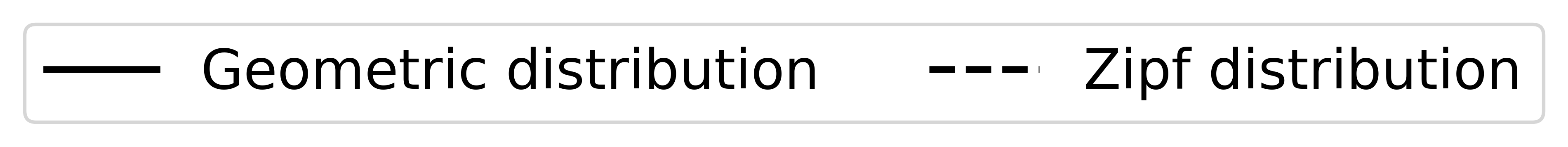}
    \vspace{-5pt}
        \begin{tabular}{@{}c@{}@{}c@{}@{}c@{}@{}c@{}}
			\includegraphics[scale=0.35]{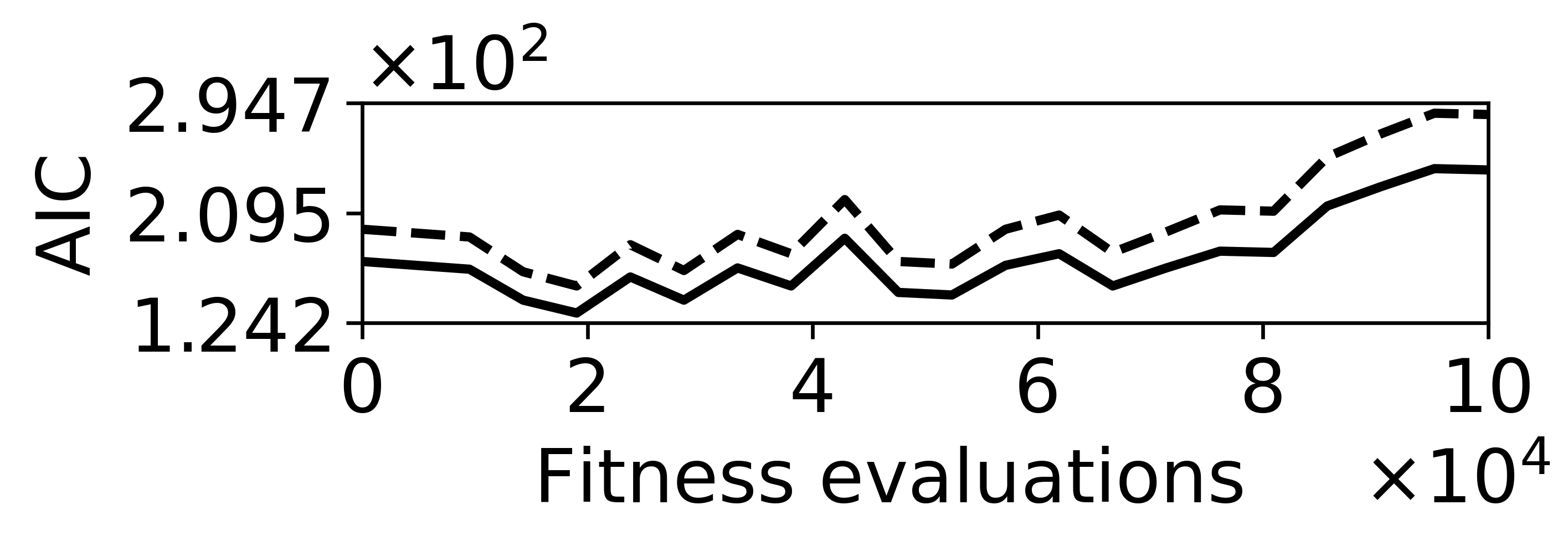} & 
			\includegraphics[scale=0.35] {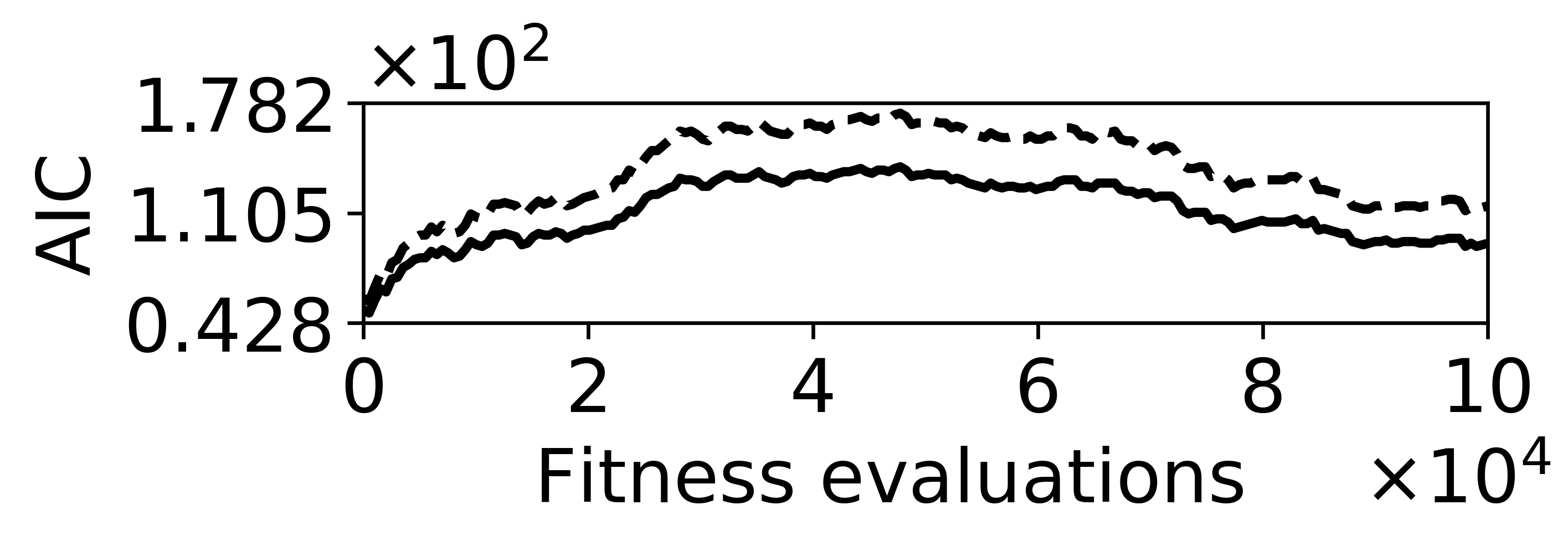}&
			\includegraphics[scale=0.35]{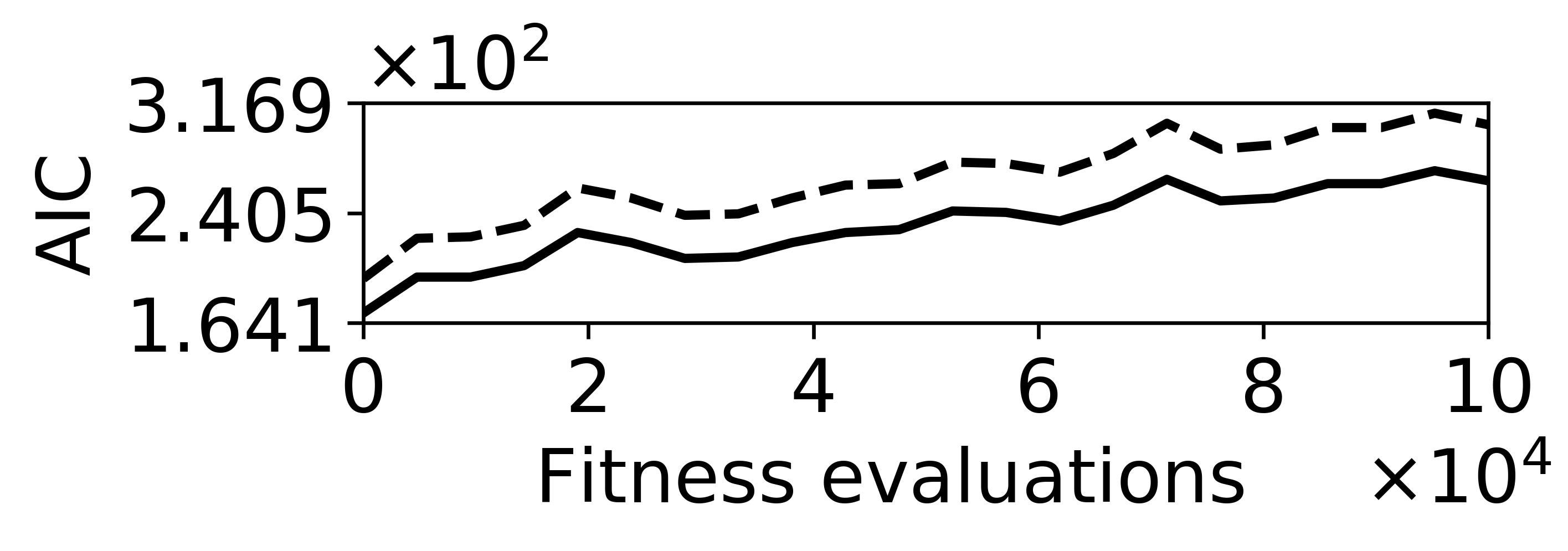}&
			\includegraphics[scale=0.35] {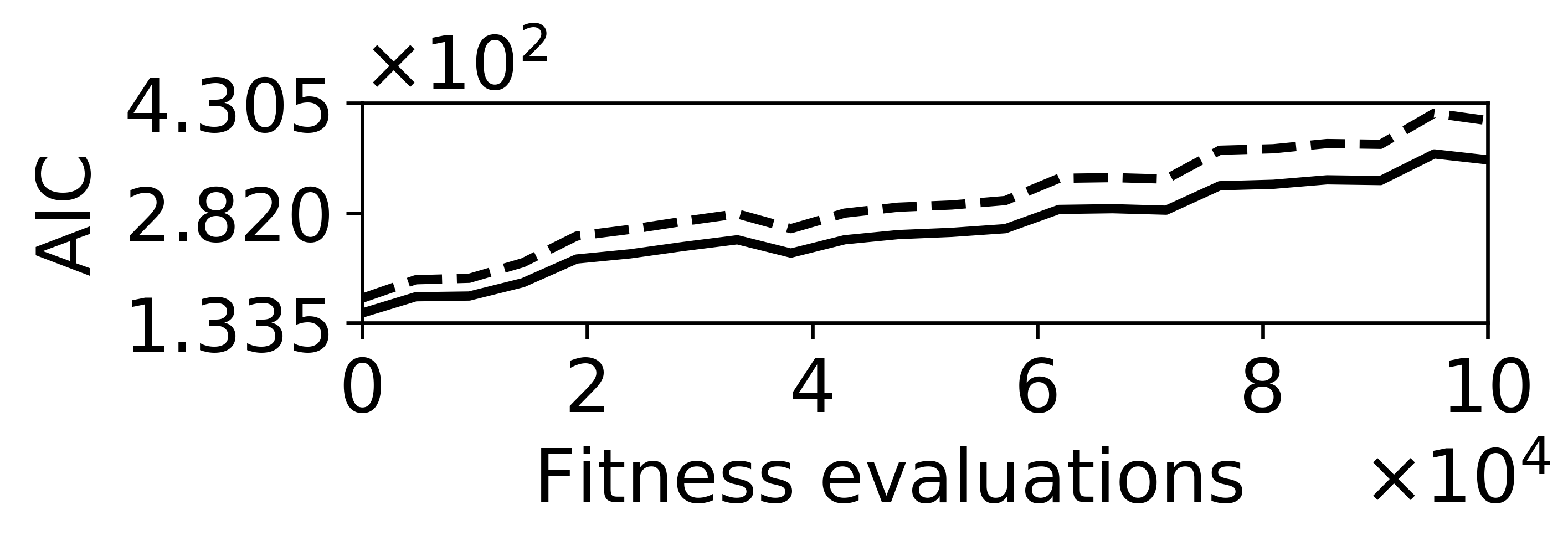}  \\
	(a) \emph{s}-PLS on Knapsack &
        (b) \emph{s}-PLS on NK-Landscape &
        (c) \emph{s}-PLS on TSP  &
        (d) \emph{s}-PLS on QAP   \\	
			\includegraphics[scale=0.37]{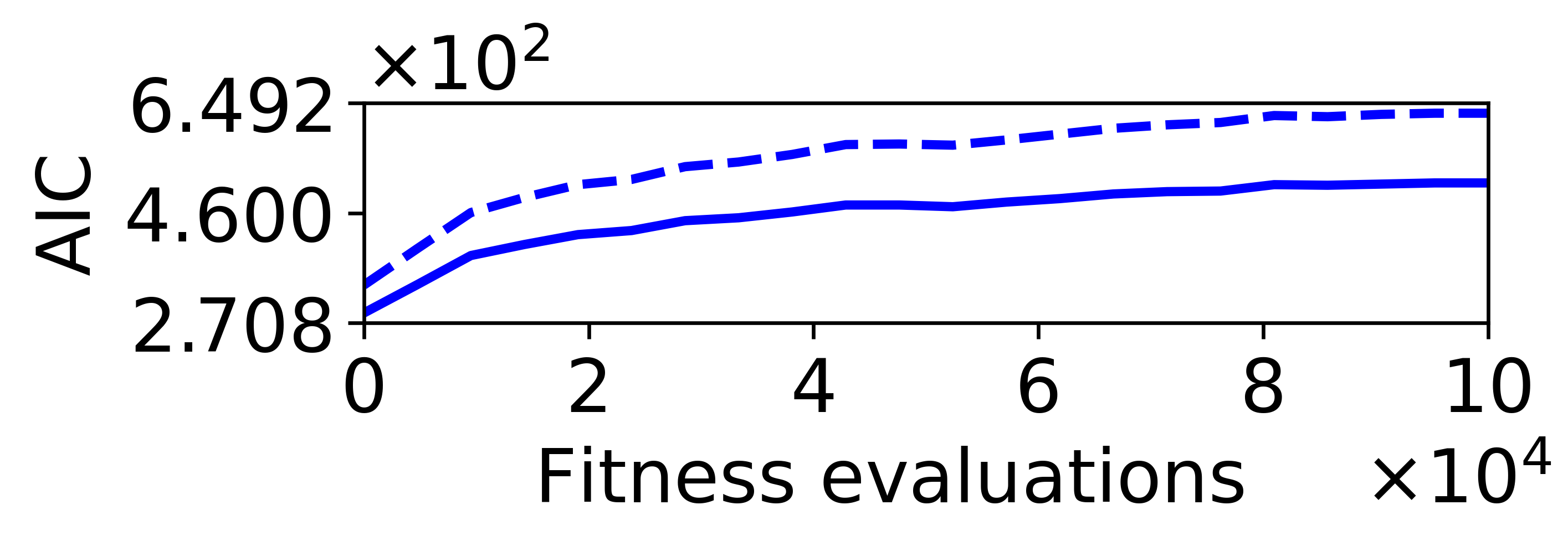} & 
			\includegraphics[scale=0.37] {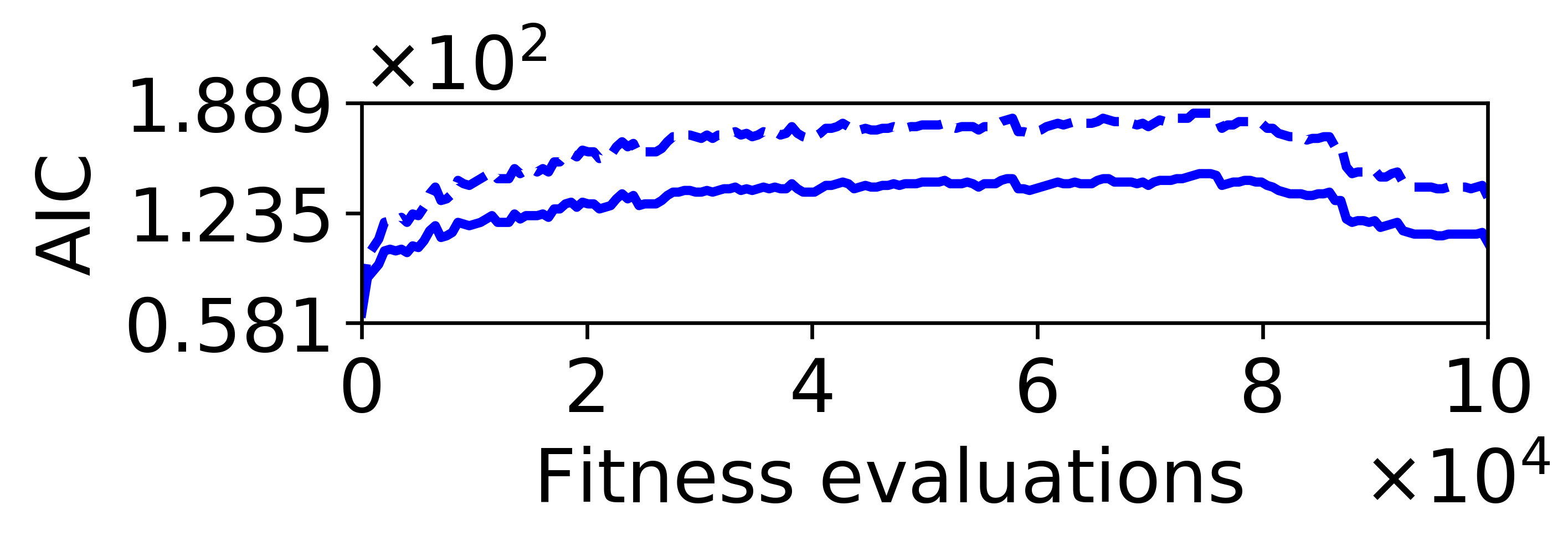} &
			\includegraphics[scale=0.37]{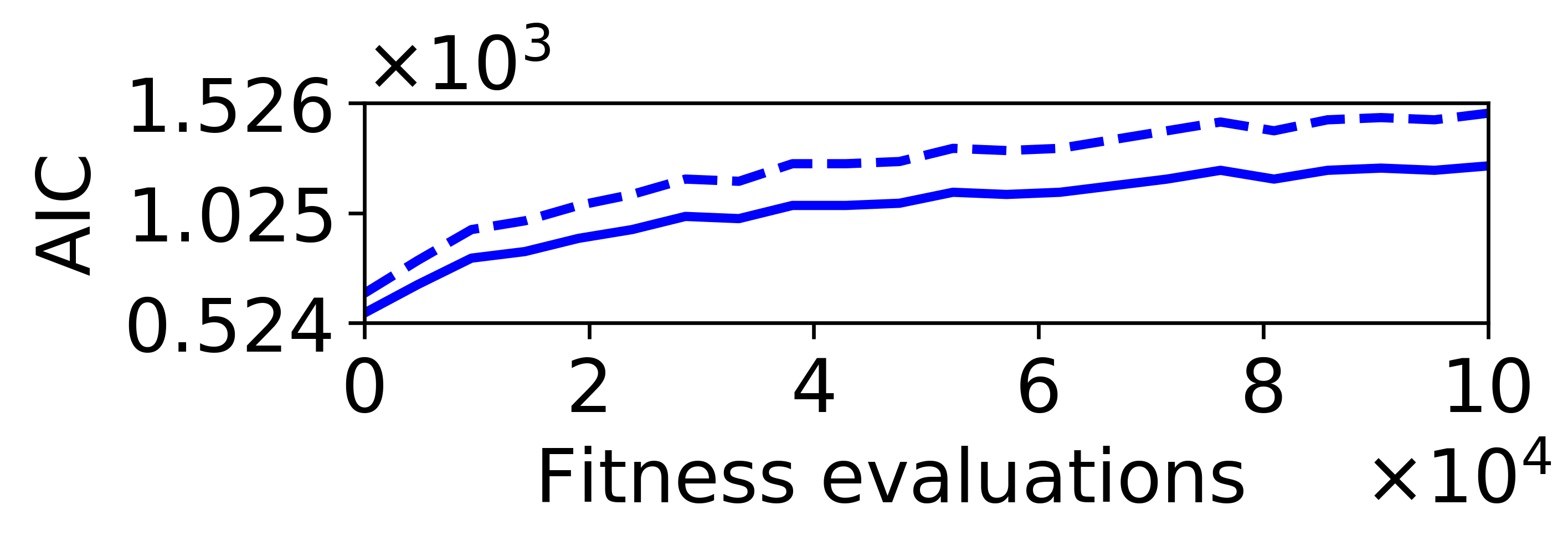}&
			\includegraphics[scale=0.37] {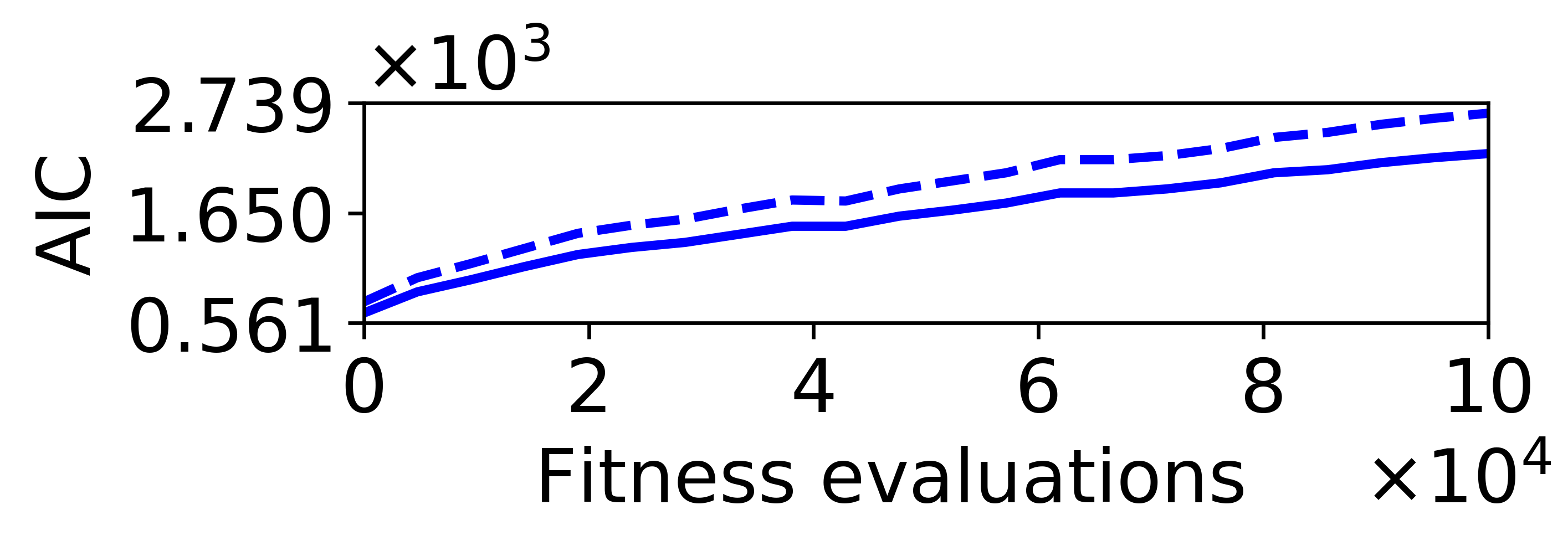} \\
	(e) \emph{r}-PLS on Knapsack &
        (f) \emph{r}-PLS on NK-Landscape &
        (g) \emph{r}-PLS on TSP  &
        (h) \emph{r}-PLS on QAP   \\	
		\end{tabular}
	\end{center}
	\vspace{-3pt}
	\caption{The Akaike’s Information Criterion (AIC)~\cite{akaike1974aic} trajectories (lower is better) of the geometric distribution (solid line) and the Zipf distribution (dashed line) throughout the search processes of \emph{s}-PLS (top panel) and \emph{r}-PLS (bottom panel) on the four MOCOPs with 100 variables. In all sub-figures, the solid lines lie below the dashed lines, indicating that the geometric distribution consistently provides a better fit than the Zipf distribution.
    }
	\label{Fig:aic}
\end{figure*}

Figure~\ref{Fig:gof} plots the goodness-of-fit of the five discrete distributions with respect to the numbers of good neighbours of solutions in the archive during the search process of \emph{s}-PLS and \emph{r}-PLS on the four MOCOPs. Each horizontal band corresponds to a candidate distribution, and at each sampled evaluation, two coloured ticks, black and blue, respectively indicate a good fit for \emph{s}-PLS and \emph{r}-PLS at $\alpha=0.05$, namely, the $\chi^2$ test does not reject the distribution.
As can be seen in the figure, across nearly all problem instances, the uniform distribution only fits for \emph{s}-PLS at the beginning of the search. 
In contrast, the geometric distribution passes the $\chi^2$ test almost across all the band of the problems for both \emph{s}-PLS and \emph{r}-PLS, indicating that the number of good neighbours fits well with a geometric distribution.
%The Poisson distribution only fits in the later stages of the search, often alongside the geometric distribution \miqing{not the case for several problems}.
%The binomial distribution only fits on the pseudo-Boolean problems (knapsack and NK-landscape), but it does not fit well on the permutation problems (TSP and QAP).
%The Zipf distribution achieves a comparable number of acceptable fits to the geometric distribution but does not do well at early stages of the search.

Note that the Zipf distribution has a comparable number of fits to the geometric distribution across all problems in Figure~\ref{Fig:gof}.
To compare the relative quality of fit between the two distributions, we use Akaike’s Information Criterion (AIC)~\cite{akaike1974aic}, which balances goodness of fit with model complexity by penalising models with more free parameters.
For each distribution, the degrees of freedom for each model are calculated as $n_{\text{bin}} - 1 - k$ where $n_{\text{bin}}$ is the number of solutions that have at least one good neighbour and $k$ is the number of free parameters of the distribution.
A lower AIC indicates that the model fits better to the data.
Figure~\ref{Fig:aic} plots the AIC trajectory of the geometric and Zipf distributions throughout the search processes of \emph{s}-PLS and \emph{r}-PLS on the four problems.
The solid and dashed lines correspond to the geometric and Zipf distributions, respectively.
As shown, the solid line lies below the dashed line in all cases, indicating that the geometric distribution consistently provides a better fit than the Zipf distribution throughout the search process in both \emph{s}-PLS and \emph{r}-PLS.
%Categorical distribution serves as a last resort, indicating that there is no pattern within the data.

\subsection{Proofs of the Two Examples}\label{apx:example_proof}
In the main paper, we present two toy examples of solution's distributions in the archive to help illustrate why \emph{r}-PLS can be faster than \emph{s}-PLS. In Example 1, the archive contains two promising solutions (each having only good neighbours) and two unpromising solutions (each having no good neighbours). In Example 2, all the four solutions in the archive are half-promising (each having an equal mix of good and poor neighbours).
Here, we are interested in how quickly (in terms of the number of evaluations) an algorithm can find a good neighbour.
In this section, we provide the proofs showing that \emph{r}-PLS is faster in Example 1, whereas \emph{s}-PLS is faster in Example 2.

\begin{proposition}[\emph{r}-PLS is faster than \emph{s}-PLS in Example 1]\label{thn:halffull}
Let an archive of $n$ solutions contain exactly $n/2$ solutions with no good neighbours ($G=0$), and the other $n/2$ solutions with all neighbours being good ($G=|\mathcal N|$). Then, for any neighbourhood size $|\mathcal N|\geq 2$, \emph{r}-PLS requires less time (fewer number of evaluations) than \emph{s}-PLS in finding the next new good solution.
	
\end{proposition}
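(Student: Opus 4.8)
The plan is to compute the expected number of evaluations for each algorithm directly, since the archive configuration in Example~1 is fully specified. Both algorithms start by picking a solution uniformly at random from the $n$ solutions in the archive, so with probability $1/2$ a \emph{promising} solution ($G=|\mathcal N|$) is chosen and with probability $1/2$ an \emph{unpromising} one ($G=0$) is chosen.

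First I would handle \emph{r}-PLS. Selecting a promising solution means the single randomly sampled neighbour is good, succeeding in one evaluation; selecting an unpromising solution wastes one evaluation and the algorithm repeats. Since each round independently succeeds with probability $1/2$, the number of rounds is geometric with mean $2$, and each round costs exactly one evaluation, giving $\E[T_{r\text{-PLS}}]=2$. This part is routine.

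Next I would handle \emph{s}-PLS, which is the more delicate case. When \emph{s}-PLS picks a promising solution, its first neighbour is good, costing one evaluation. When it picks an unpromising solution, it exhausts the whole neighbourhood, costing $|\mathcal N|$ evaluations, and then marks that solution as explored before picking another \emph{unexplored} solution. The key structural point — and the main obstacle — is that \emph{s}-PLS does not resample with replacement: once an unpromising solution is explored it is removed from the pool, so the process is sampling without replacement from a pool that shrinks. I would condition on how many unpromising solutions are examined before the first promising one is hit. With $n/2$ promising and $n/2$ unpromising solutions examined in random order, Lemma~\ref{lem:position} (with $N=n$, $k=n/2$) gives that the expected number of solutions examined until the first promising one is $\frac{n+1}{n/2+1}$, hence the expected number of unpromising solutions examined beforehand is $\frac{n+1}{n/2+1}-1 = \frac{n/2}{n/2+1}$. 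Each such unpromising solution costs $|\mathcal N|$ evaluations and the terminating promising solution costs $1$, so
\begin{equation*}
\E[T_{s\text{-PLS}}] \;=\; |\mathcal N|\cdot\frac{n/2}{n/2+1} \;+\; 1.
\end{equation*}

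Finally I would compare the two expressions. We need $|\mathcal N|\cdot\frac{n/2}{n/2+1}+1 > 2$, i.e. $|\mathcal N|\cdot\frac{n/2}{n/2+1} > 1$, i.e. $|\mathcal N| > 1 + \frac{2}{n}$; since $n\geq 2$ this holds whenever $|\mathcal N|\geq 2$, establishing the claim. (This also matches the numeric figure quoted in the main text: for $|\mathcal N|=10$, $n=4$, the value is $10\cdot\frac{2}{3}+1 = 7.67$.) The only subtlety worth spelling out carefully is justifying that conditioning on the random order of examined solutions and applying Lemma~\ref{lem:position} is legitimate here — namely that \emph{s}-PLS's sequence of explored archive members, restricted to those visited up to and including the first promising hit, is indeed a uniformly random ordering of the $n$ solutions; everything after that is a short algebraic comparison.
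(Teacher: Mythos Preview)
Your proposal is correct and follows essentially the same route as the paper: both compute $\E[T_{r\text{-PLS}}]=2$ by the geometric argument, both invoke Lemma~\ref{lem:position} with $N=n$, $k=n/2$ to get the expected index of the first promising archive member, and both arrive at $\E[T_{s\text{-PLS}}]=\frac{n|\mathcal N|}{n+2}+1$ (your $\frac{n/2}{n/2+1}$ is just the paper's $\frac{n}{n+2}$ rewritten). Your final inequality check is slightly more explicit than the paper's, and your remark about justifying the uniform-ordering hypothesis for Lemma~\ref{lem:position} is a fair point the paper leaves implicit; note also the borderline case $n=2$, $|\mathcal N|=2$ gives equality rather than strict inequality, a wrinkle neither you nor the paper addresses.
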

\begin{proof}
For \emph{r}-PLS, the algorithm selects a solution uniformly at random from the archive. With probability $\frac{1}{2}$, it selects a solution whose entire neighbourhood consists of good solutions. Since neighbours are also sampled uniformly, any selected neighbour will be good. Therefore, the success probability is:
\begin{align}
p_{\text{succ}}=\tfrac12,\quad \text{and} \quad \E[T_{r\text{-PLS}}]=\frac1{p_{\text{succ}}}=\frac1{1/2}=2.
\end{align}
	
In contrast, \emph{s}-PLS scans the archive sequentially as it marks each visited solution as ``explored''.
For each unpromising solution (one without good neighbours), it evaluates all $|\mathcal{N}|$ neighbours before moving on.
This continues until a promising solution is selected from the archive.
This process is equivalent to the one described in Lemma~1 (i.e., finding the first good neighbour in the neighbourhood in a sequential scan) of the main paper.
Accordingly, the expected number of evaluations required to find first promising solution is:
\[
\E[J]=\frac{n+1}{(n/2)+1}=\frac{2(n+1)}{n+2}.
\]
	
The total number of evaluations includes $|\mathcal{N}|$ evaluations for each of the $\mathbb{E}[J] - 1$ preceding unpromising solutions, and just 1 evaluation for the first promising one (since all of its neighbours are good). Therefore:
\begin{align}\label{Eq:example1Tspls}
\begin{split}
	\E[T_{s\text{-PLS}}]
		&= \bigl(\E[J]-1\bigr) |\N| + 1 \\
		&= \left(\frac{2(n+1)}{n+2}-1\right)|\N|+1 
		= \frac{n|\N|}{n+2}+1.
	\end{split}
	\end{align}
	Since $|\N|\ge2$, it follows that $\E[T_{s\text{-PLS}}]>\E[T_{r\text{-PLS}}]$.
\end{proof}

The above proposition shows that the \emph{r}-PLS is faster than \emph{s}-PLS in finding the first good solution in Example~1. The key reason is that \emph{s}-PLS may waste a substantial number of evaluations when it selects an unpromising solution, as it evaluates all of its neighbours. In contrast, \emph{r}-PLS avoids this by sampling only one neighbour at a time. In Example~2, however, the archive contains no unpromising solutions; instead, all solutions are half-promising -- each has an equal mix of good and poor neighbours. In what follows, we show that in Example~2, \emph{s}-PLS is faster than \emph{r}-PLS.

\begin{proposition}[\emph{s}-PLS is faster than \emph{r}-PLS in Example 2]\label{thn:halffull}
Let the archive consist of $n$ solutions, and suppose that for every solution, exactly half of its neighbours are good; that is, $G = |\mathcal{N}| / 2$. Then, for any neighbourhood size $|\mathcal N|\geq 2$, \emph{s}-PLS requires less time (fewer number of evaluations) than \emph{r}-PLS in finding the next new good solution.
	
\end{proposition}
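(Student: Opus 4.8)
The plan is to compute $\E[T_{r\text{-PLS}}]$ and $\E[T_{s\text{-PLS}}]$ explicitly and then compare the two closed forms; no hard estimate is needed once Lemma~\ref{lem:position} is available.

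\emph{Step 1 (\emph{r}-PLS).} Every solution in the archive has exactly $|\mathcal{N}|/2$ good neighbours, so whichever solution is selected and whichever neighbour is sampled uniformly at random, that neighbour is good with probability exactly $1/2$. Since a failed iteration leaves the archive unchanged, the trials are i.i.d.\ and the number of evaluations until the first success is geometric with parameter $1/2$; hence $\E[T_{r\text{-PLS}}]=2$, exactly as in Example~1.

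\emph{Step 2 (\emph{s}-PLS).} The feature that separates this case from Example~1 is that \emph{no} archive solution is unpromising: since $|\mathcal{N}|\ge 2$ we have $|\mathcal{N}|/2\ge 1$, so every solution has at least one good neighbour. Consequently \emph{s}-PLS never scans an entire neighbourhood in vain — the first archive solution it selects already yields a good neighbour during its partial scan, and the total cost equals the number of neighbours of that one solution visited before the first good one appears. Applying Lemma~\ref{lem:position} with $N=|\mathcal{N}|$ and $k=|\mathcal{N}|/2$ good items gives
\[
\E[T_{s\text{-PLS}}]=\frac{|\mathcal{N}|+1}{(|\mathcal{N}|/2)+1}=\frac{2(|\mathcal{N}|+1)}{|\mathcal{N}|+2}.
\]
\emph{Step 3 (comparison).} Finally, $\dfrac{2(|\mathcal{N}|+1)}{|\mathcal{N}|+2}<2$ is equivalent to $|\mathcal{N}|+1<|\mathcal{N}|+2$, which holds for every $|\mathcal{N}|\ge 2$; therefore $\E[T_{s\text{-PLS}}]<\E[T_{r\text{-PLS}}]$, as claimed.

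The main obstacle here is conceptual rather than computational: one must notice that, unlike Example~1, \emph{s}-PLS incurs no wasted full-neighbourhood scans, so its runtime is captured by a single invocation of Lemma~\ref{lem:position} rather than by a two-level argument over which archive solution is eventually reached. A minor technical caveat is that the statement implicitly assumes $|\mathcal{N}|$ is even so that $|\mathcal{N}|/2$ is an integer; if one instead posits $k=\lfloor|\mathcal{N}|/2\rfloor$ good neighbours per solution, the identical argument yields $\E[T_{s\text{-PLS}}]=\frac{|\mathcal{N}|+1}{k+1}$, which remains strictly below $2$ for all $|\mathcal{N}|\ge 2$ under the even-$|\mathcal{N}|$ convention used throughout the paper.
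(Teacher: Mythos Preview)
Your proof is correct and follows essentially the same approach as the paper: compute $\E[T_{r\text{-PLS}}]=2$ from the uniform $1/2$ success probability, observe that every archive solution is promising so \emph{s}-PLS succeeds within the first neighbourhood scan, and apply Lemma~\ref{lem:position} with $k=|\mathcal{N}|/2$ to obtain $\E[T_{s\text{-PLS}}]=\frac{|\mathcal{N}|+1}{(|\mathcal{N}|/2)+1}<2$. Your added simplification to $\frac{2(|\mathcal{N}|+1)}{|\mathcal{N}|+2}$ and the explicit inequality check are minor elaborations on the paper's version, not a different route.
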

\begin{proof}
In \emph{r}-PLS, a solution is selected uniformly at random from the archive, and a neighbour is drawn uniformly from its neighbourhood. Since each solution has exactly half good neighbours, the probability of success (i.e., selecting a good neighbour) is:
\begin{align}
p_{\text{succ}}=\tfrac12,\quad \text{and} \quad  \E[T_{r\text{-PLS}}]=\frac1{p_{\text{succ}}}=\frac1{1/2}=2.
\end{align}
	
As for \emph{s}-PLS, the algorithm explores the neighbourhood of one solution at a time, scanning its neighbours sequentially until a good one is found.
Since all solutions in the archive have identical neighbourhood structure, it suffices to consider the expected number of evaluations spent within a single neighbourhood.
In this case, \emph{s}-PLS only needs to scan one neighbourhood as all neighbourhoods have $|\N|/2$ good neighbours.
By Lemma~1 in the main paper, when half of the neighbours are good, the expected number of evaluations required to find a good neighbour within one neighbourhood is:
\[
\E[T_{s\text{-PLS}}]=\E[J]=\frac{|\N|+1}{(|\N|/2)+1}<2=\E[T_{r\text{-PLS}}].
\]
\end{proof}

\subsection{Proof for \emph{r}-PLS with Best-from-Multiple Selections}\label{apx:bms}
In the main paper, we state that the use of Best-from-Multiple-Selections (BMS)~\cite{Cai2015balance} in \emph{r}-PLS requires $\E[T_{r-\text{PLS}}]\cdot(1+O(k/|\N|))$ evaluations to find the next good neighbour. In this section, we provide the proof of that.

Consider a variant of \emph{r}-PLS equipped with Best-from-Multiple-Selections (BMS): in each iteration, a solution is drawn uniformly at random from the archive, and up to $k$ neighbours ($1\leq k\leq|\N|$) of this solution are sampled uniformly without replacement and evaluated one-by-one, stopping as soon as a good neighbour is found or after $k$ evaluations if no good neighbour is encountered. Let $T_{\text{BMS}}$ denote the number of evaluations required to find the next good neighbour.

\begin{proposition}[Expected time of \emph{r}-PLS with Best-from-Multiple-Selections]
\label{prop:bms-rpls}
Let $|\mathcal{N}|\in\mathbb{Z}^+$ be the neighbourhood size and let $G$ be the number of good neighbours among the solutions in the archive, where $G$ follows a geometric distribution with parameter $p\in(0,1]$. Let $T_{\mathrm{BMS}}$ be the number of evaluations until the next good neighbour is found by the BMS variant of \emph{r}-PLS. Then 
\[
\mathbb{E}[T_{\mathrm{BMS}}] > \mathbb{E}[T_{r\text{-PLS}}]
\]\noindent
where $\mathbb{E}[T_{r\text{-PLS}}]=p|\mathcal{N}|/(1-p)$.
\end{proposition}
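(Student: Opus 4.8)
The plan is to compute $\E[T_{\mathrm{BMS}}]$ explicitly by conditioning on the value of $G$, the number of good neighbours of the selected solution. Within one iteration, BMS samples $k$ neighbours without replacement from a neighbourhood of size $|\N|$ containing $G$ good ones; let $S_G$ denote the number of evaluations in that iteration (which is $\min\{J, k\}$ where $J$ is the position of the first good neighbour in a random scan) and let $q_G = \Prob(\text{at least one good neighbour found in } k \text{ draws})$. Then the iteration succeeds with probability $q_G$, and by a standard renewal/geometric-trial argument the expected total number of evaluations is $\E[T_{\mathrm{BMS}}] = \E[S_G] / \E[q_G]$, where the outer expectations are over $G \sim \mathrm{Geom}(p)$. (Here I would be slightly careful: iterations are i.i.d.\ because each draws a fresh solution from the archive, so the ratio formula $\E[\text{cost per iteration}]/\Prob[\text{iteration succeeds}]$ applies cleanly, exactly as in the $k=1$ case treated in Theorem~\ref{thn:geometric}.)

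Next I would evaluate the two ingredients. For the success probability of one iteration: $\Prob(\text{no good neighbour in } k \text{ draws} \mid G) = \binom{|\N|-G}{k}/\binom{|\N|}{k}$, so $q_G = 1 - \binom{|\N|-G}{k}/\binom{|\N|}{k}$; averaging over $G$ gives $\E[q_G]$. For the expected cost per iteration, $\E[S_G]$: when $G \ge 1$, by a without-replacement first-success computation (a truncated version of Lemma~\ref{lem:position}) one gets $\E[S_G] = \frac{|\N|+1}{G+1}\bigl(1 - \binom{|\N|-G}{k}/\binom{|\N|}{k}\bigr) + k\binom{|\N|-G}{k}/\binom{|\N|}{k}$ or a similarly clean closed form; when $G=0$ the cost is exactly $k$. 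The cleanest route, though, is to avoid the full closed form and instead bound: since $S_G \le k$ always and $S_G \ge 1$, and since $q_G \le k\,G/|\N|$ (union bound over the $k$ draws) while also $q_G \ge$ a matching lower bound of the form $\frac{G}{|\N|}\cdot k\cdot(1 - O(k/|\N|))$ when $G \ge 1$, one obtains $\E[T_{\mathrm{BMS}}] \ge \frac{\E[S_G \mathbf{1}\{G\ge1\}]}{\E[q_G]}$ and after simplification $\E[T_{\mathrm{BMS}}] = \frac{|\N|}{\E[G]}\bigl(1 + O(k/|\N|)\bigr) = \E[T_{r\text{-PLS}}]\bigl(1 + O(k/|\N|)\bigr)$, matching the claim in the main text.

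To get the strict inequality $\E[T_{\mathrm{BMS}}] > \E[T_{r\text{-PLS}}]$ rather than just the asymptotic statement, I would argue directly that BMS is \emph{wasteful}: conditioned on a good neighbour being found within the $k$ draws of an iteration, the expected number of \emph{useless} (dominated) evaluations preceding it is strictly positive whenever $k \ge 2$, whereas in the $k=1$ case every successful evaluation is immediately good. More precisely, couple the two processes on the same stream of randomly selected solutions and neighbour orderings: $r$-PLS stops at the first good neighbour it would ever see for a given solution, while BMS, by committing to evaluate up to $k$ of that solution's neighbours before re-sampling, on the event $G=0$ spends $k$ evaluations where $r$-PLS spends $1$; a short calculation shows this extra cost on the (positive-probability, since $p>0$) event $G=0$ is not compensated by the faster success on $G\ge1$ events, because the per-draw success rate $G/|\N|$ is identical and sampling without replacement only marginally raises it. Formally, $\E[S_G]/\E[q_G] > 1/(\E[G]/|\N|) = \E[T_{r\text{-PLS}}]/(\,\cdot\,)$ follows from $\E[S_G] > \E[q_G]\cdot\frac{|\N|}{\E[G]}$, which reduces to showing $\E[S_G] - \frac{|\N|}{\E[G]}\E[q_G] > 0$; expanding both sides and using $\Prob(G=0)=p>0$ isolates a strictly positive term.

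The main obstacle I anticipate is handling the without-replacement combinatorics cleanly: the quantities $\binom{|\N|-G}{k}/\binom{|\N|}{k}$ must be summed against the geometric weights $p(1-p)^G$, and the range of $G$ is truncated at $|\N|$ (since $G \le |\N|$), so one cannot blithely sum to infinity — the geometric distribution here is really the geometric law conditioned on $G \le |\N|$, or one treats $G > |\N|$ as contributing $G = |\N|$. I would sidestep most of this by working with the union-bound inequality $1 - \binom{|\N|-G}{k}/\binom{|\N|}{k} \le kG/|\N|$ for the numerator-of-waste direction and the complementary inequality $\binom{|\N|-G}{k}/\binom{|\N|}{k} \ge (1 - G/(|\N|-k+1))^k \ge 1 - kG/(|\N|-k+1)$ for the other direction, which together pin $\E[q_G]$ to within a $1+O(k/|\N|)$ factor of $k\E[G]/|\N|$ without ever evaluating the sums exactly; the strict inequality then comes from the $G=0$ term as described. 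The only genuinely delicate point is making sure the error term is uniform enough that the $O(k/|\N|)$ bound is meaningful for all $k \le |\N|$ (it degrades as $k \to |\N|$), which I would address by noting that the claim is only asserted in the regime where $k/|\N|$ is small, consistent with BMS being used with modest $k$.
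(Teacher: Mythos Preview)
Your renewal-formula starting point $\E[T_{\mathrm{BMS}}]=\E[S_G]/\E[q_G]$ is correct and is in fact a cleaner foundation than what the paper uses. The paper takes a different route: it does not work with the per-iteration cost/success ratio at all, but instead bounds the \emph{per-evaluation} conditional success probability $q_t$ directly. Splitting off the $G=0$ iterations (which each waste $k$ evaluations), it argues that within $G>0$ iterations the success rate at any evaluation lies in $[\,\E[G]/|\N|,\ \E[G]/(|\N|-k+1)\,]$, and from these two-sided bounds obtains the explicit sandwich
\[
1+\tfrac{1}{|\N|}\ \le\ \frac{\E[T_{\mathrm{BMS}}]}{\E[T_{r\text{-PLS}}]}\ \le\ 1+\tfrac{k}{|\N|}.
\]
The strict inequality then drops out of the lower bound $1+1/|\N|$. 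The key quantitative point the paper exploits is that the without-replacement boost (at most a factor $|\N|/(|\N|-k+1)$) saves you at most $(k-1)/|\N|$ in relative terms, while the $G=0$ waste costs you $k/|\N|$; the net is always $+1/|\N|$, independent of $k$.

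Your asymptotic argument for the $1+O(k/|\N|)$ claim via union bounds on $q_G$ is fine. The gap is in your strict-inequality step. The sentence ``expanding both sides and using $\Prob(G=0)=p>0$ isolates a strictly positive term'' does not go through as stated: writing $\E[S_G]\E[G]-|\N|\E[q_G]$ term-by-term in $g$, the $g=0$ contribution is indeed $pk\,\E[G]>0$, but the $g\ge 1$ contributions $\Prob(G=g)\bigl(\E[G]\,\E[S_g]-|\N|q_g\bigr)$ can be negative (e.g.\ $g=1$, $|\N|=2$, $k=2$ gives $\E[S_1]=3/2$, $q_1=1$, so the bracket is $3\E[G]/2-2$, negative when $\E[G]<4/3$). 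So the positivity is not a matter of isolating one term; it genuinely relies on a cancellation that uses the specific variance of the geometric law (for $k=2$ the difference works out to be proportional to $\mathrm{Var}(G)-\E[G](1-\E[G]/|\N|)$, which is positive for the geometric but not for arbitrary distributions). To close this gap without such a computation, you would need to do what the paper does: quantify \emph{both} the without-replacement gain and the $G=0$ loss, and show the latter always dominates. Your coupling sketch gestures at this but does not supply the needed inequality.
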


\begin{proof}
Recalling \emph{r}-PLS from the main theorem, conditional on $G$, the success probability is given by
\[
  q := \mathbb{E}\!\left[\frac{G}{|\mathcal{N}|}\right]
     = \frac{\mathbb{E}[G]}{|\mathcal{N}|}
     = \frac{1-p}{p|\mathcal{N}|},
\]
\noindent
and $\mathbb{E}[T_{r\text{-PLS}}]=1/q=p|\mathcal{N}|/(1-p)$.

We next analyse the BMS variant, which samples $k>1$ neighbours of a solution at a time. Regarding the selected solution, there are typically two cases: $G=0$ (no good neighbour) and $G>0$ (at least one good neighbour). 

We start from considering the $G>0$ case. At the beginning of each BMS iteration, a solution with $G=g$ good neighbours is chosen. Within that iteration, suppose $h$ neighbours have already been examined and found poor ($0\le h\le k-1$). The next evaluation chooses uniformly one neighbour among the remaining $|\mathcal{N}|-h$ neighbours, which still contain the same $g$ good ones (because the iteration would have stopped earlier otherwise). Thus, conditional on $G=g$,
\[
  \Pr(\text{success at this evaluation}\mid G=g,\ h)
  = \frac{g}{|\mathcal{N}| - h}.
\]
Since $0\le h\le k-1\le|\mathcal{N}|-1$, we have
\[
  |\mathcal{N}| - (k-1) \,\le\, |\mathcal{N}| - h \,\le\, |\mathcal{N}|,
\]
and therefore
\[
  \frac{g}{|\mathcal{N}|}
  \,\le\,
  \frac{g}{|\mathcal{N}| - h}
  \,\le\,
  \frac{g}{|\mathcal{N}| - (k-1)}.
\]

With $0\le h\le k-1$, we have the bounds on the success probability $q_t$ of the $t$-th evaluation in the BMS process (given that we have not succeeded earlier):
\[
  \mathbb{E}[G]/|\mathcal{N}|
  \,\le\,
  q_t
  \,\le\,
  \frac{\mathbb{E}[G]}{|\mathcal{N}| - (k-1)}\quad\forall\,t\ge1.
\]
Plugging in the success probability of \emph{r}-PLS, we have
\[
  q \,\le\, q_t \,\le\, \frac{q}{1 - \tfrac{k-1}{|\mathcal{N}|}}\quad\forall\,t\ge1.
\label{eq:qt-bounds}
\]

We now bound the stopping time $T_{\mathrm{BMS}}$ from the bounds on $q_t$. Let $q_{\min}=q$ and $q_{\max}=q/(1-\tfrac{k-1}{|\mathcal{N}|})$. For the BMS process, write $q_t$ for the success probability of the $t$-th evaluation (conditioned on no success earlier), we have
\[
\begin{split}
  \frac{1}{q_{\max}}
  &\,\le\,
  \frac{1}{q_t}
  \,\le\,
  \frac{1}{q_{\min}}\quad \forall\,t\ge1.
\end{split}
\]

Substituting the equations with the expected number of evaluations until a good neighbour is found
\(
  \mathbb{E}[T_{\mathrm{bms}}]
\), 
\[
  \frac{1}{q_{\max}}
  \,\le\,
  \mathbb{E}[T_{\mathrm{bms}}]
  \,\le\,
  \frac{1}{q_{\min}}.
\]
Plugging $q_{\min}$ and $q_{\max}$ back,
we obtain
\[
  \Bigl(1-\tfrac{k-1}{|\mathcal{N}|}\Bigr)\mathbb{E}[T_{r\text{-PLS}}]
  \,\le\,
  \mathbb{E}[T_{\mathrm{bms}}]
  \,\le\,
  \mathbb{E}[T_{r\text{-PLS}}].
\]
Dividing by $\mathbb{E}[T_{r\text{-PLS}}]$ yields
\[
  1-\tfrac{k-1}{|\mathcal{N}|}
  \,\le\,
  \frac{\mathbb{E}[T_{\mathrm{bms}}]}{\mathbb{E}[T_{r\text{-PLS}}]}
  \,\le\,
  1,
\]

Then, we add the case of no good neighbours ($G=0$) that wastes $k$ evaluations at the rate $p$ out of the $\frac{1}{1-p}$ solutions explored, which yields \[
\mathbb{E}[T_{\mathrm{BMS}}]
  = \frac{kp}{1-p} + \mathbb{E}[T_{\mathrm{bms}}].
\] Plugging in the above inequalities, we have
\[
  \frac{kp}{1-p} + (1-\tfrac{k-1}{|\mathcal{N}|})\mathbb{E}[T_{r\text{-PLS}}] \leq \mathbb{E}[T_{\mathrm{BMS}}] \leq \frac{kp}{1-p} + \mathbb{E}[T_{r\text{-PLS}}] 
\]
\[
\begin{split}
  \mathbb{E}[T_{r\text{-PLS}}]+\frac{\mathbb{E}[T_{r\text{-PLS}}]}{|\N|} &\leq \mathbb{E}[T_{\mathrm{BMS}}]
  \leq \frac{k}{|\N|}\mathbb{E}[T_{r\text{-PLS}}]+\mathbb{E}[T_{r\text{-PLS}}] \\
  1+\frac{1}{|\N|}&\leq\frac{\mathbb{E}[T_{\mathrm{BMS}}]}{\mathbb{E}[T_{r\text{-PLS}}]}\leq 1+\frac{k}{|\N|}
\end{split}
\]

Since $1+\frac{1}{|\N|}\leq\frac{\mathbb{E}[T_{\mathrm{BMS}}]}{\mathbb{E}[T_{r\text{-PLS}}]}$, the BMS variant is slower than the \emph{r}-PLS.

\begin{comment}Therefore, \miqing{still I can only see $\frac{\mathbb{E}[T_{\mathrm{BMS}}]}{\mathbb{E}[T_{r\text{-PLS}}]}
\leq 1 + O(k/|\mathcal{N}|)$.}
\begin{align}
\begin{split}
\frac{\mathbb{E}[T_{\mathrm{BMS}}]}{\mathbb{E}[T_{r\text{-PLS}}]}
&= 1 + O(k/|\mathcal{N}|) \\
\mathbb{E}[T_{\mathrm{BMS}}]
&= \mathbb{E}[T_{r\text{-PLS}}]\cdot\Bigl(1 + O(k/|\mathcal{N}|)\Bigr).
\end{split}
\end{align}
\end{comment}
\end{proof}

It is worth noting that, since $\frac{\mathbb{E}[T_{\mathrm{BMS}}]}{\mathbb{E}[T_{r\text{-PLS}}]}\leq 1+\frac{k}{|\N|}$, the BMS variant is only slightly slower than the \emph{r}-PLS when $k$ is small.

\section*{}

\end{document}